\newcommand{\cfbox}[2]{%
	\colorlet{currentcolor}{.}%
	{\color{#1}%
		\fbox{\color{currentcolor}#2}}%
}
\newtheorem{theorem}{Theorem}[]
\newtheorem{lemma}[]{Lemma}
\newtheorem{corollary}[]{Corollary}
\newtheorem{remark}[]{Remark}
\newtheorem{Assumption}[]{Assumption}
\newtheorem{definition}[]{Definition}
\newcolumntype{C}{>{\centering\arraybackslash}X} 
\title{Structured and Unstructured Outlier Identification for Robust PCA: A Non iterative, Parameter free Algorithm.}
\begin{document}
\author{\IEEEauthorblockN{Vishnu Menon,
		Sheetal Kalyani\\}
	\IEEEauthorblockA{
		Department of Electrical Engineering, Indian Institute of Technology Madras\\
		Chennai, India - 600036\\
		Email: ee16s301@ee.iitm.ac.in,
		skalyani@ee.iitm.ac.in,
		}}
\maketitle
\begin{abstract}
	Robust PCA, the problem of PCA in the presence of outliers has been extensively investigated in the last few years. Here we focus on Robust PCA in the outlier model where each column of the data matrix is either an inlier or an outlier. Most of the existing methods for this model assumes either the knowledge of the dimension of the lower dimensional subspace or the fraction of outliers in the system. However in many applications knowledge of these parameters is not available. Motivated by this we propose a parameter free outlier identification method for robust PCA which a) does not require the knowledge of outlier fraction, b) does not require the knowledge of the dimension of the underlying subspace, c) is computationally simple and fast d) can handle structured and unstructured outliers. Further, analytical guarantees are derived for outlier identification and the performance of the algorithm is compared with the existing state of the art methods in both real and synthetic data for various outlier structures.
\end{abstract}
\section{Introduction}
Principal Component Analysis (PCA) \cite{jolliffe2002principal} is a very widely used technique in data analysis and dimensionality reduction. Singular Value Decomposition (SVD) of the data matrix $\mathbf{M}$ \cite{shlens2014tutorial} is known to be very sensitive to extreme corruptions in the data \cite{candes2011robust}, \cite{xu2010robust}, \cite{rahmani2016coherence} and hence robustifying the PCA process becomes a necessity. Robust PCA is typically an ill posed problem and it is of significant importance in a wide variety of fields like computer vision, machine learning, survey data analysis and so on. Recent survey papers \cite{vaswani2018static}, \cite{lerman2018overview} outline the various existing techniques for robust subspace recovery and robust PCA. Of the numerous approaches to robust PCA over the years \cite{ammann1993robust}, \cite{de2003framework}, one way to model extreme corruptions in the given data matrix $\mathbf{M}$, is using the following decomposition \cite{wright2009robust}, \cite{chandrasekaran2011rank}, \cite{zhou2010stable}, \cite{candes2011robust}: $\mathbf{M}  = \textbf{L}+\textbf{S}$, where $\textbf{S}$ encapsulates all the corruptions and is assumed to be sparse and $\textbf{L}$ is low rank. Thus robust PCA becomes a process of decomposing the given matrix into a low rank matrix plus a sparse matrix. The problem can be formulated as a convex
problem, using techniques of convex relaxation inspired from
compressed sensing \cite{candes2006compressive}, as \cite{wright2009robust}, \cite{candes2011robust}
\begin{equation}\label{ereformul}
\begin{aligned}
& \underset{\textbf{L}, \textbf{S}}{\text{minimize}}
& & \|\textbf{L}\|_*+\lambda \|\textbf{S}\|_1
& \text{s.t} && \mathbf{M} = \textbf{L}+\textbf{S}, 
\end{aligned}
\end{equation}
where $ \|\textbf{L}\|_*$ is the nuclear norm computed as the sum of singular values of a matrix and $\|\textbf{S}\|_1$ is the $l_1$ norm of vector formed by vectorizing the matrix. In \cite{candes2011robust}, an optimal value for $\lambda$ was proposed and theoretical guarantees for the exact recovery of the low rank matrix was given assuming the popular uniform sparsity model. To solve (\ref{ereformul}), several algorithms were proposed including  \cite{yi2016fast}, \cite{hsu2011robust}, \cite{chiang2016robust} with the aim of reducing the complexity of the process and improving speed and performance. Non-convex algorithms have also been proposed for robust PCA\cite{netrapalli2014non}, \cite{kang2015robust} which are significantly faster than convex programs. 

Another popular model, the one that we will adopt in this paper, is the outlier model\footnote{Throughout the paper, the term outlier model indicates the model where each column of $\mathbf{M}$ is either an inlier or an outlier}. In this model each column in $\mathbf{M}$ is considered as a data point in $\mathbb{R}^n$. The points that lie in a lower dimensional subspace of dimension $r$ are the inliers and others which do not fit in this subspace are the outliers. 
Several methods have been developed over the years, like methods based on influence functions \cite{jolliffe2002principal}, the re-weighted least squares method \cite{ma2015generalized}, methods based on random consensus (RANSAC) \cite{fischler1987random}, based on rotational invariant $l_1$ norms \cite{ding2006r} etc for the outlier model. In \cite{xu2010robust}, a convex formulation of the process is given and iterative methods have been proposed to solve it. Also the problem has been extended to identifying outliers when the inlying points come from a union of subspaces as in \cite{soltanolkotabi2012geometric,soltanolkotabi2014robust,you2017provable,heckel2015robust}. Recent works have attempted to develop simple non iterative algorithms for robust PCA with the outlier model  \cite{rahmani2016coherence}. Other methods which aims at solving robust PCA through this model include \cite{lerman2014fast}, \cite{zhang2014novel} and works based on thresholding like \cite{cherapanamjeri2017thresholding},\cite{heckel2015robust}. Most of the algorithms proposed are either iterative and complex and/or would require the knowledge of either the outlier fraction or the dimension of the low rank subspace or would have free parameters that needs to be set according to the data statistics. In this paper, we aim to propose an algorithm for removal of outliers that is computationally simple, non iterative and parameter free. Classical methods for PCA may be applied for subspace recovery after outlier removal. 
\subsection{Related work}
We briefly describe some of the key literature in the area of robust PCA and highlight how our proposed work differs from and/or is inspired by them. The popular work \cite{candes2011robust}, assuming a uniform sparsity model on the corruptions, solves (\ref{ereformul}) using Augmented Lagrange Multiplier (ALM) \cite{lin2010augmented} which is an iterative process that requires certain parameters to be set. Ours uses an outlier model and hence we cannot compare our method with the work  in \cite{candes2011robust}. In an outlier model, \cite{xu2010robust} proposes solving the following convex optimization problem for robust PCA:
\begin{equation}
\begin{aligned}
& \underset{\textbf{L}, \textbf{S}}{\text{minimize}}
& & \|\textbf{L}\|_*+\lambda \|\textbf{S}\|_{1, 2}
& \text{s.t} && \mathbf{M} = \textbf{L}+\textbf{S}, 
\end{aligned}
\end{equation}
where $\|\textbf{S}\|_{1, 2}$ is the sum of $l_2$ norms of the columns of the matrix. The paper also proposes a value for the parameter, namely $\lambda = \frac{3}{7\sqrt{\gamma N}}$, where $\gamma$ is the fraction of outliers in the system. While \cite{xu2010robust} assumes the knowledge of $\gamma$, in many cases $\gamma$ is typically unknown. Another recent work \cite{cherapanamjeri2017thresholding} that bases its algorithms on thresholding also requires the knowledge of the target rank, i.e. the dimension of the subspace. The work in \cite{soltanolkotabi2012geometric} analyzes the removal of outliers from a system where the inliers come from a union of subspaces and involves solving multiple $l_1$ optimization problems. While there exists a lot of existing techniques and algorithms \cite{yang2010fast} for solving the $l_1$ optimization problem, most of them requires certain parameters to be set and are iterative. After solving the optimization problem, a data point is classified as an inlier or outlier using thresholding in \cite{soltanolkotabi2012geometric}. Although the proposed threshold in \cite{soltanolkotabi2012geometric} is independent of the dimension of the subspace $r$ or the number of outliers, the underlying optimization problem is not parameter free and since multiple optimization problems have to be solved, the procedure is also rather complex. Another self-representation based algorithm for detecting outliers from a union of subspaces is proposed in \cite{you2017provable}, based on random walks in a graph, but it is iterative and requires multiple parameters to be set.

A fast algorithm for robust PCA was recently proposed in \cite{rahmani2016coherence} which involves looking at the coherence of the data points with other points and identifying outliers as those points which have less coherence with the other points. The authors give theoretical guarantees for the working of the algorithm for the outlier model. In the two methods that have been proposed for identifying the true subspace, knowledge of either the number of outliers or the dimension of the underlying subspace is required. Recently a parameter free algorithm, for outlier removal was proposed in \cite{menon2018fast} based on a threshold on the minimum angle formed by outlier points. But this method can only detect unstructured outliers and the threshold is conservative. Another work that is partly similar to the proposal in this paper is the method described in \cite{heckel2015robust} for outlier detection. This proposes a tuning free threshold on the maximum coherence value of a point with other points to classify it as an outlier or inlier. However in \cite{heckel2015robust}, the threshold is loose and more importantly like \cite{menon2018fast} it can detect only unstructured outliers. The outlier removal algorithm in this work is in spirit a parameter free extension to the work in \cite{rahmani2016coherence} and can detect both structured and unstructured outliers.
\subsection{Motivation and proposed approach}
The main motivation behind this work is to build parameter free algorithms for robust PCA. By parameter free we mean an algorithm which does not require the knowledge of parameters such as the dimension of true subspace or the number of outliers in the system nor it has a tuning parameter which has to be tuned according to the data. Tuning parameters in any algorithm present a challenge, as the user then would have to decide either through cross validation  \cite{arlot2010survey} or prior knowledge on how to set them. Especially in an unsupervised scenario where the algorithm needs to adapt to the data at hand on the run, setting an appropriate value of a parameter becomes an issue and incorrect settings can lead to a huge change in performance. How to estimate a proper set of parameter values when you do not have data to train and validate or prior knowledge about the nature of data is an important question. Recently, there have been attempts to make algorithms parameter free in the paradigm of sparse signal recovery \cite{DBLP:conf/icml/KallummilK18, vats2014path}, \cite{stoica2012spice}, \cite{lederer2015don} and these were shown to have results comparable with the ones when the true parameters such as the sparsity of the signal were known. Motivated by this, in this paper we propose a parameter free algorithm for robust PCA. While there exists a vast literature on robust PCA algorithms, to make parameter free variants of them, one would have come up with novel modifications for each of them separately. 
In this work, we focus on obtaining a computationally efficient parameter free algorithm for outlier removal in robust PCA. The recent work in \cite{rahmani2016coherence} is both simple and non iterative in the sense that it is a one shot process which does not involve an iterative procedure to solve an optimization problem like in \cite{soltanolkotabi2012geometric} and can accommodate both structured and unstructured outliers\footnote{The mathematical definitions of structured and unstructured outliers are described in Section \ref{s1}}. However it is not parameter free and we aim to make a parameter free variant with similar or better capabilities. We propose algorithms for robust PCA in the outlier model for the cases of both structured and unstructured outliers, which does not require the knowledge of number of outliers or dimension of the underlying subspace. Our key contributions are:
\begin{itemize}
\item [i] We develop a parameter free threshold which when used in the algorithm can guarantee the removal of all unstructured outliers with a high probability.
\item [ii] We further develop a technique which too is parameter free and can separate the remaining data into two clusters to filter out structured outliers.
\item [iii] We will show through simulations how the algorithm works efficiently in scenarios of unstructured and structured outliers as well as in a mixture of both compared to other state of the art algorithms.
\item [iv] Further we will propose a technique that can adapt the threshold to the dataset and demonstrate the efficiency of the algorithm in real data applications like video activity detection and image separation where we will highlight how a parameter free algorithm can give a significant advantage compared to other methods.
\end{itemize}
Our hope is that this algorithm serves as a starting point for further progress in parameter free algorithms for robust PCA.
\section{Problem setup and notations}\label{s1}
	We are given $N$ data points, each from an $n$ dimensional space $\mathbb{R}^n$, denoted by $\mathbf{m}_i \in \mathbb{R}^n$, arranged in a data matrix $\mathbf{M} = [\mathbf{m}_1, \mathbf{m}_2...\mathbf{m}_N] \in \mathbb{R}^{n\text{x}N}$. In this paper we will be working with  $l_2$ normalized data points, namely $\mathbf{x}_i = \dfrac{\mathbf{m}_i}{\|\mathbf{m}_i\|_2}$. Here $ \|.\|_2$, denotes the $l_2$ norm, $\|.\|_{F}$ indicates Frobenius norm. $\Gamma(.)$ denotes the gamma function. Also $\mathbb{E}[Y]$ denotes the expectation, $var(Y)$ the variance and $\sigma_Y$ the standard deviation of the random variable $Y$. $\mathcal{N}(\mu, \sigma^2)$ denote a normal distribution with mean $\mu$ and variance $\sigma^2$. Let $F_{\mathcal{N}}(.)$ denote the standard normal cdf, $F_{\mathcal{N}}(y) =\dfrac{1}{\sqrt{2\pi}} \int\limits_{-\infty}^{y}e^{\frac{-x^2}{2}}dx$. $w.p$ indicates with probability and $\lfloor x\rfloor$ floors $x$. Also $O()$ denotes the Big O notation for complexity and $abs(x)$ denotes the absolute value of $x$. Let the normalized data matrix be denoted as $\mathbf{X} = [\mathbf{x}_1, \mathbf{x}_2...\mathbf{x}_N]$.  Let $\mathbb{S}^{n-1}$ denote the unit hypersphere in $\mathbb{R}^n$. Then $\mathbb{S}^{n-1} = \{\mathbf{x}\text{	} |\text{	}\mathbf{x}\in \mathbb{R}^n, \|\mathbf{x}\|_2 = 1\}$, i.e. the $l_2$ ball in $\mathbb{R}^n$ and all points in $\mathbf{X} \in \mathbb{S}^{n-1}$. We assume that out of the $N$ data points, $(1-\gamma)N$ of them lie in a low dimensional subspace $\mathcal{U}$ of dimension $r$, those we will refer to as inliers and the rest $\gamma N$ points, the outliers, lie in the high dimensional space. The parameters $\gamma$ which is the ratio of number of outliers to the total number of data points and $r$, dimension of the true subspace, are unknown. Let $\mathcal{I}$ denote the index set of inliers and $\mathcal{O}$ denote the index set of outliers, i.e. $\mathcal{I} = \{i\text{		}| \text{		}\mathbf{x}_i \text{ is an inlier}\}$ and $\mathcal{O} = \{i\text{		}| \text{		}\mathbf{x}_i \text{ is an outlier}\}$. Hence the matrix $\mathbf{X}$ can be segregated as $\mathbf{X} = [\mathbf{X}_\mathcal{I}, \mathbf{X}_\mathcal{O}] $, where $\mathbf{X}_\mathcal{I}$ are the set of inlier points and  $\mathbf{X}_\mathcal{O}$ are the set of outlier points. We will denote $N_\mathcal{I} =|\mathcal{I}| =(1-\gamma)N$ and $N_\mathcal{O} = |\mathcal{O}| =\gamma N$, where $|.|$ denotes the cardinality of a set. 
	
	The problem we will be focusing on is to remove the set of outliers from the matrix or to find  $\mathcal{O}$ without the knowledge of both the parameters $\gamma$ and $r$. We first list some essential definitions.
	\begin{definition}
		Let $\theta_{ij}$ denote the principal angle between two data points $\mathbf{x}_i$ and $\mathbf{x}_j$, i.e., 
		\begin{equation}\label{etheta}
		\theta_{ij} = cos^{-1}(\mathbf{x}_i^T\mathbf{x}_j) \hskip70 pt \theta_{ij} \in [0, \pi]
		\end{equation}
	\end{definition}
	\begin{definition}
		The acute angle between two points denoted by $\phi_{ij}$ is defined as: 
		\begin{align}\label{ephi}
		\phi_{ij} &= cos^{-1}(|\mathbf{x}_i^T\mathbf{x}_j|)\\
		&=\begin{cases}
		\theta_{ij} & \text{for } \theta_{ij}\leq \frac{\pi}{2}\\
		\pi-\theta_{ij} & \text{for } \theta_{ij} > \frac{\pi}{2}\\
		\end{cases}
		\end{align}
	\end{definition}
	Clearly $\phi_{ij}\in [0, \frac{\pi}{2}]$. Also $\phi_{ii}= \theta_{ii} =0$. 
	\begin{definition}
		The minimum angle subtended by a point denoted as $q_i$ is given by, 
		\begin{align}\label{eq}
		q_i = \underset{j=1, ..N, j\neq i}{\min\text{			}}\phi_{ij} \hskip30pt \forall i \in \{1, 2, ...N\}
		\end{align}
	\end{definition}
	\begin{definition}
	The number of acute angles formed by a point above a threshold $\zeta$ is defined as:
		\begin{equation}\label{ena}
		na_{i}^{\zeta} = |\{\phi_{ij}\text{	}|\text{	} \phi_{ij}>\zeta, j = 1,2...N\}|
		\end{equation}
	\end{definition}
	Now we will also define two properties that characterizes an algorithm for outlier removal.
	\begin{definition}[Outlier Identification Property, OIP($\alpha$) ]\label{doip}
		An algorithm for outlier removal is said to have Outlier Identification Property OIP($\alpha$), when the outlier index set estimate of the algorithm contains all the true outlier indices i.e. $\hat{\mathcal{O}} \supseteq \mathcal{O}$ with a probability at least $1-\alpha$. 
	\end{definition}
	\begin{definition}[Exact recovery Property, ERP($\alpha$)]\label{derp}
		An algorithm for outlier removal is said to have Exact Recovery Property, ERP($\alpha$) when it recovers all the inlier points or $\hat{\mathcal{I}} = \mathcal{I}$ with a probability at least $1-\alpha$.
	\end{definition}
	ERP($\alpha$) is a stronger condition than OIP($\alpha$). An algorithm which has ERP($\alpha$) will also have OIP($\alpha$) as in this case, $\hat{\mathcal{O}}=\mathcal{O}$ with a probability at least $1-\alpha$. 
	
		In this paper the following assumption is made on unstructured outliers (same as Assumption 1 in \cite{rahmani2016coherence}).
\begin{Assumption}\label{amain}
The subspace $\mathcal{U}$ is chosen uniformly at random from the set of all $r$ dimensional subspaces and the normalized inlier points are sampled uniformly at random from the intersection of $\mathcal{U}$ and $\mathbb{S}^{n-1}$. The normalized outlier points are sampled uniformly at random from $\mathbb{S}^{n-1}$.
\end{Assumption}
On structured outliers we make the following assumption:
\begin{Assumption}\label{a2}
The normalized structured outlier set is a subset of points sampled from points distributed uniformly on $\mathbb{S}^{n-1}$ such that the maximum principal angle in the outlier set  is bounded between $[\theta_{min}^{\mathcal{O}},\theta_{max}^{\mathcal{O}}]$ where $\theta_{max}^{\mathcal{O}} <\frac{\pi}{2}$. It can be defined as $\mathbf{X}_\mathcal{O} = \Big\{\mathbf{x}_1,\mathbf{x}_2,...\mathbf{x}_{N_{\mathcal{O}}}\text{	}|\text{	}\mathbf{x}_i \in \mathbb{S}^{n-1}\text{	} \forall i,\text{	} \theta_{ij} \in [\theta_{min}^{\mathcal{O}}, \theta_{max}^{\mathcal{O}}] \text{	}\forall i,j \in \mathcal{O}, i\neq j \Big\}$.
\end{Assumption}
For unstructured outliers, the outlier angles are distributed around $\frac{\pi}{2}$ and lie between $[0,\pi]$, but here a structure causes the angles to be lie in the interval $[\theta_{min}^{\mathcal{O}}, \theta_{max}^{\mathcal{O}}]$ with the mean angle being less than $\frac{\pi}{2}$. The outlier generating mechanism may be anything that can generate such an outlier set\footnote{For example this encompasses the structured outliers defined in \cite{rahmani2016coherence}. As $\mu$ in Assumption 2 of \cite{rahmani2016coherence} decreases, $\theta_{max}^{\mathcal{O}}$ also decreases}. As the outliers become more clustered  $\theta_{max}^{\mathcal{O}}$ reduces and $\theta_{min}^{\mathcal{O}}\to 0$. 
\section{Algorithm and features}\label{s2}
We will first discuss in brief the coherence pursuit (CoP) algorithm in \cite{rahmani2016coherence}, since our work can be regarded as a parameter free variant of CoP. The basic principle behind CoP algorithm \cite{rahmani2016coherence} is that the inlier points are more coherent amongst themselves and the outliers are less coherent. Hence for each point a metric is computed as the norm (either $l_1$ or $l_2$ norm) of a vector in $\mathbb{R}^{N-1}$ whose components are the coherence values that a point has with all the other data points. The expectation is that once these metrics are sorted in descending order, the inliers come first as the outlier metrics are supposed to be much less compared to the inlier metrics. Then the authors have proposed two schemes to remove the outliers and recover the true underlying subspace. The first scheme tries to remove the outliers and then perform PCA to get the true subspace. Here the outlier removal process assumes the knowledge of the maximum number of outliers in the system. The second scheme is an adaptive column sampling technique that generates an $r$ dimensional subspace from inlier points, with the assumption that the parameter $r$ is known. Another work of interest is the outlier detection technique used in \cite{heckel2015robust}, which uses a threshold on the maximum coherence on the outlier point. The two different thresholds proposed for noisy and noiseless cases are parameter free but these are loose and the algorithm is limited to unstructured outliers.

The proposed scheme works with angles, instead of coherence and the score that we compute is the minimum angle subtended by a point instead of the norm as is done in \cite{rahmani2016coherence}. We develop a high probability lower bound for outlier scores $q_i$ independent of the unknown parameters and use it to remove unstructured outliers. We further develop a second stage to remove structured outliers using the metric $na_{i}^{\zeta}$. 
\begin{figure}[t]
	\begin{subfigure}[b]{0.24\textwidth}
		\includegraphics[width=\linewidth, height=2cm]{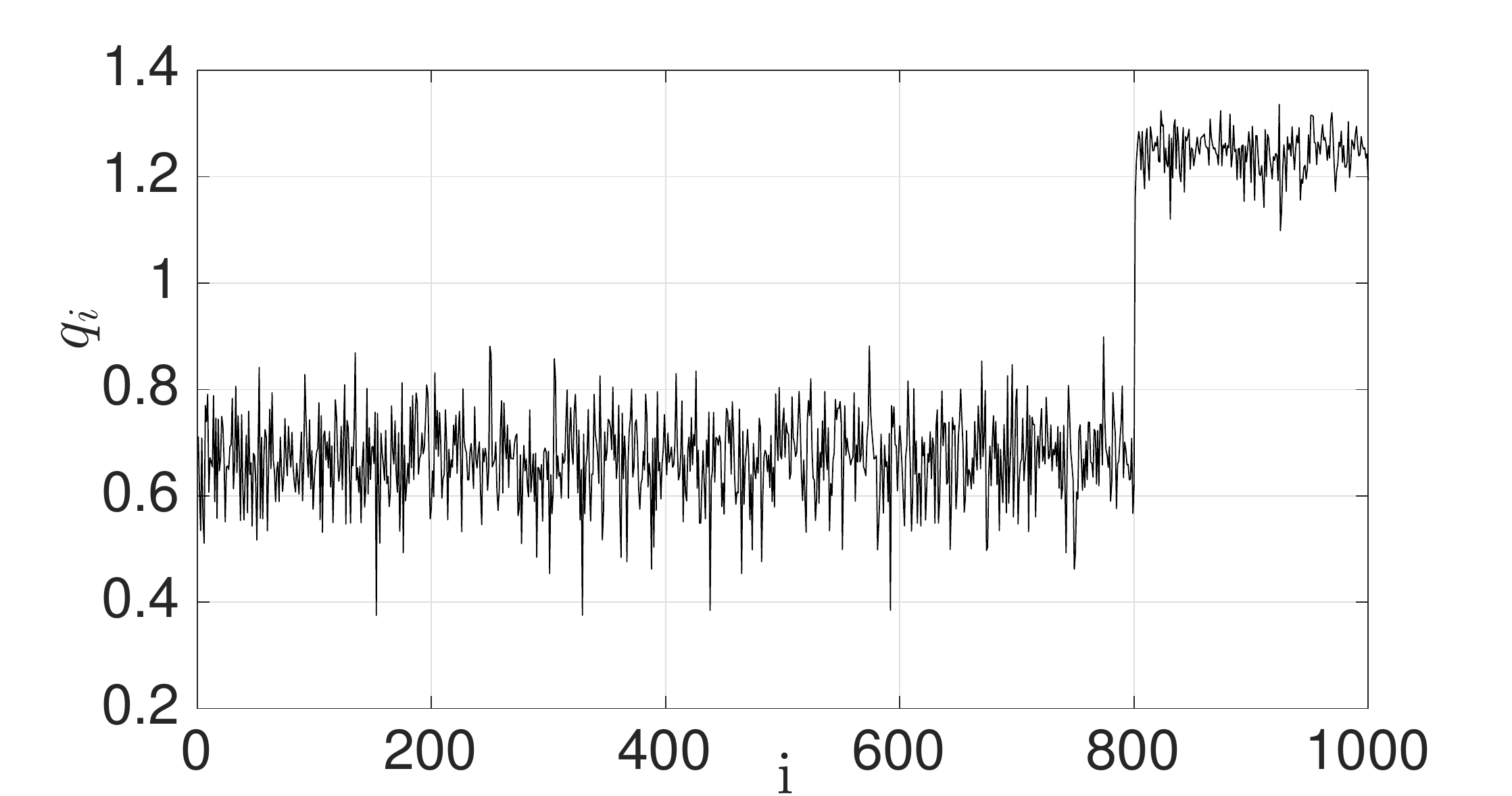}
		\caption{Inliers - upto 800}
		\label{}
	\end{subfigure}
	\hfill
	\begin{subfigure}[b]{0.24\textwidth}
		\includegraphics[width=\linewidth, height=2cm]{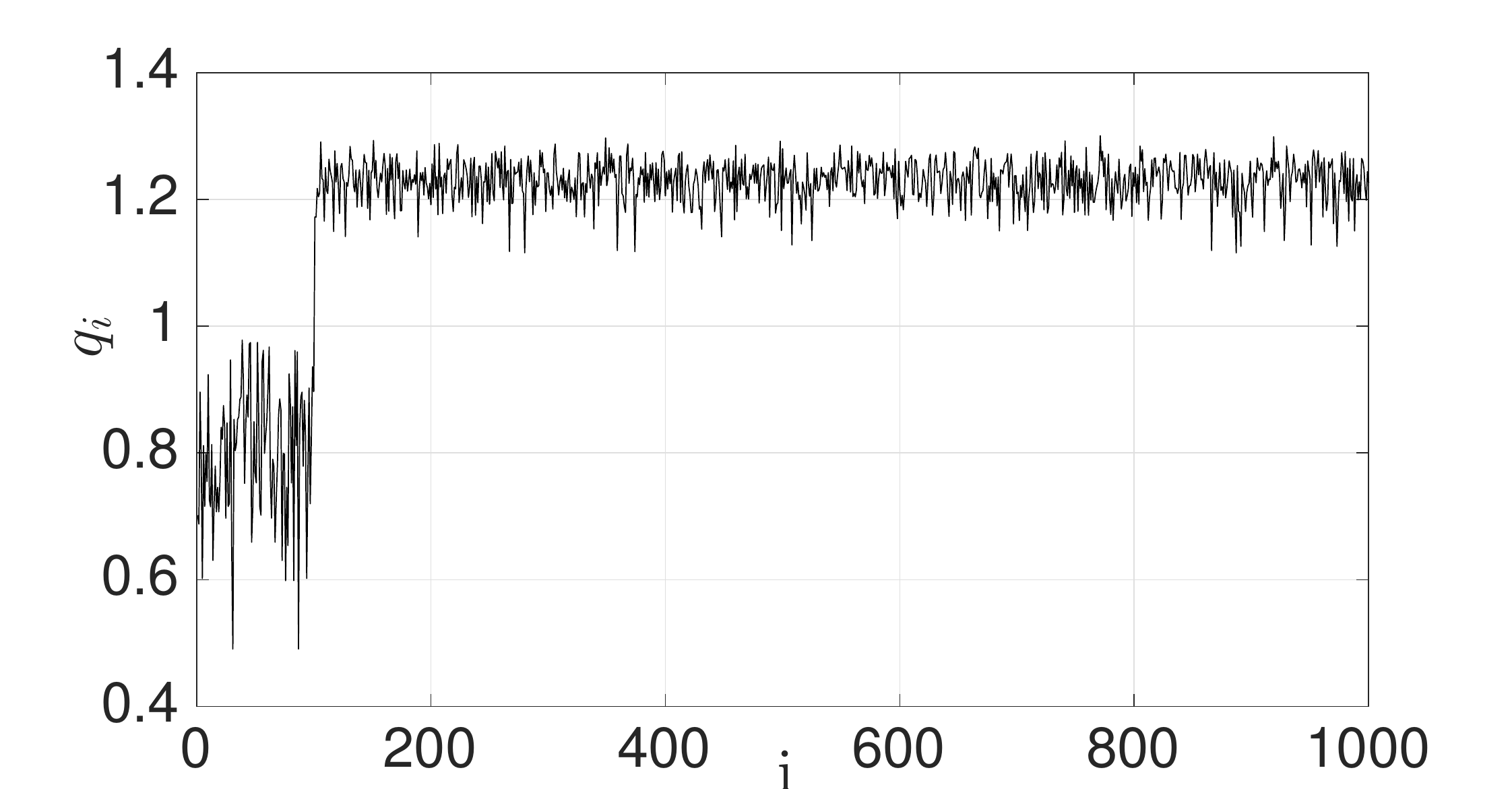}
		\caption{Inliers - upto 100}
		\label{}
	\end{subfigure}
	\caption{Behaviour of $q_i$ , $N=1000,\frac{r}{n}=0.1$}
	\label{fqsep}
\end{figure}
\begin{figure}[t]
	\begin{subfigure}[b]{0.24\textwidth}
		\includegraphics[width=\linewidth, height=2cm]{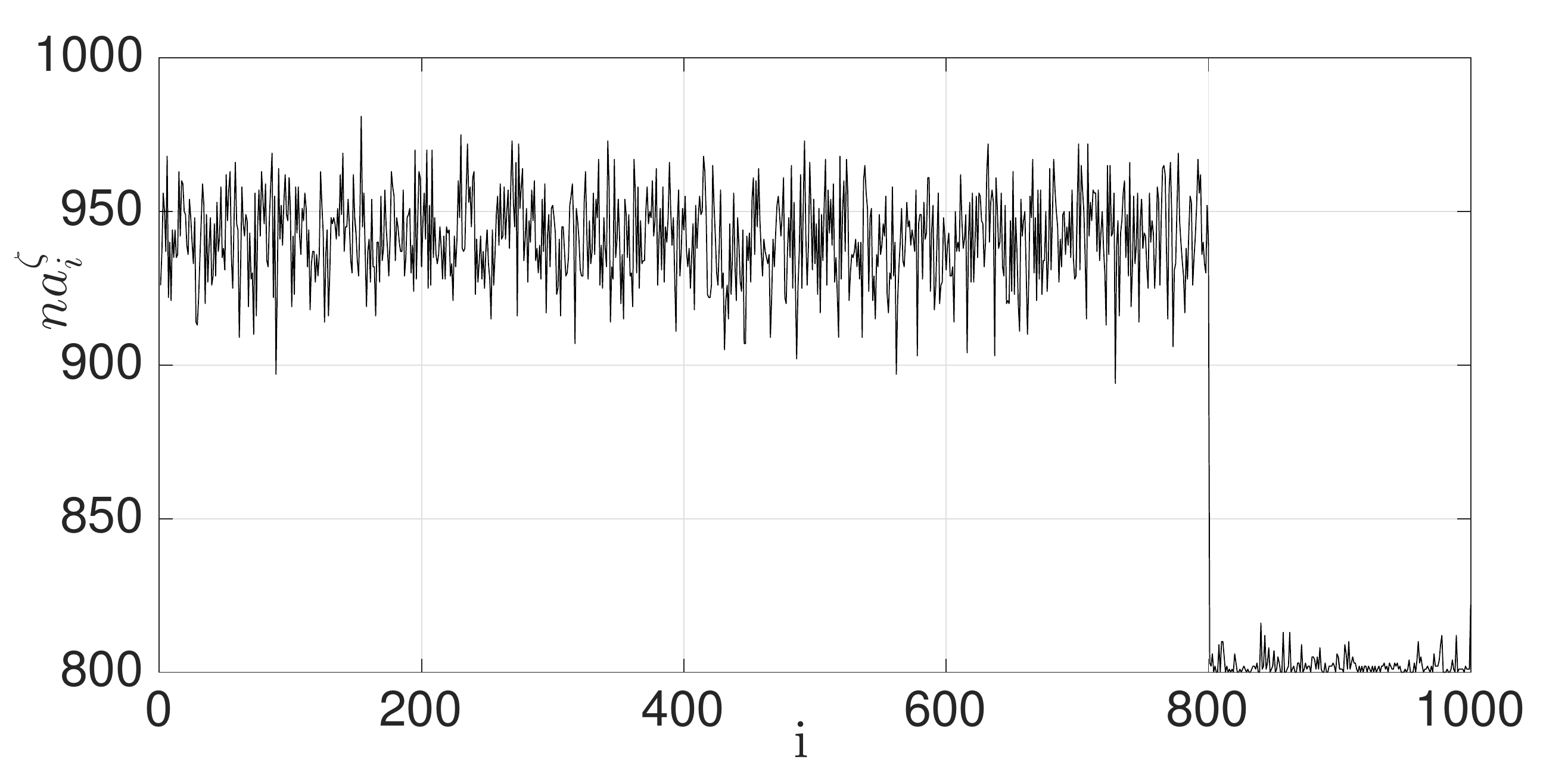}
		\caption{Inliers - upto 800}
		\label{}
	\end{subfigure}
	\hfill
	\begin{subfigure}[b]{0.24\textwidth}
		\includegraphics[width=\linewidth, height=2cm]{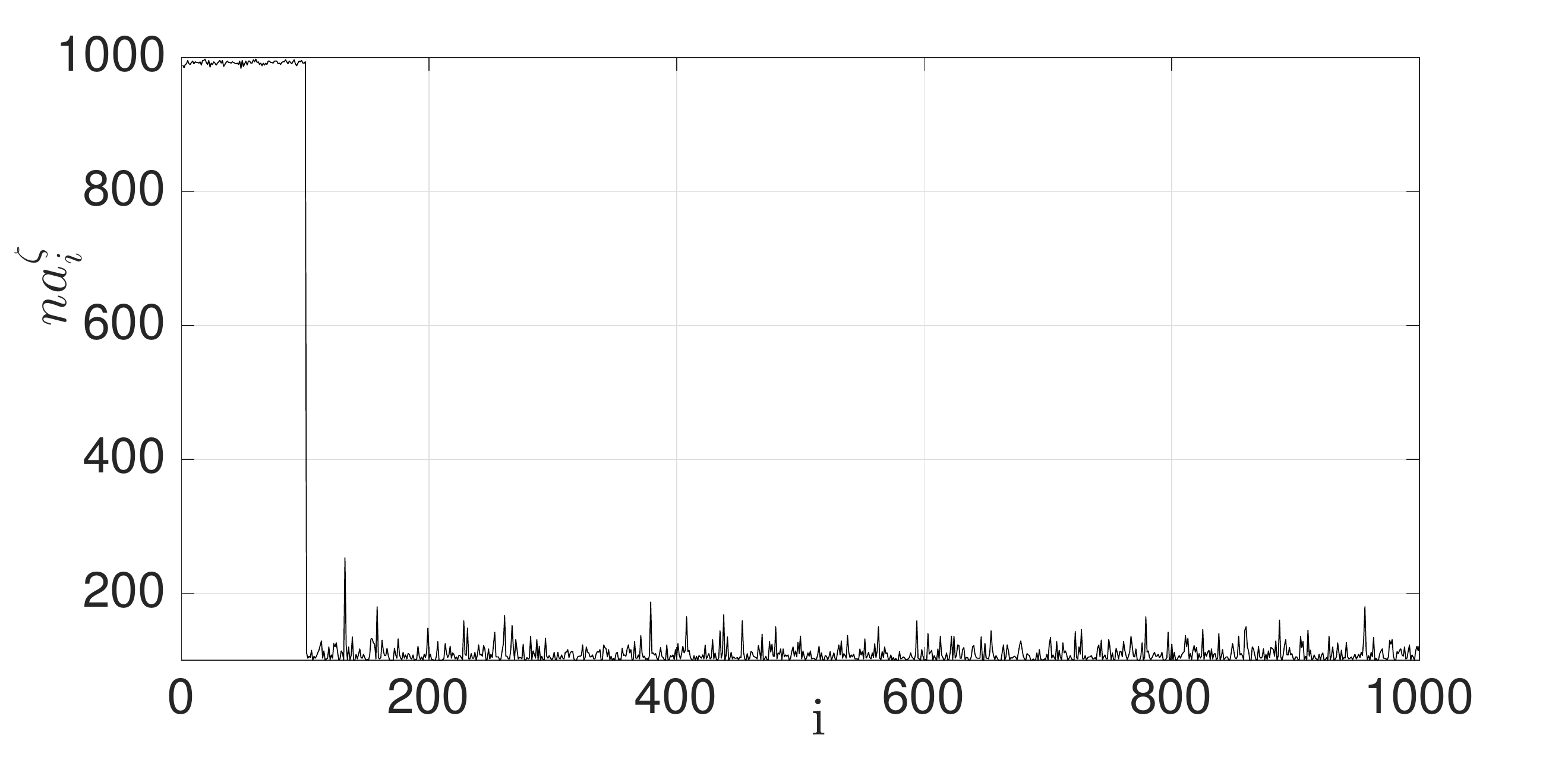}
		\caption{Inliers - upto 100}
		\label{}
	\end{subfigure}
	\caption{Behaviour of $na^\zeta_i$ , $N=1000,\frac{r}{n}=0.1$}
	\label{fnasep}
\end{figure}
 \subsection{Basic Principle and description}
 The folklore ``two high dimensional points are almost always orthogonal to each other'' has been rigorously proved in \cite{cai2013distributions} and this is what we exploit. First we will describe the principle behind outlier removal when the outliers and inliers follow Assumption 1. The proposed algorithm works on the principle that, outlier points subtend larger angles (very close to $\frac{\pi}{2}$) with rest of the points, but inlier points, since they lie in a smaller dimensional subspace, subtend smaller angles with other inlier points and hence would have a much smaller score $q_i$ as compared to an outlier. An example of the clear separation between $q_i$ values for inliers and outliers can be seen in Fig. \ref{fqsep} for randomly chosen outliers which shows that the property holds even at low inlier fraction. In the proposed method we will exploit this property to develop an algorithm that removes outliers and is also parameter free. In the algorithm we classify a point $\mathbf{m}_i$ as an outlier whenever $q_i$ is greater than a threshold $\zeta$ given by
 \begin{equation}\label{eth}
 \begin{aligned}
\zeta &= \dfrac{\pi}{2} - \dfrac{C_N}{\sqrt{n-2}},\\
 \end{aligned}
 \end{equation}
where $C_N = F_{\mathcal{N}}^{-1}\Big(1- \frac{1}{2N^2(N-1)}\Big)$ (See Theorem \ref{tthr} for more details on the derivation of $\zeta$). 
The proposed scheme which removes unstructured outliers is given in tabular form as Algorithm 1.
\begin{algorithm}[h]
	\caption{Removal of Outliers using Minimum Angle (ROMA)}
	\textbf{Input}:Data matrix $\mathbf{M}$\\
	\textbf{Procedure:}
	\begin{algorithmic}[1]
		\State Define $\mathbf{X}$, with columns $\mathbf{x}_i=\frac{\mathbf{m}_i}{\|\mathbf{m}_i\|_2}$
		\State Calculate $\phi_{ij}$ for $i, j=1, 2..N$ as in (\ref{ephi})
		\State Threshold, $\zeta \gets\dfrac{\pi}{2} - \dfrac{C_N}{\sqrt{n-2}}$
		\State Calculate $q_i$ for $i=1, 2..N$ as in (\ref{eq})
		\State $\hat{\mathcal{O}} \gets \{i\text{	}| q_i >\zeta\}$, $\hat{\mathcal{I}} \gets \{i\text{	}| q_i \leq \zeta\}$
	\end{algorithmic}
	\textbf{Output}: $\hat{\mathcal{I}}, \hat{\mathcal{O}} $ 
\end{algorithm}
The key steps are, 
\begin{itemize}
	\item [i] First the input data matrix is column normalized and the acute angles subtended by each point with other points as in (\ref{ephi}) are calculated for all data points. 
	\item[ii] Then the score for each point $q_i$ is computed by taking the minimum of the angles subtended by that point as in (\ref{eq}). 
	\item [iii] All the points with its $q_i$ value greater than $\zeta$ are classified as outliers and the rest as inliers.
\end{itemize}
The algorithm which we will call Removal of Outliers using Minimum angle (ROMA), is a parameter free algorithm, which removes all the randomly distributed outliers with a probability of at least $1-\frac{1}{N}$ and requires as input only the data matrix. Unlike the work in \cite{heckel2015robust}, ROMA is based on the distribution of angles between high dimensional points as opposed to correlations. The threshold in \cite{heckel2015robust} is based on applying Markov inequality which being loose makes the threshold loose, while $\zeta$ is based on the distribution of $\phi_{ij}$ and this enables the proposed algorithm to recover more inliers since the threshold is tight. However like \cite{heckel2015robust}, it is ineffective when the outliers are structured as in Assumption 2. Hence along with ROMA, we need a second stage to identify structured outliers as well. The key steps of the proposed algorithm are given below and is summarized in Algorithm 2.
\begin{algorithm}[h]
	\caption{ROMA with number of angles above threshold - ROMA\_N}
	\textbf{Input}:Data matrix $\mathbf{M}$\\
	\textbf{Stage1:} Execute ROMA to get $\hat{\mathcal{I}}$\\
	\textbf{Procedure for Stage 2:}
	\begin{algorithmic}[1]
		\State Calculate $na^\zeta_i \forall i \in \hat{\mathcal{I}}$ as in equation (\ref{ena}) using $\zeta$ from (\ref{eth})
		\State $i^* \gets \underset{i,j \in \hat{\mathcal{I}}, i \neq j}{\text{argmin}} \text{	}\phi_{ij}$
		\State $o^* \gets \underset{j \in \hat{\mathcal{I}}}{\text{argmax}} \text{	}\phi_{i^*j}$
		\State $\hat{\mathcal{O}}_{op} \gets \{i \in \hat{\mathcal{I}}\text{	}| abs(na_i^{\zeta}-na_{i*}^{\zeta})> abs(na_i^{\zeta}-na_{o^*}^{\zeta})\}$
		\State $\hat{\mathcal{I}}_{op}  \gets \{i \in \hat{\mathcal{I}} \text{	}| abs(na_i^{\zeta}-na_{i*}^{\zeta})\leq abs(na_i^{\zeta}-na_{o^*}^{\zeta})\}$
	\end{algorithmic}
	\textbf{Output}: $\hat{\mathcal{I}}_{op}, \hat{\mathcal{O}}_{op} $ 
\end{algorithm}
\begin{itemize}
	\item[i] After applying ROMA to remove unstructured outliers, find $na_i^{\zeta}$ value for each remaining data point. 
	\item[ii] Find two cluster heads - $i^*$ indexing one of the two points that forms the smallest angle amongst all angles, $o^*$ indexing the point which makes the largest angle with $i^*$. 
	\item[iii] Classify the points into two clusters $\hat{\mathcal{I}}$ and $\hat{\mathcal{O}}$ according to the $na_i^{\zeta}$ value -  classify to $\hat{\mathcal{I}}$ if $na_i^{\zeta}$ is closer to $na_{i^*}^{\zeta}$ and to $\hat{\mathcal{O}}$ if it is closer to $na_{o^*}^{\zeta}$.
\end{itemize}
The algorithm, which we will call ROMA\_N, is based on the principle that even when you have structured outliers, the angle between an inlier and an outlier is statistically same as that of angles between two points chosen uniformly at random from $\mathbb{S}^{n-1}$ and hence with very high probability is above $\zeta$. Thus for a structured outlier, the number of angles above $\zeta$ would be above $N_{\mathcal{I}}$ with high probability. An example of outlier and inlier $na_i^{\zeta}$ values is shown in Fig. \ref{fnasep} to highlight this characteristic. When the outlier structure is such that the maximum angle formed in the structure is less than $\zeta$, then the $na_i^{\zeta}$ value of every structured outlier is exactly $N_{\mathcal{I}}$. Even if that is not the case, all the outliers will have similar score which will be close to $N_\mathcal{I}$. If one were aware of $N_{\mathcal{I}}$, one could use that as a threshold to classify the points based on this metric. But since our algorithm is parameter free and hence unaware of $N_{\mathcal{I}}$, we need the cluster heads as chosen in step ii. The inlier head $i^*$ (one that subtends the minimum angle) need not be the real inlier head - it may happen that the clusters are reversed in the case when outliers are clustered closer. The theoretical requirements for this algorithm to work successfully for structured outliers is discussed in section \ref{sna}.
\subsection{Feature - Parameter free}
The main feature of the algorithm is that it does not have any dependencies on the unknown parameters. As seen clearly, the threshold we have proposed only requires $N$  and $n$ for its computation and is also independent of noise statistics. The technique proposed for removal of structured outliers is also parameter free. Once all the outliers have been identified and removed, the clean points can be used to obtain a low rank representation using classical PCA by SVD. For the noiseless case, PCA also does not require the knowledge of any parameter. In the presence of additive Gaussian noise $\textbf{w}_i$'s in the data, i.e. when $\mathbf{m}_i = \mathbf{m}^0_i+\textbf{w}_i$, $\textbf{w}_i \sim \mathcal{N}(0, \sigma_{w}^2\textbf{I}_n)$, there are several methods for selecting the number of principal components after SVD like BIC \cite{rissanen1978modeling}, geometric AIC \cite{kanatani1998geometric}, and other recent methods proposed in \cite{choi2017selecting}, \cite{donoho2013optimal}. 
\subsection{Feature - Simplicity}
ROMA is a simple to implement algorithm and the main complexity lies in computing all the angles. This requires computation of $N(N-1)$ angles and each involves an inner product of an $n$ dimensional vector and hence the complexity is $O(N^2n)$. The algorithm does not involve solving a complex optimization problem and is not iterative which is a significant advantage. The second step to implement robust PCA, would be an SVD on the inlier points recovered by the algorithm, which if implemented without truncation, has a time complexity of $O(min(Nn^2,N^2n))$ \cite{holmes2007fast}. Hence in any case overall complexity for the process is $O(N^2n)$. Section \ref{snumsim} contains running time comparisons with other algorithms.
\section{Theoretical Analysis of the algorithm}\label{s3}
In this section we address the following points:
\begin{itemize}
	\item[i] Derive the lower bound $\zeta$ on $q_i$ values for outliers under Assumption 1 which ensures outlier detection for ROMA.
	\item [ii] Derive the theoretical conditions for ROMA to follow the properties of OIP($\alpha$) and ERP($\alpha$) given in Definitions \ref{doip} and \ref{derp} under Assumption 1.
	\item[iii] An analysis of the how Gaussian noise affects the properties of the algorithm.
	\item[iv] Analyze the second stage of the algorithm by deriving the properties of the metric $na_i^{\zeta}$ under Assumption 2 and also deriving the theoretical requirements for algorithm performance.
\end{itemize}
As a starting point, we will state two lemmas that describe the distribution of the principal angles $\theta_{ij}$'s made by the points. This involves a slight modification of Lemma 12 in \cite{cai2013distributions}, to distinguish the angles formed by an inlier and outlier.
\begin{lemma}\label{lthetao}
	$\theta_{ij}$'s are identically distributed with an expected value of $\dfrac{\pi}{2}$ and it's pdf is given by: 
	\begin{equation}\label{epdf}
				h(\theta) = \dfrac{1}{\sqrt{\pi}}\dfrac{\Gamma(\frac{n}{2})}{\Gamma(\frac{n-1}{2})} (sin\theta)^{n-2} \hskip20pt \theta \in [0, \pi]
	\end{equation}
	in both the following cases:
	\begin{itemize}
		\item[a)] Inliers and outliers follow Assumption 1 and either $i$ or $j \in \mathcal{O}$
		\item[b)]Outliers follow Assumption 2 and $i \in \mathcal{O}$ and $j \in \mathcal{I}$ or vice versa. 
	\end{itemize}
Also $h(\theta)$ is well approximated by a Gaussian pdf with mean $\frac{\pi}{2}$ and variance $\frac{1}{n-2}$.
\end{lemma}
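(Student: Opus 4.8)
The plan is to reduce every claim to a single rotational-invariance fact: whenever one of the two points, say $\mathbf{x}_j$, is distributed uniformly on $\mathbb{S}^{n-1}$ and independently of the other point $\mathbf{x}_i$, the inner product $\mathbf{x}_i^T\mathbf{x}_j$ has the same law as the first coordinate of a uniformly random point on $\mathbb{S}^{n-1}$, irrespective of the distribution of $\mathbf{x}_i$. Indeed, conditioning on $\mathbf{x}_i=\mathbf{v}$ for an arbitrary unit vector $\mathbf{v}$, the quantity $\mathbf{v}^T\mathbf{x}_j$ has, by the rotational invariance of the uniform measure on the sphere, the same law as the first coordinate of $\mathbf{x}_j$; since this does not depend on $\mathbf{v}$, the same holds after averaging over $\mathbf{x}_i$. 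Hence $\theta_{ij}=\cos^{-1}(\mathbf{x}_i^T\mathbf{x}_j)$ is distributed as $\cos^{-1}$ of the first coordinate of a uniform point on $\mathbb{S}^{n-1}$, i.e.\ has the classical polar-angle distribution.

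First I would verify that this hypothesis holds in both cases. In case (a), Assumption~\ref{amain} directly supplies an outlier point sampled uniformly from $\mathbb{S}^{n-1}$ and independently of everything else, so if $i$ or $j$ lies in $\mathcal{O}$ we let that point play the role of the uniform one (if both are outliers, condition on either). In case (b) the point in question is a single \emph{structured} outlier, whose marginal is not assumed uniform a priori; however, the joint law of the structured-outlier configuration is the uniform product measure on $(\mathbb{S}^{n-1})^{N_{\mathcal O}}$ conditioned on the event that all pairwise principal angles lie in $[\theta_{min}^{\mathcal{O}},\theta_{max}^{\mathcal{O}}]$, and both this product measure and this event are invariant under a common rotation applied to all outliers. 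Therefore the conditional law is rotation-invariant, so the marginal of each individual structured outlier is again uniform on $\mathbb{S}^{n-1}$; moreover it is independent of the inlier set (the outliers and the random subspace $\mathcal{U}$ are independent), so the hypothesis applies with $j\in\mathcal{I}$, $i\in\mathcal{O}$.

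Next I would compute the density. The density of the polar angle $\theta$ of a uniform point on $\mathbb{S}^{n-1}$ is proportional to the area of the latitude $(n-2)$-sphere of radius $\sin\theta$, namely $(\sin\theta)^{n-2}$ on $[0,\pi]$; normalizing via $\int_0^\pi (\sin\theta)^{n-2}\,d\theta = \sqrt{\pi}\,\Gamma\!\big(\tfrac{n-1}{2}\big)/\Gamma\!\big(\tfrac{n}{2}\big)$ yields exactly $h(\theta)$ in (\ref{epdf}). Since $h$ is the same for every admissible pair, the $\theta_{ij}$'s are identically distributed; and because $h(\pi-\theta)=h(\theta)$, the law is symmetric about $\tfrac{\pi}{2}$, so $\mathbb{E}[\theta_{ij}]=\tfrac{\pi}{2}$. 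For the Gaussian approximation I would substitute $\theta=\tfrac{\pi}{2}+t/\sqrt{n-2}$, use $(\sin\theta)^{n-2}=\big(\cos(t/\sqrt{n-2})\big)^{n-2}\to e^{-t^2/2}$, and check via Stirling that $\tfrac{1}{\sqrt{\pi}}\Gamma(\tfrac n2)/\Gamma(\tfrac{n-1}{2})\sim\sqrt{(n-2)/(2\pi)}$, which together with the Jacobian $1/\sqrt{n-2}$ reproduces the $\mathcal{N}(\tfrac{\pi}{2},\tfrac{1}{n-2})$ density; the quantitative form of this approximation is already established in \cite{cai2013distributions}.

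The main obstacle is case (b): one must argue carefully that conditioning the outlier configuration on an \emph{inter-outlier} angle constraint does not spoil the marginal uniformity that the reduction requires. The symmetrization remark above is exactly what resolves it — the constraint event is rotation-equivariant, so no single direction is singled out — and this is the step that genuinely uses the particular form of Assumption~\ref{a2} rather than being a verbatim application of Lemma~12 of \cite{cai2013distributions}.
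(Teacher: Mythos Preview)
Your argument is correct and matches the paper's route in spirit: both reduce to the classical polar-angle law on $\mathbb{S}^{n-1}$ via rotational invariance and then invoke \cite{cai2013distributions} for the density and the Gaussian approximation. The paper's proof is essentially one line, deferring the work to its Lemma~\ref{lvirtual} (``to an outlier, all the other points are just uniformly chosen independent points'') together with Lemma~12 of \cite{cai2013distributions}; your write-up makes the reduction explicit and also supplies the density computation and the $\sin^{n-2}\to e^{-t^2/2}$ heuristic that the paper leaves to the reference.

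The one substantive difference is how case~(b) is handled. The paper uses the randomness on the \emph{inlier} side: since $\mathcal{U}$ is a uniformly random $r$-subspace, an inlier point is marginally uniform on $\mathbb{S}^{n-1}$ and independent of the structured outlier, so the pair behaves like two uniform points. You instead symmetrize on the \emph{outlier} side, observing that the angle-constrained law of Assumption~\ref{a2} is invariant under a common rotation of all outliers, hence each structured outlier is marginally uniform. Both symmetrizations are valid and lead to the same conclusion; your version has the mild advantage that it does not rely on $\mathcal{U}$ being random, while the paper's version avoids parsing the precise sampling model in Assumption~\ref{a2}. One small slip: at the end of your case-(b) paragraph you write ``the hypothesis applies with $j\in\mathcal{I}$, $i\in\mathcal{O}$,'' but in your reduction $\mathbf{x}_j$ is the uniform point, so the roles should be swapped (the uniform point is the outlier you just analyzed).
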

\begin{proof}
	Please refer to appendix \ref{app1}
\end{proof}
\begin{lemma}\label{lthetai}
	Under assumption 1, when $i, j \in \mathcal{I}$, $\theta_{ij}$'s are identically distributed with an expected value of $\dfrac{\pi}{2}$ and it's pdf is given by
	$h(\theta) = \dfrac{1}{\sqrt{\pi}}\dfrac{\Gamma(\frac{r}{2})}{\Gamma(\frac{r-1}{2})} (sin\theta)^{r-2} \text{ }, \theta \in [0, \pi]$.
	. Also $h(\theta)$ is well approximated by a Gaussian pdf with mean $\frac{\pi}{2}$ and variance $\frac{1}{r-2}$ whenever $r \geq 5$.
\end{lemma}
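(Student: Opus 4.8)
The plan is to reduce the statement to the already-proved Lemma~\ref{lthetao} by noting that, under Assumption~\ref{amain}, a pair of inliers is statistically indistinguishable from a pair of uniformly random points on a sphere of dimension $r$ rather than $n$. Since $\mathcal{U}$ is an $r$-dimensional subspace, pick an orthogonal matrix $\mathbf{Q}\in\mathbb{R}^{n\times n}$ carrying $\mathcal{U}$ onto the coordinate subspace $\{\mathbf{x}\in\mathbb{R}^n:\ x_{r+1}=\cdots=x_n=0\}$. Then $\mathbf{Q}$ restricts to an isometry of $\mathcal{U}\cap\mathbb{S}^{n-1}$ onto a copy of $\mathbb{S}^{r-1}$, and the uniform law on $\mathcal{U}\cap\mathbb{S}^{n-1}$ pushes forward under this isometry to the rotation-invariant, hence uniform, law on $\mathbb{S}^{r-1}$; so the images $\mathbf{Q}\mathbf{x}_i$ of the inliers are i.i.d.\ uniform on $\mathbb{S}^{r-1}$. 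As $\theta_{ij}=\cos^{-1}(\mathbf{x}_i^{T}\mathbf{x}_j)=\cos^{-1}\big((\mathbf{Q}\mathbf{x}_i)^{T}(\mathbf{Q}\mathbf{x}_j)\big)$ depends on the pair only through an inner product, the distribution of $\theta_{ij}$ for $i,j\in\mathcal{I}$ equals that of the angle between two independent uniform points on $\mathbb{S}^{r-1}$; in particular the $\theta_{ij}$ are identically distributed by exchangeability of the inlier sample.

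With this reduction in hand, I would invoke the density computation underlying Lemma~\ref{lthetao} (Lemma~12 of \cite{cai2013distributions}) with the ambient dimension taken to be $r$ instead of $n$. This gives directly
\[
h(\theta)=\frac{1}{\sqrt{\pi}}\,\frac{\Gamma(r/2)}{\Gamma((r-1)/2)}\,(\sin\theta)^{r-2},\qquad \theta\in[0,\pi],
\]
the normalizing constant being the Beta-function identity $\int_0^{\pi}(\sin\theta)^{r-2}\,d\theta=\sqrt{\pi}\,\Gamma((r-1)/2)/\Gamma(r/2)$. The expected value is $\pi/2$ because $\sin\theta$, hence $h(\theta)$, is symmetric about $\theta=\pi/2$, so the law is symmetric about $\pi/2$ and its mean coincides with that centre of symmetry.

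For the final sentence, the Gaussian approximation is carried out exactly as in the proof of Lemma~\ref{lthetao}: write $(\sin\theta)^{r-2}=\exp\{(r-2)\log\sin\theta\}$ and Taylor-expand $\log\sin\theta$ to second order about $\theta=\pi/2$, which turns $h(\theta)$ into the $\mathcal{N}(\pi/2,\,1/(r-2))$ density, while Stirling's formula shows the leading constant $\tfrac{1}{\sqrt{\pi}}\Gamma(r/2)/\Gamma((r-1)/2)$ agrees with the Gaussian normalizer $\sqrt{(r-2)/(2\pi)}$ to leading order. The only genuine difference from Lemma~\ref{lthetao} is that there $n$ is large by assumption, whereas here $r$ may be modest; the restriction $r\ge 5$ ensures $r-2\ge 3>0$ so that the stated variance is positive and the concentration strong enough, and it keeps the approximation error — which can be bounded via the total-variation estimates in \cite{cai2013distributions} — acceptably small. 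I expect this uniform control of the approximation quality at moderate $r$ to be the only delicate point; the exact distributional identity is immediate once the isometry observation is established.
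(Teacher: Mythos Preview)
Your proposal is correct and follows essentially the same approach as the paper: both arguments observe that under Assumption~\ref{amain} the inliers, once restricted to the $r$-dimensional subspace $\mathcal{U}$, are uniform on a copy of $\mathbb{S}^{r-1}$, and then invoke Lemma~12 of \cite{cai2013distributions} and its Gaussian approximation with $n$ replaced by $r$. The paper's proof is a two-line appeal to Lemma~\ref{l1}, Remark~\ref{r1} and Assumption~\ref{amain}; your version simply makes the isometry explicit via an orthogonal $\mathbf{Q}$ and spells out the Taylor/Stirling justification for the Gaussian approximation, but the underlying idea is identical.
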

\begin{proof}
Please refer to appendix \ref{app1}
\end{proof}
For the algorithm we use the acute angles $\phi_{ij}$'s instead of $\theta_{ij}$'s. The properties of $\phi_{ij}$ has been characterized in appendix \ref{app3}. The following is an important result on $\phi_{ij}$:
\begin{lemma}\label{lphilower}
	$\phi_{ij}$ has the the following property:
	\begin{align}\label{ephiprop}
	\phi_{ij} > \dfrac{\pi}{2} - \dfrac{c}{\sqrt{n-2}} \hskip40pt w.p\text{	} 2F_{\mathcal{N}}(c)-1
	\end{align}
	 Under the following conditions on $i$ and $j$
	\begin{itemize}
		\item[a)] Inliers and outliers follow Assumption 1 and either $i$ or $j \in \mathcal{O}$
		\item [b)] Outliers follow Assumption 2 and $i \in \mathcal{O}$ and $j \in \mathcal{I}$ or vice versa.
	\end{itemize}
\end{lemma}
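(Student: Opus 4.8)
The plan is to reduce the statement to a tail bound on the principal angle $\theta_{ij}$, which Lemma~\ref{lthetao} has already told us is (approximately) $\mathcal{N}(\pi/2, 1/(n-2))$ in both cases (a) and (b). Since in both cases the pair $(i,j)$ consists of an outlier and an inlier (case (b)), or an outlier with anything (case (a), where $h(\theta)$ still holds by Lemma~\ref{lthetao}), the distribution of $\theta_{ij}$ is the common pdf $h(\theta)$ from~\eqref{epdf}, so I can treat both cases uniformly from here on.

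First I would write $Z = \sqrt{n-2}\,\bigl(\theta_{ij} - \tfrac{\pi}{2}\bigr)$, which by Lemma~\ref{lthetao} is well approximated by a standard normal. The event $\phi_{ij} > \tfrac{\pi}{2} - \tfrac{c}{\sqrt{n-2}}$ needs to be translated into an event on $\theta_{ij}$: recalling~\eqref{ephi}, $\phi_{ij} = \theta_{ij}$ when $\theta_{ij} \le \pi/2$ and $\phi_{ij} = \pi - \theta_{ij}$ when $\theta_{ij} > \pi/2$, i.e. $\phi_{ij} = \tfrac{\pi}{2} - \bigl|\theta_{ij} - \tfrac{\pi}{2}\bigr|$. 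Hence
\begin{equation}
\phi_{ij} > \frac{\pi}{2} - \frac{c}{\sqrt{n-2}} \iff \Bigl|\theta_{ij} - \frac{\pi}{2}\Bigr| < \frac{c}{\sqrt{n-2}} \iff |Z| < c .
\end{equation}
Then I would simply evaluate $\Pr(|Z| < c)$. By symmetry of the (approximating) standard normal about $0$, $\Pr(|Z| < c) = F_{\mathcal{N}}(c) - F_{\mathcal{N}}(-c) = F_{\mathcal{N}}(c) - (1 - F_{\mathcal{N}}(c)) = 2F_{\mathcal{N}}(c) - 1$, which is exactly the claimed probability. The symmetry of $h(\theta)$ about $\pi/2$ (visible directly from $(\sin\theta)^{n-2}$ being symmetric about $\pi/2$) is what makes the absolute-value event split cleanly into the two symmetric tails, so I would note that this holds exactly for $h$, not merely for its Gaussian approximation — the Gaussian approximation only enters when one wants to write the probability in terms of $F_{\mathcal{N}}$.

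The only real subtlety — and the step I expect to need the most care — is the logical direction of the inequality and the role of the approximation. The statement asserts $\phi_{ij}$ exceeds the bound \emph{with probability} $2F_{\mathcal{N}}(c)-1$; since $\phi_{ij} \le \pi/2$ always, for $c$ large this probability is close to $1$, consistent with high-dimensional near-orthogonality. I would make explicit that ``$w.p.\ 2F_{\mathcal{N}}(c)-1$'' is to be read as ``with probability (at least, up to the Gaussian approximation of Lemma~\ref{lthetao})'', and that the approximation error is the $O(1/\sqrt{n})$-type gap between $h$ and the normal density, which the paper has already accepted as negligible. Everything else is the one-line change of variables above; there is no combinatorics or union bound here, that comes later when this lemma is applied over all pairs to derive $\zeta$ in Theorem~\ref{tthr}.
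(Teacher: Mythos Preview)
Your proposal is correct and is essentially the paper's own argument. The paper packages the folding step $\phi_{ij}=\tfrac{\pi}{2}-|\theta_{ij}-\tfrac{\pi}{2}|$ into a separate auxiliary lemma about folded Gaussians (their Lemma~\ref{l3}, applied via Corollary~\ref{c2}) and then invokes Lemma~\ref{lvirtual} to justify that cases (a) and (b) both give the same $h(\theta)$; you simply write the folding and the resulting two-sided Gaussian tail out directly, which is the same computation without the intermediate abstraction.
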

\begin{proof}
	Please refer to Appendix \ref{app3}.
\end{proof}
Under assumption \ref{amain}, from Lemmas \ref{lthetao} and \ref{lthetai}, for an outlier point, the principal angle it makes with any other point, be it an inlier or outlier is typically concentrated around $\frac{\pi}{2}$ especially at large $n$. On the other hand, because the dimension $r$ of the subspace $\mathcal{U}$ is much smaller than $n$, the angle that an inlier makes with another inlier is more spread around $\frac{\pi}{2}$. Classification of a point as an inlier or outlier using minimum principle angles $\theta_{ij}$'s would require multiple classification regions which can be avoided by using the acute angle $\phi_{ij}$. Here, the minimum acute angle that an outlier makes becomes very close to $\frac{\pi}{2}$ and a point may be classified as an outlier when $\underset{j=1, 2...N, j\neq i}{\min}\phi_{ij}\geq \zeta$, where $\zeta$ is some threshold. Hence the problem of outlier identification reduces to finding one appropriate threshold to be applied on $q_i$ defined in (\ref{eq}), which when used can classify all outlier points correctly with high probability. Further for the algorithm to be parameter free, we derive $\zeta$ such that it only depends on the number of data points $N$ and the ambient dimension $n$, which are of course always known. The next subsection gives the derivation of  $\zeta$. 
\subsection{Derivation of $\zeta$}\label{ssmain}
The following theorem gives the lower bound $\zeta$ on $q_i$, $i \in \mathcal{O}$ for unstructured outliers:
\begin{theorem}\label{tthr}
	Under Assumption \ref{amain}, ROMA with the classification rule that $\mathbf{x}_i$ is classified an outlier when $q_i > \zeta$, identifies all the outliers with probability at least $1-\frac{1}{N}$, when 
	\begin{equation*}
	\zeta  = \dfrac{\pi}{2} - \dfrac{C_N}{\sqrt{n-2}},
	\end{equation*}
	where $C_N = F_{\mathcal{N}}^{-1}\Big(1- \frac{1}{2N^2(N-1)}\Big)$.
\end{theorem}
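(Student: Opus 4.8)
The plan is to reduce the statement to the per‑pair tail bound already established in Lemma \ref{lphilower} and then apply two nested union bounds, with the constant $C_N$ reverse‑engineered so that the total failure probability is exactly $\tfrac1N$. First I would fix an arbitrary outlier index $i\in\mathcal{O}$ and an arbitrary index $j\neq i$. Since $i\in\mathcal{O}$, case (a) of Lemma \ref{lphilower} applies to the pair $(i,j)$ regardless of whether $j$ is an inlier or another outlier — this is precisely the content of ``either $i$ or $j\in\mathcal{O}$'', and it is the step where the high‑dimensional near‑orthogonality (and its Gaussian approximation from Lemmas \ref{lthetao}–\ref{lthetai}) is used. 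Setting $c=C_N$ in (\ref{ephiprop}) and noting that $\frac{\pi}{2}-\frac{C_N}{\sqrt{n-2}}=\zeta$ by (\ref{eth}), we get $\phi_{ij}>\zeta$ with probability at least $2F_{\mathcal{N}}(C_N)-1$; equivalently the ``bad'' event $\{\phi_{ij}\le\zeta\}$ has probability at most $2\bigl(1-F_{\mathcal{N}}(C_N)\bigr)$. Substituting $C_N=F_{\mathcal{N}}^{-1}\bigl(1-\tfrac{1}{2N^2(N-1)}\bigr)$, so that $1-F_{\mathcal{N}}(C_N)=\tfrac{1}{2N^2(N-1)}$, this probability is at most $\tfrac{1}{N^2(N-1)}$.

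Next I would assemble the union bounds. For a fixed outlier $i$, the event $\{q_i\le\zeta\}$ is, by definition (\ref{eq}) of $q_i$, the union over the $N-1$ indices $j\neq i$ of the events $\{\phi_{ij}\le\zeta\}$, so $\Pr(q_i\le\zeta)\le (N-1)\cdot\tfrac{1}{N^2(N-1)}=\tfrac{1}{N^2}$. Taking a further union bound over the $N_{\mathcal{O}}=\gamma N\le N$ outlier indices, the probability that $q_i\le\zeta$ for at least one $i\in\mathcal{O}$ is at most $N_{\mathcal{O}}\cdot\tfrac{1}{N^2}\le\tfrac1N$. On the complementary event, of probability at least $1-\tfrac1N$, every outlier satisfies $q_i>\zeta$ and is therefore placed in $\hat{\mathcal{O}}$ by the classification rule, i.e. $\hat{\mathcal{O}}\supseteq\mathcal{O}$, which is the claimed outlier identification guarantee.

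I do not anticipate a genuine obstacle: all the probabilistic work is already packaged in Lemma \ref{lphilower}, and what remains is essentially bookkeeping of constants. The one point requiring care is the counting: there are at most $N(N-1)$ relevant pairs $(i,j)$ with $i\in\mathcal{O}$ (namely $N-1$ partners per outlier times at most $N$ outliers), which is exactly why $C_N$ is defined through the quantile $1-\tfrac{1}{2N^2(N-1)}$ — the factor $2$ absorbs the two‑sided nature of the bound $2F_{\mathcal{N}}(c)-1$, and the $N^2(N-1)$ makes the double union bound collapse to $\tfrac1N$. A minor remark is that the same argument in fact yields the slightly sharper bound $1-\gamma/N$; stating it as $1-\tfrac1N$ simply avoids referencing the unknown fraction $\gamma$.
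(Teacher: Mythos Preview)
Your proposal is correct and matches the paper's own proof essentially step for step: both invoke Lemma~\ref{lphilower} to bound $\mathbb{P}(\phi_{ij}\le\zeta)$ for each pair with $i\in\mathcal{O}$, then apply a union bound over $j\neq i$ to control $\mathbb{P}(q_i\le\zeta)$, and finally a second union bound over outlier indices, with $C_N$ chosen so the product $2N(N-1)(1-F_{\mathcal{N}}(C_N))$ collapses to $1/N$. Your closing observation that the argument actually gives $1-\gamma/N$ is also accurate; the paper simply bounds $N_{\mathcal{O}}$ by $N$ to keep the statement parameter free.
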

\begin{proof}
 ROMA's classification rule is as follows - declare $\mathbf{x}_i$ to be an outlier if 
	\begin{equation*}
	\underset{j \in \{1,2...N\},j \neq i}{min} \phi_{ij} > \zeta
	\end{equation*}
Our aim is to derive an appropriate threshold, such that ROMA classifies all outliers correctly with probability at least $1-\frac{1}{N}$.  Hence we look at the probability of failure - failure being an outlier misclassified as an inlier. Suppose we are classifying a point $i$ and $i \in \mathcal{O}$, misclassification occurs when $q_i \leq \zeta$ whose  probability is as follows
\begin{align*}
\mathbb{P}(q_i \leq \zeta)&=\mathbb{P}\Big(\underset{j \in \{1,2...N\},j \neq i}{min} \phi_{ij}\leq \zeta\Big) \\
& =\mathbb{P}\Big(\bigcup_{j \in \{1,2...N\},j \neq i} \phi_{ij}\leq \zeta\Big) \\
&\leq \sum\limits_{j \in \{1,2...N\},j \neq i}\mathbb{P}(\phi_{ij} \leq \zeta)
\end{align*}
The last step is a union bound, which is fairly tight. \footnote{Using, $\sum\limits_{j}\mathbb{P}(\phi_{ij} \leq \zeta) -\sum\limits_{j}\sum\limits_{k>j}\mathbb{P}(\phi_{ij} \leq \zeta,\phi_{ik} \leq \zeta)\leq \mathbb{P}(q_i \leq \zeta) \leq \sum\limits_{j}\mathbb{P}(\phi_{ij} \leq \zeta) - \underset{k}{max}\sum\limits_{j\neq k}\mathbb{P}(\phi_{ij} \leq \zeta,\phi_{ik} \leq \zeta)$ and pairwise independence of $\phi_{ij}$'s,one can show that the terms subtracted from the sum in the lower and upper bounds are of O($\frac{1}{N^4}$) and O($\frac{1}{N^5}$) implying that the union bound is tight.} We know that under Assumption \ref{amain} using Lemma \ref{lphilower}, for any $j \in {1,2,...N}, j \neq i$
\begin{equation*}
\phi_{ij} > \dfrac{\pi}{2} - \dfrac{c}{\sqrt{n-2}} \hskip40pt w.p\text{	} 2F_{\mathcal{N}}(c)-1
\end{equation*}
Hence $\mathbb{P}(\phi_{ij} \leq \zeta) = 2(1-F_{\mathcal{N}}(c))$, when $\zeta = \dfrac{\pi}{2} - \dfrac{c}{\sqrt{n-2}}$. Then the task is to derive an appropriate value for $c$ to plug into this expression for $\zeta$. The probability of a failure, i.e. $i \in \mathcal{O},  i \in \hat{\mathcal{I}}$ can be bounded as
\begin{equation}\label{efailbound}
\begin{aligned}
\mathbb{P}(i \in \mathcal{O}, i \in \hat{\mathcal{I}}) &\leq 2(N-1)(1-F_{\mathcal{N}}(c))\\
\end{aligned}
\end{equation}
(\ref{efailbound}) gives us the probability bound on one classification failing to identify the outlier correctly. We have $N$ such  classifications and the requirement is for the algorithm to have $N$ successes, i.e. the correct identification of all outliers, whose probability
\begin{align*}
\mathbb{P}( N\text{ successes}) &= 1-\mathbb{P}(\text{at least 1 fail})\\
&=1 - \mathbb{P}(\bigcup_{i \in 1,2...N}\{i \in \mathcal{O}, i \in \hat{\mathcal{I}}\} )\\
&\geq 1 - N\mathbb{P}(i \in \mathcal{O}, i \in \hat{\mathcal{I}}) \hskip15pt\text{(Union bound)}\\
&\geq 1- N\times 2(N-1)(1-F_{\mathcal{N}}(c))\hskip5pt\text{(from(\ref{efailbound}))}
\end{align*}
For ROMA to identify all outliers correctly with a probability of at least $1-\frac{1}{N}$, we require
\begin{align*}
1 - 2N(N-1)(1-F_{\mathcal{N}}(c)) &= 1-\frac{1}{N}\\
\Rightarrow (1-F_{\mathcal{N}}(c)) &= \dfrac{1}{2N^2(N-1)}\\
\Rightarrow F_{\mathcal{N}}(c) &= 1- \dfrac{1}{2N^2(N-1)}\\
\Rightarrow  c =C_N &=  F_{\mathcal{N}}^{-1}\Big(1- \frac{1}{2N^2(N-1)}\Big)
\end{align*}
Hence we arrive at the expression for $\zeta$ in the theorem.
\end{proof}
 Summarizing, we have derived a threshold $\zeta$, which does not depend on the unknown parameters $\gamma$ and $r$, such that for unstructured outliers, ROMA identifies all outliers correctly with probability at least $1-\frac{1}{N}$, i.e. ROMA output $\hat{\mathcal{O}} \supseteq \mathcal{O}\text{	}w.p \geq 1-\frac{1}{N}$. The next point of interest would be to see when the identification is exact, i.e. $\mathcal{O} = \hat{\mathcal{O}}$ and $\hat{\mathcal{I}} = \mathcal{I}$. 
\subsection{\textbf{ROMA theoretical guarantees}}\label{sguaran}
In this section, we will be looking at the properties of algorithm for exactly recovering the true inlier set under Assumption \ref{amain}. We will look at ROMA and the properties given in Definitions \ref{doip} and \ref{derp}. 
\begin{remark}\label{roip}
	 Under Assumption \ref{amain}, when $\zeta$ is as given by (\ref{eth}), ROMA has OIP($\frac{1}{N}$) regardless of the number of outliers in the system or the dimension of the underlying subspace. 
\end{remark}
To have the property of ERP$(\alpha)$, ROMA would have to recover inliers with a probability of at least $1-\alpha$.  We will build towards this with the set of lemmas and theorems below. These results give theoretical bounds, but as seen through simulations these are not necessary conditions for recovery of all inliers. ROMA can recover a large number of inliers and have good subspace recovery characteristics in worse conditions than these (see Section \ref{snumsim}). First lets look at the probability of the inlier set estimate being non empty through the next lemma.
\begin{lemma}\label{lnonempty}
Under Assumption 1 and conditions stated in Lemma \ref{l15}, the inlier set estimate by ROMA, $\hat{\mathcal{I}}$ is non empty $w.p \geq1- \dfrac{(N_{\mathcal{I}}-1)(N_{\mathcal{I}}-2)p_{\mathcal{I}}^2 - z(2p_{\mathcal{I}}(N_{\mathcal{I}}-1)-(z+1))}{(N_{\mathcal{I}}-z)(N_{\mathcal{I}}-1-z)}$, where $z = \lfloor(N_{\mathcal{I}}-2)p_{\mathcal{I}}\rfloor$, $p_{\mathcal{I}} =  \Big(2F_{\mathcal{N}}\Big(C_N\sqrt{\frac{r-2}{n-2}}\Big)-1\Big)$.
\end{lemma}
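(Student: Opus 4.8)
The plan is to bound $\mathbb{P}(\hat{\mathcal{I}}=\emptyset)$ by reducing it to a statement about a single fixed inlier, and then to recognize the resulting quantity as an upper‑tail probability of a sum of pairwise‑independent Bernoulli variables, to which the concentration estimate of Lemma~\ref{l15} applies.

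First I would exploit the definition $\hat{\mathcal{I}} = \{i : q_i \le \zeta\}$. If $\hat{\mathcal{I}}=\emptyset$ then every point, in particular every inlier, has $q_i > \zeta$, i.e.\ $\phi_{ij} > \zeta$ for all $j \ne i$. Fixing one inlier index, say $1 \in \mathcal{I}$, this gives the containment $\{\hat{\mathcal{I}}=\emptyset\} \subseteq \{\phi_{1j} > \zeta \text{ for all } j \in \mathcal{I},\, j \ne 1\}$, so that $\mathbb{P}(\hat{\mathcal{I}}=\emptyset) \le \mathbb{P}(X = N_{\mathcal{I}}-1)$, where $X := |\{\, j \in \mathcal{I}\setminus\{1\} : \phi_{1j} > \zeta \,\}|$ counts, among the other $N_{\mathcal{I}}-1$ inliers, those whose acute angle with $\mathbf{x}_1$ exceeds the threshold.

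Next I would identify the law of $X$. By Lemma~\ref{lthetai} the principal angle between two inliers follows the $r$‑dimensional sine‑power density, approximated by $\mathcal{N}(\tfrac{\pi}{2}, \tfrac{1}{r-2})$; the inlier counterpart of Lemma~\ref{lphilower} (the statement supplied by Lemma~\ref{l15}, whose hypotheses we assume) then gives $\phi_{ij} > \tfrac{\pi}{2} - \tfrac{c}{\sqrt{r-2}}$ with probability $2F_{\mathcal{N}}(c)-1$ for $i,j\in\mathcal{I}$. Choosing $c = C_N\sqrt{\tfrac{r-2}{n-2}}$ makes $\tfrac{\pi}{2} - \tfrac{c}{\sqrt{r-2}} = \zeta$, so each indicator $\mathbf{1}\{\phi_{1j} > \zeta\}$ is Bernoulli with mean $p_{\mathcal{I}} = 2F_{\mathcal{N}}\!\big(C_N\sqrt{(r-2)/(n-2)}\big)-1$. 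Conditioning on $\mathcal{U}$ and on $\mathbf{x}_1$, the remaining inliers are i.i.d.\ uniform on $\mathcal{U}\cap\mathbb{S}^{n-1}$ and the conditional distribution of each $\mathbf{x}_1^{T}\mathbf{x}_j$ depends neither on $\mathbf{x}_1$ nor on $\mathcal{U}$; hence the indicators $\{\mathbf{1}\{\phi_{1j}>\zeta\}\}_{j\in\mathcal{I}\setminus\{1\}}$ are (at least pairwise) independent. So $X$ is a sum of $N_{\mathcal{I}}-1$ pairwise‑independent Bernoulli$(p_{\mathcal{I}})$ variables, with $\mathbb{E}[X] = (N_{\mathcal{I}}-1)p_{\mathcal{I}}$ and $\mathrm{var}(X) = (N_{\mathcal{I}}-1)p_{\mathcal{I}}(1-p_{\mathcal{I}})$, and for $z = \lfloor (N_{\mathcal{I}}-2)p_{\mathcal{I}}\rfloor$ a direct computation gives $\mathbb{E}[(X-z)(X-z-1)] = (N_{\mathcal{I}}-1)(N_{\mathcal{I}}-2)p_{\mathcal{I}}^{2} - z\big(2p_{\mathcal{I}}(N_{\mathcal{I}}-1)-(z+1)\big)$, which is exactly the numerator appearing in the statement.

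Finally I would apply the upper‑tail bound of Lemma~\ref{l15} to this sum: since $(X-z)(X-z-1)$ is a nonnegative integer‑valued random variable equal to $(N_{\mathcal{I}}-1-z)(N_{\mathcal{I}}-2-z)$ on $\{X = N_{\mathcal{I}}-1\}$ and $\mathbb{P}(X = N_{\mathcal{I}}-1)=\mathbb{P}(X \ge N_{\mathcal{I}}-1)$, Lemma~\ref{l15} yields $\mathbb{P}(X = N_{\mathcal{I}}-1) \le \dfrac{(N_{\mathcal{I}}-1)(N_{\mathcal{I}}-2)p_{\mathcal{I}}^{2} - z\big(2p_{\mathcal{I}}(N_{\mathcal{I}}-1)-(z+1)\big)}{(N_{\mathcal{I}}-z)(N_{\mathcal{I}}-1-z)}$, and combining with the first step bounds $\mathbb{P}(\hat{\mathcal{I}}\neq\emptyset) = 1 - \mathbb{P}(\hat{\mathcal{I}}=\emptyset)$ below by one minus this ratio, which is the assertion. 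The step I expect to be the main obstacle is the distributional analysis in the previous paragraph: pinning down that the per‑pair probability for inlier--inlier angles is exactly $p_{\mathcal{I}}$ (this needs the $r$‑dimensional angle law, its Gaussian approximation valid under the $r\ge 5$‑type hypothesis carried over from Lemma~\ref{l15}, and the exact threshold matching via $c = C_N\sqrt{(r-2)/(n-2)}$), and that the vertex‑$1$ indicators carry the independence required for the variance/second‑factorial‑moment computation underpinning Lemma~\ref{l15}; the rest is bookkeeping plus a direct invocation of that already‑established inequality.
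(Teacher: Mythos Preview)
Your proposal is correct and follows essentially the same route as the paper: bound $\mathbb{P}(\hat{\mathcal{I}}=\emptyset)\le\mathbb{P}(q_i>\zeta)$ for a single fixed inlier $i$ (restricting attention to inlier--inlier angles), identify the per-pair probability as $p_{\mathcal{I}}$, and then invoke Lemma~\ref{l15}. One small caveat: your inline Markov/factorial-moment heuristic would produce the denominator $(N_{\mathcal{I}}-1-z)(N_{\mathcal{I}}-2-z)$ rather than $(N_{\mathcal{I}}-z)(N_{\mathcal{I}}-1-z)$, so the sharp constant really comes from citing Lemma~\ref{l15} (Kwerel's bound) directly, as you ultimately do.
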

\begin{proof}
We will look at the probability of $\hat{\mathcal{I}}$ being empty under Assumption \ref{amain}, $\hat{\mathcal{I}}$ being non empty is its compliment. 
\begin{align*}
\mathbb{P}(\hat{\mathcal{I}} = \Phi) &= \mathbb{P}(\bigcap_{i \in \mathcal{I}}q_i>\zeta) \leq \mathbb{P}(q_i>\zeta) 
\end{align*}
If conditions in Lemma 15 are satisfied, then by applying the upper bound on $\mathbb{P}(\bigcap_{i \in \mathcal{I}}q_i>\zeta)$ derived in Lemma 15 in Appendix \ref{app2}, we get the result. Otherwise a looser upper bound in Lemma 14 can be used to get $\mathbb{P}(\hat{\mathcal{I}} = \Phi) \leq p_{\mathcal{I}}^2$ .
\end{proof}
The conditions in Lemma \ref{l15} are mild and satisfied usually. For example if $n=100, r=10, N=1000, N_{\mathcal{I}}=200$, the probability of the set being non empty is $>0.946$ and for $n=300, r=6, N=400, N_{\mathcal{I}}=100$ it is $>0.99$ from this lemma. Simulation results in section \ref{snumsim} show that the probability of the set being non empty is even higher in practice. For recovering the inliers, when the expected $q_i$ value for inliers is less than the threshold $\zeta$, a significant fraction are classified as inliers and the subspace can be recovered efficiently. The smaller the rank of the true subspace, the better the results will be in terms on inlier recovery.
\begin{lemma}\label{lrough}
Under Assumption 1, ROMA recovers a sizable amount of inliers when the following condition is satisfied: 
	\begin{equation}
	r \leq \dfrac{2(n-2)}{\pi C_N^2}+2 
	\end{equation}
	\begin{proof}
		Please refer to appendix \ref{app2}
	\end{proof}
\end{lemma}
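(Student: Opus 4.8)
The plan is to reduce the claim to a comparison between the threshold $\zeta$ and the typical acute angle formed by two inliers. By Lemma~\ref{lthetai}, for $i,j\in\mathcal{I}$ the principal angle $\theta_{ij}$ is (for $r\geq 5$) well approximated by $\mathcal{N}\bigl(\tfrac{\pi}{2},\tfrac{1}{r-2}\bigr)$, so I would write $\theta_{ij}=\tfrac{\pi}{2}+\tfrac{Z_{ij}}{\sqrt{r-2}}$ with $Z_{ij}$ standard normal. Since $\phi_{ij}=\tfrac{\pi}{2}-\bigl|\theta_{ij}-\tfrac{\pi}{2}\bigr|$, this folds to $\phi_{ij}=\tfrac{\pi}{2}-\tfrac{|Z_{ij}|}{\sqrt{r-2}}$, and using $\mathbb{E}|Z_{ij}|=\sqrt{2/\pi}$ the mean inlier--inlier acute angle is
\[
\mathbb{E}[\phi_{ij}]=\frac{\pi}{2}-\sqrt{\frac{2}{\pi(r-2)}}.
\]

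The next step is to connect this quantity to inlier recovery. A fixed inlier $i$ is placed in $\hat{\mathcal{I}}$ exactly when $q_i=\min_{j\neq i}\phi_{ij}\leq\zeta$, and since the angles an inlier makes with outliers concentrate near $\tfrac{\pi}{2}>\zeta$, this effectively asks that at least one of the $N_{\mathcal{I}}-1$ inlier neighbours $j$ satisfy $\phi_{ij}\leq\zeta$. I would argue that as soon as $\mathbb{E}[\phi_{ij}]\leq\zeta=\tfrac{\pi}{2}-\tfrac{C_N}{\sqrt{n-2}}$, a constant fraction of these angles lies below $\zeta$: the event $\{\phi_{ij}\leq\zeta\}$ for an inlier pair has probability $1-p_{\mathcal{I}}$ with $p_{\mathcal{I}}=2F_{\mathcal{N}}\bigl(C_N\sqrt{\tfrac{r-2}{n-2}}\bigr)-1$ (the inlier--inlier analogue of Lemma~\ref{lphilower}, with $n$ replaced by $r$), and $\mathbb{E}[\phi_{ij}]\leq\zeta$ is equivalent to $C_N\sqrt{\tfrac{r-2}{n-2}}\leq\sqrt{2/\pi}$, so $p_{\mathcal{I}}$ is bounded away from $1$. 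Using the pairwise independence of the $\phi_{ij}$'s already exploited in the analysis of $q_i$, the probability that every inlier neighbour exceeds $\zeta$ is then negligible for the sample sizes of interest, so almost all inliers land in $\hat{\mathcal{I}}$. This is only a sufficient, deliberately loose condition---hence the name ``rough''---and Section~\ref{snumsim} exhibits good recovery well outside it.

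Finally it remains to unwind $\mathbb{E}[\phi_{ij}]\leq\zeta$ into the stated bound. Substituting the two expressions,
\[
\frac{\pi}{2}-\sqrt{\frac{2}{\pi(r-2)}}\;\leq\;\frac{\pi}{2}-\frac{C_N}{\sqrt{n-2}},
\]
which rearranges to $\dfrac{2}{\pi(r-2)}\geq\dfrac{C_N^2}{n-2}$, i.e.\ $r\leq\dfrac{2(n-2)}{\pi C_N^2}+2$, as claimed.

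The step I expect to be the main obstacle is the middle one: making ``recovers a sizable amount of inliers'' precise. The Gaussian folding and the final rearrangement are routine; the content is in justifying that the mean inlier angle sitting below $\zeta$ already pushes $q_i\leq\zeta$ for the overwhelming majority of inliers. I would handle this through the concentration and pairwise-independence facts about $\phi_{ij}$ established in Appendix~\ref{app3} together with the $p_{\mathcal{I}}$-type bounds in Appendix~\ref{app2} (the same machinery used for Lemma~\ref{lnonempty}), rather than attempting a sharp high-probability guarantee, which is the subject of the separate ERP analysis and not what this lemma claims.
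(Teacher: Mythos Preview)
Your proposal is correct and follows essentially the same route as the paper: compute $\mathbb{E}[\phi_{ij}]=\tfrac{\pi}{2}-\sqrt{\tfrac{2}{\pi(r-2)}}$ for an inlier pair (the paper simply cites Corollary~\ref{c2} for this), require this to be at most $\zeta$, and rearrange to $r\leq \tfrac{2(n-2)}{\pi C_N^2}+2$.

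The only noteworthy difference is in the middle step you flagged. The paper handles it more simply than you do: since $q_i=\min_{j}\phi_{ij}\leq\phi_{ij}$ for every $j$, one has $\mathbb{E}(q_i)\leq\mathbb{E}(\phi_{ij})$ (the paper attributes this to Jensen, though it is just monotonicity of expectation), and then interprets ``$\mathbb{E}(q_i)\leq\zeta$'' heuristically as ``a sizable fraction of inliers satisfy $q_i\leq\zeta$.'' Your argument via $p_{\mathcal{I}}=2F_{\mathcal{N}}\bigl(C_N\sqrt{\tfrac{r-2}{n-2}}\bigr)-1$ and pairwise independence is more elaborate and arguably more honest about what is being claimed, but the paper does not go that far; given that the lemma's conclusion is deliberately informal (``sizable amount''), either justification is acceptable. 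Your observation that $\mathbb{E}[\phi_{ij}]\leq\zeta$ is equivalent to $C_N\sqrt{\tfrac{r-2}{n-2}}\leq\sqrt{2/\pi}$, hence $p_{\mathcal{I}}\leq 2F_{\mathcal{N}}(\sqrt{2/\pi})-1<1$, is a nice extra that the paper does not make explicit.
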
	
Evaluating this at $n=300,N=400$ gives $r  \leq 7.9$, but even at much higher $r$ in noisy scenarios, ROMA is seen to recover all inliers as demonstrated in Section \ref{scomp}. In the next theorem we will derive the theoretical condition when ROMA is guaranteed not to have ERP($\alpha$), which means it cannot recover all inliers with a high probability of $1-\alpha$. 
\begin{theorem}\label{trevcond}
	Under Assumption 1, the algorithm ROMA is guaranteed not to have ERP($\alpha$) when $\alpha \leq (N_{\mathcal{I}}-2)p_{\mathcal{I}}^2-(N_\mathcal{I}-3)p_{\mathcal{I}}$, where $p_{\mathcal{I}} =  \Big(2F_{\mathcal{N}}\Big(C_N\sqrt{\frac{r-2}{n-2}}\Big)-1\Big)$. In other words, the algorithm cannot recover all inliers with a probability greater than or equal to $1-(N_{\mathcal{I}}-2)p_{\mathcal{I}}^2-(N_\mathcal{I}-3)p_{\mathcal{I}}$.
\end{theorem}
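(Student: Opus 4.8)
The statement is an impossibility/converse result, so the plan is to \emph{lower bound} the probability that ROMA fails to return the exact inlier set and then observe that, whenever $\alpha$ does not exceed this bound, the defining inequality of ERP($\alpha$), namely $\mathbb{P}(\hat{\mathcal{I}}=\mathcal{I})\ge 1-\alpha$, is violated. Thus the whole problem reduces to establishing
\[
\mathbb{P}(\hat{\mathcal{I}}\neq\mathcal{I})\ \ge\ (N_{\mathcal{I}}-2)\,p_{\mathcal{I}}^2-(N_{\mathcal{I}}-3)\,p_{\mathcal{I}},\qquad p_{\mathcal{I}}=2F_{\mathcal{N}}\!\Big(C_N\sqrt{\tfrac{r-2}{n-2}}\Big)-1 .
\]

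First I would fix a single arbitrary inlier, say $\mathbf{x}_1$. Under ROMA's rule $1\in\hat{\mathcal{O}}$ as soon as $q_1>\zeta$, so $\{q_1>\zeta\}\subseteq\{\hat{\mathcal{I}}\neq\mathcal{I}\}$ and hence $\mathbb{P}(\hat{\mathcal{I}}\neq\mathcal{I})\ge\mathbb{P}(q_1>\zeta)=\mathbb{P}\big(\bigcap_{j\neq1}\{\phi_{1j}>\zeta\}\big)$. Next I would compute the marginals: by Lemma \ref{lthetai} together with the characterization of $\phi_{ij}$ in Appendix \ref{app3}, every inlier--inlier angle obeys $\mathbb{P}(\phi_{1j}>\zeta)=p_{\mathcal{I}}$ (substitute $c=C_N\sqrt{(r-2)/(n-2)}$ into the bound $\phi_{ij}>\tfrac{\pi}{2}-\tfrac{c}{\sqrt{r-2}}$ w.p.\ $2F_{\mathcal{N}}(c)-1$), while by Lemma \ref{lphilower} with $c=C_N$ every inlier--outlier angle obeys $\mathbb{P}(\phi_{1j}>\zeta)=2F_{\mathcal{N}}(C_N)-1=1-\tfrac{1}{N^2(N-1)}=:p_{\mathcal{O}}$, which differs from $1$ by only $O(N^{-3})$. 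Conditioning on $\mathbf{x}_1$, the rotational invariance in Assumption \ref{amain} makes the angles $\{\phi_{1j}\}_{j\neq1}$ conditionally i.i.d.\ with laws that do not depend on $\mathbf{x}_1$, hence unconditionally mutually independent; therefore $\mathbb{P}(q_1>\zeta)=p_{\mathcal{I}}^{\,N_{\mathcal{I}}-1}p_{\mathcal{O}}^{\,N_{\mathcal{O}}}$. Peeling one factor $p_{\mathcal{I}}$ off and bounding the remaining inlier factors by Bernoulli's inequality, $p_{\mathcal{I}}^{\,N_{\mathcal{I}}-2}\ge 1-(N_{\mathcal{I}}-2)(1-p_{\mathcal{I}})$, and discarding the negligible $p_{\mathcal{O}}^{\,N_{\mathcal{O}}}\ge 1-\tfrac{1}{N(N-1)}$ factor, gives $\mathbb{P}(q_1>\zeta)\gtrsim p_{\mathcal{I}}\big(1-(N_{\mathcal{I}}-2)(1-p_{\mathcal{I}})\big)=(N_{\mathcal{I}}-2)p_{\mathcal{I}}^2-(N_{\mathcal{I}}-3)p_{\mathcal{I}}$. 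Combining with the first inequality yields the displayed bound, and for any $\alpha\le(N_{\mathcal{I}}-2)p_{\mathcal{I}}^2-(N_{\mathcal{I}}-3)p_{\mathcal{I}}$ we get $\mathbb{P}(\hat{\mathcal{I}}=\mathcal{I})\le 1-\big((N_{\mathcal{I}}-2)p_{\mathcal{I}}^2-(N_{\mathcal{I}}-3)p_{\mathcal{I}}\big)\le 1-\alpha$, i.e.\ ROMA cannot have ERP($\alpha$); this is precisely the ``in other words'' reformulation in the statement.

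The argument is short, and the only steps requiring real care are (i) the justification that $\{\phi_{1j}\}_{j\neq1}$ may be treated as \emph{mutually} independent, since the body of the paper only emphasizes pairwise independence — but the product formula for $\mathbb{P}(q_1>\zeta)$ (equivalently, the peeling of $\phi_{12}$) needs the stronger fact, which is supplied by conditioning on $\mathbf{x}_1$ and using the rotational symmetry of Assumption \ref{amain}; and (ii) the precise value $\mathbb{P}(\phi_{1j}>\zeta)=p_{\mathcal{I}}$ for inlier pairs, which rests on the Gaussian approximation of Lemma \ref{lthetai} and is therefore only as accurate as that approximation (valid for $r\ge 5$). A purely cosmetic point is that the stated bound omits the inlier--outlier contribution $p_{\mathcal{O}}^{\,N_{\mathcal{O}}}$; since $1-p_{\mathcal{O}}=O(N^{-3})$ this costs at most an $O(N^{-2})$ term, which is dominated in the only regime where the bound is non-vacuous, namely $p_{\mathcal{I}}>\tfrac{N_{\mathcal{I}}-3}{N_{\mathcal{I}}-2}$ (i.e.\ when the right-hand side is positive). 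Everything else — the set inclusion, Bernoulli's inequality, and the final algebraic simplification $p_{\mathcal{I}}\big(1-(N_{\mathcal{I}}-2)(1-p_{\mathcal{I}})\big)=(N_{\mathcal{I}}-2)p_{\mathcal{I}}^2-(N_{\mathcal{I}}-3)p_{\mathcal{I}}$ — is routine.
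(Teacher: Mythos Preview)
Your argument is correct and reaches the target bound, but the paper proceeds differently at the key step. Rather than invoking mutual independence of $\{\phi_{1j}\}_{j\neq 1}$ and then applying Bernoulli's inequality to $p_{\mathcal{I}}^{\,N_{\mathcal{I}}-1}$, the paper (via its Lemma~\ref{luseful}) relies only on \emph{pairwise} independence and applies a Bonferroni-type lower bound due to Kwerel \cite{kwerel1975most}: for $m=N_{\mathcal{I}}-1$ identically distributed events with common marginal $p_{\mathcal{I}}$ and pairwise joint probability $p_{\mathcal{I}}^2$, one has $\mathbb{P}\big(\bigcap_{j} \{\phi_{ij}>\zeta\}\big)\geq (m-1)p_{\mathcal{I}}^2-(m-2)p_{\mathcal{I}}$. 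The two routes give the identical expression, since $p_{\mathcal{I}}\cdot p_{\mathcal{I}}^{\,m-1}\geq p_{\mathcal{I}}\big(1-(m-1)(1-p_{\mathcal{I}})\big)=(m-1)p_{\mathcal{I}}^2-(m-2)p_{\mathcal{I}}$ as well. Your observation that for a \emph{fixed} pivot $i$ the angles $\{\phi_{ij}\}_{j\neq i}$ are in fact mutually independent (the failure of mutual independence the paper alludes to concerns triples like $\theta_{12},\theta_{13},\theta_{23}$ that do not share a common vertex) is correct and makes your argument self-contained, avoiding the external reference; the paper's route is more conservative in that it never needs this fact. On the inlier--outlier angles, both treatments gloss over the same point: the paper simply asserts $\min_{j\neq i}\phi_{ij}=\min_{j\in\mathcal{I},\,j\neq i}\phi_{ij}$ ``by design'', while you explicitly bound the discarded factor $p_{\mathcal{O}}^{\,N_{\mathcal{O}}}\geq 1-\tfrac{1}{N(N-1)}$; either way the discrepancy is $O(N^{-2})$ and harmless wherever the bound is positive.
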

\begin{proof}
	Please refer to appendix \ref{app2}
\end{proof}
This theorem gives us conditions when the algorithm is guaranteed to not have ERP($\alpha$). In this case the outlier index estimate, $\hat{\mathcal{O}} \supset \mathcal{O}$ and $\hat{\mathcal{O}}$ has inliers as well, i.e. $\hat{\mathcal{O}} \cap \mathcal{I} \neq \Phi$. For example, lets take a case of $n=100$, $r=20$, $N=1000$ and $N_{\mathcal{I}} = 100$, plugging in these values to the condition in the theorem gives us that if $\alpha \leq 0.1327$, the algorithm is guaranteed not to have $ERP(\alpha)$, which means under these conditions the algorithm cannot guarantee full inlier recovery with a probability greater than $1-.1327 = .8673$. Now suppose the rank was reduced to 10, the bound on $\alpha$ goes to a negative value, which means $\alpha$ is free to take any value between $0$ and $1$ and full inlier recovery may be possible with a high probability but theorem cannot give a definitive value for this probability. Another instance would be to see this value at $r=40$, which evaluates to the condition that if $\alpha \leq 0.986$ the algorithm is guaranteed not to have $ERP(\alpha)$ or the algorithm cannot guarantee full inlier recovery with a probability greater than $1-.986 = .014$. In all these cases the algorithm still has OIP($\frac{1}{N}$). All the prior art also derives similar conditions on performance guarantee, for instance coherence pursuit\cite{rahmani2016coherence} guarantees subspace recovery when the inlier density $\frac{N_\mathcal{I}}{r}$ is sufficiently larger that the outlier density. i.e. $\frac{N-N_\mathcal{I}}{n}$ while outlier pursuit \cite{xu2010robust} gives conditions on $\gamma$ and $r$ for successful subspace recovery. The theorem does not state the conditions in which the algorithm is guaranteed to have ERP($\alpha$), it merely gives us extreme cases where it is not. The following lemma should give us an idea about the ERP($\alpha$) of ROMA.
\begin{lemma}\label{lerpb}
Under Assumption 1, $\mathbb{P}(\hat{\mathcal{I}}=\mathcal{I}) \geq 1-N_\mathcal{I}\mathbb{P}(q_{i, i\in {\mathcal{I}}}>\zeta)$. Hence ROMA has the property of ERP($N_\mathcal{I}\mathbb{P}(q_{i, i\in {\mathcal{I}}}>\zeta)$).
\end{lemma}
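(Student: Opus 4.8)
The plan is to reduce the event $\{\hat{\mathcal{I}}=\mathcal{I}\}$ to a statement about the inlier scores only and then apply a union bound. Since $\hat{\mathcal{I}}$ and $\hat{\mathcal{O}}$ partition $\{1,\dots,N\}$ exactly as $\mathcal{I}$ and $\mathcal{O}$ do, the event $\{\hat{\mathcal{I}}=\mathcal{I}\}$ is the same as $\{q_i\le\zeta\ \forall i\in\mathcal{I}\}\cap\{q_i>\zeta\ \forall i\in\mathcal{O}\}$, i.e.\ every inlier is retained and every outlier discarded. Passing to the complement and using subadditivity,
\[
\mathbb{P}(\hat{\mathcal{I}}\ne\mathcal{I})\ \le\ \mathbb{P}\Big(\bigcup_{i\in\mathcal{I}}\{q_i>\zeta\}\Big)+\mathbb{P}\Big(\bigcup_{i\in\mathcal{O}}\{q_i\le\zeta\}\Big),
\]
so the two error sources decouple and can be handled separately.

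For the second term I would invoke Theorem \ref{tthr} directly: its proof bounds $\mathbb{P}\big(\bigcup_{i\in\mathcal{O}}\{q_i\le\zeta\}\big)$ by $\frac1N$ whenever $\zeta$ is chosen as in (\ref{eth}) — this is precisely the OIP$(\frac1N)$ guarantee — a contribution that is dominated by the inlier error in the regimes of interest and can be absorbed into it. For the first term I would use a union bound together with the exchangeability built into Assumption \ref{amain}: since the inlier points are sampled i.i.d.\ uniformly from $\mathcal{U}\cap\mathbb{S}^{n-1}$ and the outlier points i.i.d.\ uniformly from $\mathbb{S}^{n-1}$, the score $q_i=\min_{j\ne i}\phi_{ij}$ is identically distributed over $i\in\mathcal{I}$, hence
\[
\mathbb{P}\Big(\bigcup_{i\in\mathcal{I}}\{q_i>\zeta\}\Big)\ \le\ \sum_{i\in\mathcal{I}}\mathbb{P}(q_i>\zeta)\ =\ N_\mathcal{I}\,\mathbb{P}(q_{i,\,i\in\mathcal{I}}>\zeta).
\]
Combining the two bounds yields $\mathbb{P}(\hat{\mathcal{I}}=\mathcal{I})\ge 1-N_\mathcal{I}\,\mathbb{P}(q_{i,\,i\in\mathcal{I}}>\zeta)$, and the ERP claim then follows immediately from Definition \ref{derp} with $\alpha=N_\mathcal{I}\,\mathbb{P}(q_{i,\,i\in\mathcal{I}}>\zeta)$.

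There is no deep obstacle here; the argument is essentially bookkeeping layered on top of Theorem \ref{tthr}. The one point that needs care is the claim that $q_i$ is identically distributed across $i\in\mathcal{I}$: this follows from the symmetry of the sampling model, but one should keep in mind that $q_i$ is a minimum over a \emph{mixture} of inlier--inlier angles (distributed as in Lemma \ref{lthetai}) and inlier--outlier angles (distributed as in Lemma \ref{lthetao}), so $\mathbb{P}(q_{i,\,i\in\mathcal{I}}>\zeta)$ is a joint event rather than an elementary tail probability. Turning it into a closed-form expression in $n$, $r$, $N$ — via the quantity $p_\mathcal{I}$ and the auxiliary lemmas in Appendix \ref{app2} — is what the subsequent guarantees build on, but it is not needed for the bound asserted in this lemma.
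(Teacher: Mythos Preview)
Your approach is essentially the paper's: pass to the complement, union-bound over the inlier events, and use that the $q_i$ are identically distributed for $i\in\mathcal{I}$ under Assumption~\ref{amain}. The one difference is bookkeeping on the outlier side. The paper simply writes $\mathbb{P}(\hat{\mathcal{I}}=\mathcal{I})=\mathbb{P}\big(\bigcap_{i\in\mathcal{I}}\{q_i\le\zeta\}\big)$ and proceeds, silently identifying exact recovery with the inlier event alone (i.e.\ taking the OIP guarantee of Theorem~\ref{tthr}/Remark~\ref{roip} as given). You instead decompose honestly into inlier and outlier error and then claim the $\tfrac1N$ outlier contribution can be ``absorbed''; strictly speaking this yields $1-N_\mathcal{I}\,\mathbb{P}(q_{i,\,i\in\mathcal{I}}>\zeta)-\tfrac1N$, not the stated bound, so that absorption step is informal in the same way the paper's equality is. Either way the substance of the argument is identical and correct.
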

\begin{proof}
	Please refer appendix \ref{app2}  
\end{proof}
For proceeding further it is required to characterize the complementary cdf (ccdf) of $\underset{j\in \mathcal{I}, j\neq i}{\min} \phi_{ij} $ for $i \in \mathcal{I}$. Since the principal angles $\theta_{ij}$'s are only pairwise independent and not mutually independent as noted in \cite{cai2013distributions} and \cite{cai2012phase}, finding this ccdf analytically is mathematically very difficult. However one can find $\mathbb{P}({\underset{j\in \mathcal{I}, j\neq i}{\min} \phi_{ij} > \zeta})$ empirically through simulations to obtain more insight about inlier recovery properties of ROMA. An empirical calculation of ERP($\alpha$) can be seen in Table \ref{table5}.
\subsection{Impact of noise on the algorithm}
\begin{remark}
	The algorithm ROMA, retains OIP($\frac{1}{N}$) even in presence of Gaussian noise irrespective of noise variance. 
\end{remark}   
When Gaussian noise is added to an outlier data point, and the noisy outlier is normalized, it is just like selecting it at random from an $n$ dimensional hypersphere. Hence all the theory and bounds on the outlier score will not change. Noise will however affect the inlier identification of the algorithm as noise is bound to increase the statistic $q_i$ for $i \in \mathcal{I}$. The expected value of increase of the inlier angle can be seen through the following lemma.
\begin{lemma}\label{lnoise}
	If the data points are corrupted by additive Gaussian noise, i.e. $\mathbf{m}^{observed}_i = \mathbf{m}_i + \textbf{e}_i $,  $\textbf{e}_i \sim \mathcal{N}(\textbf{0},\sigma^2\textbf{I}_n)$, under assumption 1, the statistical properties of $\theta_{ij}$, $i$ or $ j \in \mathcal{O}$, are unaffected, however the inlier angles increase on an average and the average worst case change $\Delta\theta_{w.c}$ is bounded by $\Delta\theta_{w.c} \leq \cos^{-1}(1-\frac{1}{2\sqrt{snr}})$, where $snr = \dfrac{\|\mathbf{m}_i\|_2^2}{n\sigma^2}$.
\end{lemma}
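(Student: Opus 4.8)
The plan is to track how adding the noise vector $\mathbf{e}_i$ to an inlier point $\mathbf{m}_i$ perturbs its direction on the sphere, and then bound the resulting change in the principal angle $\theta_{ij}$. The statement about outliers being unaffected is immediate from Lemma~\ref{lthetao}: a noisy outlier, once renormalized, is still a uniformly random point on $\mathbb{S}^{n-1}$ (a uniform point perturbed by an independent Gaussian and renormalized has the same distribution), so its angles with all other points retain the pdf $h(\theta)$. Hence I would spend the proof on the inlier side. First I would fix an inlier $\mathbf{m}_i = \mathbf{m}_i^0$ (dropping the superscript) lying in $\mathcal{U}$, write $\mathbf{m}_i^{observed} = \mathbf{m}_i + \mathbf{e}_i$, and define the perturbation angle $\delta_i$ between $\mathbf{m}_i$ and $\mathbf{m}_i^{observed}$, i.e. $\cos\delta_i = \frac{\mathbf{m}_i^T(\mathbf{m}_i+\mathbf{e}_i)}{\|\mathbf{m}_i\|_2\,\|\mathbf{m}_i+\mathbf{e}_i\|_2}$. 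By the triangle inequality for the angular (geodesic) metric on $\mathbb{S}^{n-1}$, for any other point $\mathbf{x}_j$ we have $|\theta_{ij}^{observed} - \theta_{ij}| \le \delta_i$, so the worst-case change in any inlier angle is controlled by $\delta_i$, and it suffices to bound $\mathbb{E}[\delta_i]$ in the worst case.

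Next I would estimate $\cos\delta_i$. Decompose $\mathbf{e}_i = \mathbf{e}_i^{\parallel} + \mathbf{e}_i^{\perp}$ into components along and orthogonal to $\mathbf{m}_i$. Then $\mathbf{m}_i^T(\mathbf{m}_i+\mathbf{e}_i) = \|\mathbf{m}_i\|_2^2 + \mathbf{m}_i^T\mathbf{e}_i^{\parallel}$ and $\|\mathbf{m}_i+\mathbf{e}_i\|_2^2 = \|\mathbf{m}_i\|_2^2 + 2\mathbf{m}_i^T\mathbf{e}_i + \|\mathbf{e}_i\|_2^2$. Using $\mathbb{E}\|\mathbf{e}_i\|_2^2 = n\sigma^2$, $\mathbb{E}[\mathbf{m}_i^T\mathbf{e}_i]=0$, and writing $snr = \|\mathbf{m}_i\|_2^2/(n\sigma^2)$, the expected squared ratio of the perturbation magnitude to the signal magnitude is of order $1/snr$; a short computation (keeping the leading term and using concavity of $\cos^{-1}$ together with Jensen's inequality in the appropriate direction, plus the standard bound $\cos\delta \ge 1 - \tfrac12\tan\delta \cdot(\text{something})$, or more simply bounding $1-\cos\delta_i$ from above by roughly half the squared perturbation ratio) yields $\mathbb{E}[\cos\delta_i] \ge 1 - \frac{1}{2\sqrt{snr}}$ in the worst case over the sign/orientation of $\mathbf{e}_i$. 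Applying $\cos^{-1}$, which is monotone decreasing, and Jensen's inequality (it is convex on the relevant range) gives $\mathbb{E}[\delta_i] \le \cos^{-1}\!\big(1 - \frac{1}{2\sqrt{snr}}\big)$. Combining with the triangle-inequality bound from the previous step identifies $\Delta\theta_{w.c} := \sup_j \mathbb{E}[\,|\theta_{ij}^{observed}-\theta_{ij}|\,] \le \delta_i$ in expectation, which is the claimed bound.

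The main obstacle I anticipate is making the chain of inequalities relating $\mathbb{E}[\delta_i]$ to $1/\sqrt{snr}$ fully rigorous rather than heuristic: $\delta_i$ is a nonlinear function of the Gaussian vector $\mathbf{e}_i$, the normalization in the denominator of $\cos\delta_i$ couples numerator and denominator, and one must be careful about which direction Jensen's inequality goes at each stage (concavity of $x \mapsto \cos^{-1}(x)$ vs. convexity, and the fact that $\sqrt{\mathbb{E}[\cdot]}$ vs. $\mathbb{E}[\sqrt{\cdot}]$ appears when passing from the second moment $n\sigma^2$ of $\|\mathbf{e}_i\|_2$ to its first moment, which is exactly where the square root in $\sqrt{snr}$ — as opposed to $snr$ — enters). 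I would handle this by isolating the dominant term $\|\mathbf{e}_i^{\perp}\|_2/\|\mathbf{m}_i\|_2$, noting $\tan\delta_i \approx \|\mathbf{e}_i^{\perp}\|_2/\|\mathbf{m}_i\|_2$ to leading order, using $\mathbb{E}\|\mathbf{e}_i^{\perp}\|_2 \le \sqrt{\mathbb{E}\|\mathbf{e}_i^{\perp}\|_2^2} = \sigma\sqrt{n-1} \le \sigma\sqrt n$ so that $\mathbb{E}[\tan\delta_i] \lesssim 1/\sqrt{snr}$, and then converting to the stated $1-\cos$ form; the precise constant $\tfrac12$ comes out of the small-angle relation $1-\cos\delta \le \tfrac12(\tan\delta)^2$ combined with a worst-case (rather than typical) accounting, so I would flag that the bound is an average-worst-case statement exactly as the lemma phrases it. The full details are routine once this skeleton is fixed, so I would defer them to the appendix as the paper does for its other lemmas.
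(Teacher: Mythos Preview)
Your overall strategy --- bound the angular perturbation $\delta_i$ between $\mathbf{m}_i$ and $\mathbf{m}_i+\mathbf{e}_i$ and then propagate to $\theta_{ij}$ --- matches the paper's, but your execution diverges from it in two places and one of them is a real gap.

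First, the paper does not use the parallel/perpendicular split or pass through $\tan\delta_i$. It writes
\[
\cos\Delta\theta=\frac{\|\mathbf{m}\|_2^2+\mathbf{e}^T\mathbf{m}}{\|\mathbf{m}\|_2\|\mathbf{m}+\mathbf{e}\|_2},
\]
substitutes $\mathbf{e}^T\mathbf{m}=\tfrac12(\|\mathbf{m}+\mathbf{e}\|_2^2-\|\mathbf{m}\|_2^2-\|\mathbf{e}\|_2^2)$, and after factoring and applying the norm triangle inequality $\|\mathbf{m}+\mathbf{e}\|_2\le\|\mathbf{m}\|_2+\|\mathbf{e}\|_2$ obtains the \emph{linear} lower bound
\[
\cos\Delta\theta\;\ge\;\frac{\|\mathbf{m}\|_2-\|\mathbf{e}\|_2+\|\mathbf{m}+\mathbf{e}\|_2}{2\|\mathbf{m}\|_2}.
\]
Taking expectations and applying Jensen to $\|\mathbf{m}+\mathbf{e}\|_2$ (convexity of the norm, giving $\mathbb{E}\|\mathbf{m}+\mathbf{e}\|_2\ge\|\mathbf{m}\|_2$) and to $\|\mathbf{e}\|_2$ (concavity of the square root, giving $\mathbb{E}\|\mathbf{e}\|_2\le\sigma\sqrt{n}$) produces $\mathbb{E}[\cos\Delta\theta]\ge 1-\tfrac{1}{2\sqrt{snr}}$ with the constant $\tfrac12$ and the exponent $1/2$ falling out directly. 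Your proposed route through $1-\cos\delta\le\tfrac12(\tan\delta)^2$ does \emph{not} land on $1/\sqrt{snr}$: squaring makes that bound of order $1/snr$, so your sketch never reaches the stated form. The $\sqrt{snr}$ in the lemma arises precisely because the paper's algebraic bound is linear in $\|\mathbf{e}\|_2$ and uses a first moment, not a second.

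Second, you perturb only one endpoint: your triangle inequality reads $|\theta_{ij}^{observed}-\theta_{ij}|\le\delta_i$ as though $\mathbf{m}_j$ were noise-free. For two inliers both are perturbed, and the paper explicitly takes the worst-case change to be $2\,\mathbb{E}[\Delta\theta]$, arriving at $\Delta\theta_{w.c}\le 2\cos^{-1}(1-\tfrac{1}{2\sqrt{snr}})$; the factor of $2$ is present in the paper's proof even though the lemma statement omits it. Finally, your Jensen justification is miswritten: on the relevant range $(0,1)$, $\cos^{-1}$ is concave, not convex. The paper avoids this by applying Jensen to $\cos$ (concave on $[0,\pi/2]$) to get $\cos(\mathbb{E}[\Delta\theta])\ge\mathbb{E}[\cos\Delta\theta]$ and then inverting via monotonicity, which is the clean way to go.
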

\begin{proof}
	Please refer Appendix \ref{app2}
\end{proof}
This means that the inlier $q_i$ values would increase in worst case by $\Delta\theta_{w.c}$ and a number of them could fall above the threshold $\zeta$ depending on the noise variance and the rank of the inlier subspace and so would be classified as outliers. So in presence of noise, algorithm would recover less inliers than no noise case, as expected. For instance the conditions in Lemma \ref{lrough}, would change slightly as follows:
\begin{lemma}\label{lroughnoise}
Under Assumption 1 with added Gaussian noise in inliers, ROMA recovers a sizable amount of inliers when the following condition is satisfied:
	\begin{equation}
	r \leq \dfrac{2(n-2)}{\pi\Big(C_N+\sqrt{n-2}\cos^{-1}(1-\frac{1}{2\sqrt{snr}})\Big)^2}+2 
	\end{equation}
	\begin{proof}
		Please refer to appendix \ref{app2}
	\end{proof}
\end{lemma}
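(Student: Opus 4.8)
The plan is to re-run the argument behind Lemma \ref{lrough}, feeding in the average worst-case angle inflation from Lemma \ref{lnoise} as the only new ingredient. Recall that ROMA retains $\mathbf{x}_i$ as an inlier precisely when $q_i=\min_{j\neq i}\phi_{ij}\leq\zeta$, so a sizable fraction of inliers is recovered once a typical inlier has many other points within acute angle $\zeta$; the proxy used in Lemma \ref{lrough} is that the expected inlier--inlier acute angle not exceed the threshold, i.e. $\mathbb{E}[\phi_{ij}]\le\zeta$ for $i,j\in\mathcal{I}$. By Lemma \ref{lthetai}, $\theta_{ij}\approx\mathcal{N}(\tfrac{\pi}{2},\tfrac{1}{r-2})$, hence $\tfrac{\pi}{2}-\phi_{ij}=|\theta_{ij}-\tfrac{\pi}{2}|$ is approximately half-normal with mean $\sqrt{\tfrac{2}{\pi}}\,\tfrac{1}{\sqrt{r-2}}$; substituting $\zeta=\tfrac{\pi}{2}-\tfrac{C_N}{\sqrt{n-2}}$ into $\mathbb{E}[\phi_{ij}]\le\zeta$ and rearranging reproduces $r\le\frac{2(n-2)}{\pi C_N^2}+2$.

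I would then add noise. By Lemma \ref{lnoise}, corrupting the inliers with $\mathbf{e}_i\sim\mathcal{N}(\mathbf{0},\sigma^2\mathbf{I}_n)$ leaves all angles involving an outlier statistically unchanged but increases the inlier angle on average, the average worst-case increase being bounded by $\Delta\theta_{w.c}\le\cos^{-1}\!\big(1-\tfrac{1}{2\sqrt{snr}}\big)$ with $snr=\frac{\|\mathbf{m}_i\|_2^2}{n\sigma^2}$. Consequently the (expected, worst-case) inlier acute angle in the noisy data satisfies $\mathbb{E}[\phi_{ij}^{\mathrm{noisy}}]\le\mathbb{E}[\phi_{ij}]+\Delta\theta_{w.c}$, so the same recovery proxy now reads $\mathbb{E}[\phi_{ij}]+\Delta\theta_{w.c}\le\zeta$, i.e.
\begin{equation*}
\sqrt{\frac{2}{\pi}}\,\frac{1}{\sqrt{r-2}}\;\ge\;\frac{C_N}{\sqrt{n-2}}+\Delta\theta_{w.c}\;=\;\frac{C_N+\sqrt{n-2}\,\Delta\theta_{w.c}}{\sqrt{n-2}}.
\end{equation*}
Squaring both sides, using $\Delta\theta_{w.c}\le\cos^{-1}(1-\tfrac{1}{2\sqrt{snr}})$ (which only strengthens the requirement, so the resulting condition is sufficient), and solving for $r$ yields exactly the bound in the statement.

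The routine part is the algebra of the last two lines; the one genuinely delicate point is the same one already present in Lemma \ref{lrough} — justifying that ``expected per-pair acute angle below $\zeta$'' is an adequate surrogate for ``ROMA recovers a sizable number of inliers.'' I would dispatch it exactly as in the noiseless proof: the half-normal gap $\tfrac{\pi}{2}-\phi_{ij}$ has standard deviation of the same $O(1/\sqrt{r-2})$ order as its mean, so when the mean clears the margin $\tfrac{C_N}{\sqrt{n-2}}$ a constant fraction of the $N_\mathcal{I}-1$ candidate neighbours of a typical inlier do too, and pairwise independence of the $\theta_{ij}$'s (Lemma \ref{lthetai}, \cite{cai2013distributions}) converts this into the statement about $q_i$. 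Noise enters only through the deterministic shift $\Delta\theta_{w.c}$ supplied by Lemma \ref{lnoise}, so no probabilistic machinery beyond that of Lemma \ref{lrough} is required.
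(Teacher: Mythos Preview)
Your proposal is correct and follows essentially the same route as the paper: bound $\mathbb{E}[q_i]\le\mathbb{E}[\phi_{ij}]=\tfrac{\pi}{2}-\sqrt{\tfrac{2}{\pi(r-2)}}$ via the half-normal mean, add the noise shift $\Delta\theta_{w.c}$ from Lemma~\ref{lnoise}, impose the proxy $\mathbb{E}[\phi_{ij}]+\Delta\theta_{w.c}\le\zeta$, and solve for $r$. Your additional paragraph justifying the ``sizable recovery'' proxy via pairwise independence is more explicit than what the paper does (it simply asserts the criterion $\mathbb{E}[q_i]\le\zeta$), but the core argument is identical.
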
	
\subsection{Theoretical analysis of stage 2}\label{sna}
This subsection has analysis of ROMA\_N under the structured outliers assumption, Assumption 2.  Since ROMA can filter out unstructured outliers, stage 2 input comprises of only structured outliers, the number of them denoted by $N_{\mathcal{O}}^s$. We will first state a theorem\footnote{Though this theorem holds for any outliers following either assumption 1 or 2, here we focus on the structured case.} which characterizes the behavior of the statistic $na_i^\zeta$. 
\begin{theorem}\label{tna}
	When $na_i^\zeta$ is defined by (\ref{ena}), $\forall i \in \mathcal{I}$, $na_i^{\zeta}\geq N_{\mathcal{O}}^s$ and $\forall i \in \mathcal{O}$, $na_i^{\zeta} \geq N_{\mathcal{I}}$ both $w.p \geq 1-\dfrac{N_{\mathcal{O}}^sN_{\mathcal{I}}}{N^2(N-1)}$.
\end{theorem}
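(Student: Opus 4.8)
The plan is to reduce both assertions to a single union bound over the inlier--outlier angle pairs. The starting observation is that, because $\phi_{ij}=\phi_{ji}$, for an inlier index $i$ the count $na_i^\zeta$ includes every cross angle $\phi_{ij}$, $j\in\mathcal{O}$, that exceeds $\zeta$, while for an outlier index $i$ it includes every cross angle $\phi_{ij}$, $j\in\mathcal{I}$, that exceeds $\zeta$. So I would introduce the good event
\[ E=\{\,\phi_{ij}>\zeta\ \text{ for all }\ i\in\mathcal{I},\ j\in\mathcal{O}\,\}. \]
On $E$, each inlier has all of its $N_{\mathcal{O}}^s$ cross angles above $\zeta$, hence $na_i^\zeta\geq N_{\mathcal{O}}^s$ for every $i\in\mathcal{I}$; and each outlier has all of its $N_{\mathcal{I}}$ cross angles above $\zeta$, hence $na_i^\zeta\geq N_{\mathcal{I}}$ for every $i\in\mathcal{O}$. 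Therefore it suffices to lower bound $\mathbb{P}(E)$.

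The next step is to bound $\mathbb{P}(E^c)$ by a union bound over the $N_{\mathcal{I}}N_{\mathcal{O}}^s$ inlier--outlier pairs. For any such pair, Lemma \ref{lphilower} applies --- case (a) when the outliers obey Assumption \ref{amain}, case (b) when they obey Assumption \ref{a2} --- and, choosing $c=C_N$, it gives $\mathbb{P}(\phi_{ij}>\zeta)=2F_{\mathcal{N}}(C_N)-1$. Using the defining identity $F_{\mathcal{N}}(C_N)=1-\frac{1}{2N^2(N-1)}$, this becomes $\mathbb{P}(\phi_{ij}\leq\zeta)=2\big(1-F_{\mathcal{N}}(C_N)\big)=\frac{1}{N^2(N-1)}$. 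Summing over all $N_{\mathcal{I}}N_{\mathcal{O}}^s$ cross pairs yields $\mathbb{P}(E^c)\leq\frac{N_{\mathcal{O}}^sN_{\mathcal{I}}}{N^2(N-1)}$, and so both claimed events hold simultaneously with probability at least $1-\frac{N_{\mathcal{O}}^sN_{\mathcal{I}}}{N^2(N-1)}$, which is the assertion.

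This is essentially a clean union-bound argument, so I do not anticipate a serious obstacle; the two points requiring care are (i) recognizing that only the inlier--outlier angles, and not the outlier--outlier angles, may be invoked --- under Assumption \ref{a2} the structured outliers are deliberately clustered, so their mutual acute angles can fall below $\zeta$, which is exactly why the outlier bound reads $na_i^\zeta\geq N_{\mathcal{I}}$ rather than $N-1$ --- and (ii) applying Lemma \ref{lphilower} with $c=C_N$ while correctly tracking the factor $2$ when passing from the one-sided bound to $\mathbb{P}(\phi_{ij}\leq\zeta)$. A marginally sharper constant is available by noting that the inlier-side events and the outlier-side events are driven by the same set of cross pairs and so need not be union-bounded separately, but the stated bound already follows from the plain union bound and stays in keeping with the union-bound style used in Theorem \ref{tthr}.
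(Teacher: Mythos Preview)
Your proposal is correct and takes essentially the same approach as the paper: a union bound over all inlier--outlier angle pairs, each controlled via Lemma~\ref{lphilower} with $c=C_N$ so that $\mathbb{P}(\phi_{ij}\le\zeta)=\frac{1}{N^2(N-1)}$. The paper organizes the same computation as two nested union bounds (first over $j$ for fixed $i$, then over $i$) and states the inlier and outlier claims separately, whereas you unify both under the single event $E$; this is a cosmetic difference and yields the identical bound.
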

\begin{proof}
	For any $i \in \mathcal{O}$, $j \in \mathcal{I}$, we know from Lemma \ref{lphilower}, that 
	\begin{equation*}
	\phi_{ij} > \dfrac{\pi}{2} - \dfrac{c}{\sqrt{n-2}} \hskip40pt w.p\text{	} 2F_{\mathcal{N}}(c)-1
	\end{equation*}
	Looking at the value for $na_i^{\zeta}$, when $i \in \mathcal{O}$, if $\phi_{ij} \forall j \in \mathcal{I}$ is greater than $\zeta$, then $na_i^{\zeta}\geq N_{\mathcal{I}}$. Hence for any $i \in \mathcal{O}$,
	\begin{align*}
	\mathbb{P}(na_i^{\zeta}\geq N_{\mathcal{I}}) &\geq \mathbb{P}(\bigcap_{j \in \mathcal{I}}\phi_{ij}>\zeta)\\
	&=1-\mathbb{P}(\bigcup_{j \in \mathcal{I}}\phi_{ij}\leq \zeta)\\
	&\geq 1-N_{\mathcal{I}}\mathbb{P}(\phi_{ij}\leq \zeta)
	\end{align*} 
	The third step is by the identical nature of distributions and union bound.  Using the value of $\zeta$, We know $\mathbb{P}(\phi_{ij}\leq \zeta)  = 2\Big(1-F_\mathcal{N}\Big(F_{\mathcal{N}}^{-1}\Big(1- \frac{1}{2N^2(N-1)}\Big)\Big)\Big)$, which when simplified, gives us the below result:
	\begin{align*}
	\mathbb{P}(na_i^{\zeta}\geq N_{\mathcal{I}}) &\geq 1- \dfrac{N_{\mathcal{I}}}{N^2(N-1)}\\
	\Rightarrow \mathbb{P}(na_i^{\zeta}< N_{\mathcal{I}}) &\leq\dfrac{N_{\mathcal{I}}}{N^2(N-1)}\hskip10pt \text{for any $i \in \mathcal{O}$}
	\end{align*}
	Hence the probability that $\forall i \in \mathcal{O}$, $\mathbb{P}(na_i^{\zeta}\geq N_{\mathcal{I}})$ is given by:
	\begin{align*}
	\mathbb{P}(na_i^{\zeta}\geq N_{\mathcal{I}}) &= 1 - \mathbb{P}(\bigcup_{i \in \mathcal{O}} na_i^{\zeta}< N_{\mathcal{I}}) \\
	&\geq 1-N_{\mathcal{O}}^s\mathbb{P}(na_i^{\zeta}< N_{\mathcal{I}}) \\
	&\geq 1-\dfrac{N_{\mathcal{O}}^sN_{\mathcal{I}}}{N^2(N-1)}
	\end{align*}
	The result for $i \in \mathcal{I}$ case is exactly the same with inlier replaced by outlier and we arrive the above result. Hence we get the following statement.
	\begin{equation}
	\begin{aligned}
	\forall i \in \mathcal{O}, \mathbb{P}(na_i^{\zeta}\geq N_{\mathcal{I}})&\geq 1-\dfrac{N_{\mathcal{O}}^sN_{\mathcal{I}}}{N^2(N-1)} \\
	\forall i \in \mathcal{I}, \mathbb{P}(na_i^{\zeta}\geq N_{\mathcal{O}}^s)&\geq 1-\dfrac{N_{\mathcal{O}}^sN_{\mathcal{I}}}{N^2(N-1)} 
	\end{aligned}
	\end{equation}
\end{proof}
This is the crux behind the working of stage 2. Stage 2 is dependent on distinction between $na_i^{\zeta}$ value of an inlier and an outlier. It is effective for identifying structured outliers, when there is distinction between $N_{\mathcal{I}}$ and $N_{\mathcal{O}}^s$. In the algorithm, we find two cluster heads such that these will not belong to the same cluster, and look for the closeness of $na_i^{\zeta}$ values of other points with these cluster heads. This is done by choosing the first reference  point as the one which forms the minimum angle in the system and the other being the point which has the maximum angle with the first reference point. The point which forms the minimum angle is assumed to be that of the inlier, which is based on the reasoning that ``inliers" as the name suggests are more closely bunched with each other. Whenever this is not the case, the clusters are reversed, i.e. $\hat{\mathcal{I}}_{op}$ becomes the outlier estimate and $\hat{\mathcal{O}}_{op}$ becomes the inlier estimate. Even in this case inliers and outliers are separated, however to identify the cluster corresponding to $\mathcal{I}$, one may use rank of each cluster as the criterion. The following results give us conditions when the algorithm, with high probability, can cluster the points efficiently into inlier and outlier set. 
\begin{lemma}\label{tsoui}
	Suppose outliers follow Assumption 2, such that $\theta_{max}^{\mathcal{O}} < \zeta$  and inliers follow Assumption 1, the following can be said about Algorithm 2. 
	 \begin{itemize}
	 	\item [a)] One of the output clusters of algorithm 2 contains only inliers $w.p \geq 1-\frac{N_{\mathcal{O}}^sN_{\mathcal{I}}}{N^2(N-1)}$.
	 	\item [b)] Further when $N_{\mathcal{I}} > N_{\mathcal{O}}^s$, if $N_{\mathcal{I}} - N_{\mathcal{O}}^s = \delta N$, the inlier cluster contains a sizable set of inliers when $\delta N>2(N_{\mathcal{I}}-1)(2F_{\mathcal{N}}(C_N\sqrt{\frac{r-2}{n-2}})-1)$.
	 	\item[c)] When $N_{\mathcal{I}} < N_{\mathcal{O}}^s$, Algorithm 2 clusters the points into inliers and outliers exactly $w.p \geq 1-\frac{2N_{\mathcal{O}}^sN_{\mathcal{I}}}{N^2(N-1)}$. 
	 \end{itemize} 
\end{lemma}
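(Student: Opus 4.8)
The plan is to run the whole argument on the high–probability event $E$ supplied by Theorem~\ref{tna}, on which simultaneously $na_i^{\zeta}\ge N_{\mathcal{I}}$ for every $i\in\mathcal{O}$ and $na_i^{\zeta}\ge N_{\mathcal{O}}^s$ for every $i\in\mathcal{I}$; this event has probability at least $1-\tfrac{N_{\mathcal{O}}^sN_{\mathcal{I}}}{N^2(N-1)}$. The first observation is that the hypothesis $\theta_{max}^{\mathcal{O}}<\zeta$ turns the first inequality into an identity: for a structured outlier $i$ none of the $N_{\mathcal{O}}^s-1$ acute angles with the other outliers can exceed $\zeta$ (they are all at most $\theta_{max}^{\mathcal{O}}$), while on $E$ all $N_{\mathcal{I}}$ angles with the inliers do, so $na_i^{\zeta}=N_{\mathcal{I}}$ for \emph{every} outlier. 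Hence, on $E$, all outlier scores collapse to the single value $N_{\mathcal{I}}$ while every inlier score lies in $\{N_{\mathcal{O}}^s,\dots,N-1\}$, and Algorithm~2 becomes a deterministic one–dimensional assignment of these scores to the two centres $na_{i^*}^{\zeta}$, $na_{o^*}^{\zeta}$ with decision boundary at their midpoint.

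For part (a) I would observe that, since on $E$ every outlier carries the identical score $N_{\mathcal{I}}$, all outliers fall on the same side of (or exactly at) the midpoint, so Algorithm~2 places them all in a single output cluster — the ties, which arise only when the midpoint or one of the centres equals $N_{\mathcal{I}}$, are all broken towards $\hat{\mathcal{I}}_{op}$ by the ``$\le$'' in step~5 and still send every outlier to one cluster — and the complementary cluster then contains only inliers. This yields (a) with exactly the probability $1-\tfrac{N_{\mathcal{O}}^sN_{\mathcal{I}}}{N^2(N-1)}$ of $E$.

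For part (b) I would quantify the inlier side. Writing $na_i^{\zeta}=N_{\mathcal{O}}^s+Y_i$ with $Y_i=|\{j\in\mathcal{I},j\ne i:\phi_{ij}>\zeta\}|$ and invoking Lemma~\ref{lthetai} — which gives $\mathbb{P}(\phi_{ij}>\zeta)=p_{\mathcal{I}}=2F_{\mathcal{N}}\!\big(C_N\sqrt{\tfrac{r-2}{n-2}}\big)-1$ for $i,j\in\mathcal{I}$ with $\zeta$ as in~(\ref{eth}) — one gets $\mathbb{E}[Y_i]=(N_{\mathcal{I}}-1)p_{\mathcal{I}}$, i.e.\ a typical inlier score near $N_{\mathcal{O}}^s+(N_{\mathcal{I}}-1)p_{\mathcal{I}}$. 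With $N_{\mathcal{I}}>N_{\mathcal{O}}^s$ the informative configuration is that one centre is such an inlier score and the other is $N_{\mathcal{I}}$; the stated hypothesis $\delta N>2(N_{\mathcal{I}}-1)p_{\mathcal{I}}$ is precisely what forces the midpoint of these two centres to exceed the mean inlier score by a positive margin comparable to $(N_{\mathcal{I}}-1)p_{\mathcal{I}}$. A Markov bound on $Y_i$ (mean $(N_{\mathcal{I}}-1)p_{\mathcal{I}}$) then shows that a constant fraction of the inliers have scores below the boundary and are assigned to the inlier head, delivering the claimed ``sizable'' inlier cluster; the sub-case in which both centres happen to be inlier scores is even easier, since the outliers then all join the lower-scored centre and roughly half the inliers remain in the complementary, outlier-free cluster.

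Part (c) is where the real work lies. When $N_{\mathcal{I}}<N_{\mathcal{O}}^s$, event $E$ produces a \emph{strict} gap — outlier scores are all exactly $N_{\mathcal{I}}$ and inlier scores are all at least $N_{\mathcal{O}}^s>N_{\mathcal{I}}$ — so it suffices to show that one centre is an outlier score and the other an inlier score, since then the decision boundary falls strictly between $N_{\mathcal{I}}$ and $N_{\mathcal{O}}^s$ and the two clusters are exactly $\mathcal{O}$ and $\mathcal{I}$ (up to the label swap noted after Algorithm~2). The key point is that if $i^*$ is an outlier then, on $E$, $\phi_{i^*j}\le\theta_{max}^{\mathcal{O}}<\zeta<\phi_{i^*k}$ for $j\in\mathcal{O}$, $k\in\mathcal{I}$, so $o^*$ is forced to be an inlier — the desired configuration; the only remaining failure mode is that the globally smallest angle is realised by a pair of inliers and, in addition, the point farthest from that inlier is again an inlier. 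I would bound the probability of this residual bad configuration and add it, via a union bound, to $\mathbb{P}(\bar E)$, which is what produces the extra $\tfrac{N_{\mathcal{O}}^sN_{\mathcal{I}}}{N^2(N-1)}$ and hence the stated $1-\tfrac{2N_{\mathcal{O}}^sN_{\mathcal{I}}}{N^2(N-1)}$. The main obstacle I anticipate is exactly this head-identification step: $i^*$ and $o^*$ are selected by extreme \emph{angles}, not by the $na^{\zeta}$ statistic, so one must translate ``which pair realises the minimum angle'' and ``which point is farthest from it'' into statements about the inlier/outlier partition using only the pairwise (not mutual) independence of the $\phi_{ij}$'s noted in \cite{cai2013distributions}; this is also the reason the ``sizable'' quantification in (b) is naturally phrased through crude Markov-type bounds rather than sharp concentration.
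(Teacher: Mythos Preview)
Your approach for parts (a) and (b) matches the paper's almost exactly: on the event from Theorem~\ref{tna} the hypothesis $\theta_{max}^{\mathcal{O}}<\zeta$ collapses every outlier score to $N_{\mathcal{I}}$, forcing all outliers into one cluster; for (b) you compute $\mathbb{E}[na_i^{\zeta}]=N_{\mathcal{O}}^s+(N_{\mathcal{I}}-1)p_{\mathcal{I}}$ for inliers and compare it to the midpoint $N_{\mathcal{O}}^s+\tfrac{\delta N}{2}$, which is precisely the paper's argument (the paper stops at this expectation comparison and does not invoke Markov, so your treatment is if anything slightly more detailed, but the ``sizable'' conclusion is heuristic in both).

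For part (c) your argument diverges from the paper's. You correctly flag head identification --- whether $i^*$ and $o^*$ land in different classes --- as a potential obstacle, and you attribute the second $\tfrac{N_{\mathcal{O}}^sN_{\mathcal{I}}}{N^2(N-1)}$ to bounding the probability that both heads are inliers. The paper does \emph{not} address head selection at all: its factor of~2 comes simply from treating the two conclusions of Theorem~\ref{tna} (the $\mathcal{O}$-side and the $\mathcal{I}$-side) as two separate events and union-bounding them. On their intersection, outlier scores equal $N_{\mathcal{I}}$ and every inlier score exceeds $N_{\mathcal{O}}^s>N_{\mathcal{I}}$, and the paper then asserts that any inlier score is closer to another inlier score than to $N_{\mathcal{I}}$ --- tacitly assuming the two heads lie in different classes. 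So your scrutiny of the head-selection step is more careful than the paper, but your accounting of where the extra $\tfrac{N_{\mathcal{O}}^sN_{\mathcal{I}}}{N^2(N-1)}$ comes from does not match the paper's derivation, and you have not actually shown that the head-failure event obeys that bound.
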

\begin{proof}
	Please refer Appendix \ref{app4}
\end{proof}
Now we will look at structured inliers, lets use the following assumption similar to the one used for structured outliers:
\begin{Assumption}\label{a3}
	The normalized structured inlier set is a subset of points sampled from points distributed uniformly on the intersection of $\mathcal{U}$ and $\mathbb{S}^{n-1}$ such that the maximum principal angle in the inlier set  is bounded between $[\theta_{min}^{\mathcal{I}},\theta_{max}^{\mathcal{I}}]$ where $\theta_{max}^{\mathcal{I}} <\frac{\pi}{2}$. It can be defined as $\mathbf{X}_\mathcal{O} = \Big\{\mathbf{x}_1,\mathbf{x}_2,...\mathbf{x}_{N_{\mathcal{O}}^s}\text{	}|\text{	}\mathbf{x}_i \in \mathcal{U}\cap\mathbb{S}^{n-1}\text{	} \forall i,\text{	} \theta_{ij} \in [\theta_{min}^{\mathcal{I}}, \theta_{max}^{\mathcal{I}}] \text{	}\forall i,j \in \mathcal{I}, i\neq j \Big\}$.
\end{Assumption}
The below result talks of the performance of the algorithm in this scenario.
\begin{lemma}\label{tsosi}
	Suppose outliers follow Assumption 2, such that $\theta_{max}^{\mathcal{O}} < \zeta$  and inliers follow Assumption 1 such that $\theta_{max}^{\mathcal{I}} < \zeta$, the following can be said about Algorithm 2 whenever $N_\mathcal{I} \neq N_{\mathcal{O}}^s$. 
	\begin{itemize}
		\item [a)] Algorithm 2 clusters the points into inliers and outliers exactly $w.p \geq 1-\frac{2N_{\mathcal{O}}^sN_{\mathcal{I}}}{N^2(N-1)}$,  
		\item [b)] Further the cluster outputs $\hat{\mathcal{I}}_{op} = \mathcal{I}$ and $\hat{\mathcal{O}}_{op} = \mathcal{O}$, when $\theta_{max}^{\mathcal{I}}  < \theta_{min}^{\mathcal{O}}$.
	\end{itemize} 
\end{lemma}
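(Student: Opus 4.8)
The plan is to confine all the randomness to the cross-class angles, use the two structure bounds $\theta_{max}^{\mathcal I}<\zeta$ and $\theta_{max}^{\mathcal O}<\zeta$ to make the within-class contribution to $na_i^{\zeta}$ deterministic, and then read the clustering of Algorithm 2 directly off the two resulting score levels. Throughout I assume $N_{\mathcal I},N_{\mathcal O}^s\ge 2$ so that like pairs exist and the argmins in Algorithm 2 are well defined.

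The first step is to note that every point survives Stage 1: for an inlier $i$ there is a like point $j$ with $\phi_{ij}=\theta_{ij}\le\theta_{max}^{\mathcal I}<\zeta$, so $q_i<\zeta$, and for a structured outlier $i$ there is a like point $j$ with $\phi_{ij}\le\theta_{max}^{\mathcal O}<\zeta$, so again $q_i<\zeta$; hence ROMA returns all $N$ points and Stage 2 runs on the full data set. Next, a like pair (both inliers or both outliers) never contributes to $na_i^{\zeta}$ in (\ref{ena}), since its acute angle is below $\zeta$, so the only contributions are from cross pairs. For a cross pair $i\in\mathcal O$, $j\in\mathcal I$, Lemma \ref{lphilower} with $c=C_N$ together with (\ref{eth}) gives $\mathbb P(\phi_{ij}\le\zeta)=2\big(1-F_{\mathcal N}(C_N)\big)=\frac{1}{N^2(N-1)}$. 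Let $E$ be the event that $\phi_{ij}>\zeta$ for all $N_{\mathcal I}N_{\mathcal O}^s$ cross pairs; a union bound gives $\mathbb P(E)\ge 1-\frac{N_{\mathcal I}N_{\mathcal O}^s}{N^2(N-1)}$, which is at least the $1-\frac{2N_{\mathcal O}^sN_{\mathcal I}}{N^2(N-1)}$ claimed in the lemma (equivalently, one may quote the two halves of Theorem \ref{tna} and combine them with the deterministic within-class bounds). On $E$ every inlier has $na_i^{\zeta}=N_{\mathcal O}^s$ and every outlier has $na_i^{\zeta}=N_{\mathcal I}$, so the scores take exactly the two distinct values $N_{\mathcal O}^s\neq N_{\mathcal I}$.

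For part (a): on $E$ the globally smallest acute angle is attained by a like pair (cross angles exceed $\zeta$ while like angles are below $\zeta$), so $i^{*}$ lies in one of the two classes; its partner $o^{*}$, the point forming the largest acute angle with $i^{*}$, must lie in the opposite class, because for $j$ in $i^{*}$'s own class $\phi_{i^{*}j}<\zeta$ while for $j$ in the other class $\phi_{i^{*}j}>\zeta$ on $E$. Hence $na_{i^{*}}^{\zeta}$ and $na_{o^{*}}^{\zeta}$ are the two distinct levels $N_{\mathcal O}^s$ and $N_{\mathcal I}$, one each. A point whose score equals $na_{i^{*}}^{\zeta}$ has $abs$-distance $0$ to $na_{i^{*}}^{\zeta}$ and positive distance to $na_{o^{*}}^{\zeta}$, so Algorithm 2 places it in $\hat{\mathcal I}_{op}$, while a point with the other score goes to $\hat{\mathcal O}_{op}$; thus the two output clusters are exactly $\mathcal I$ and $\mathcal O$ as sets (the labels swapped iff $i^{*}$ turned out to be an outlier), which is statement (a). For part (b): if in addition $\theta_{max}^{\mathcal I}<\theta_{min}^{\mathcal O}$, then every within-inlier angle is $\le\theta_{max}^{\mathcal I}<\theta_{min}^{\mathcal O}\le$ every within-outlier angle, and both are below $\zeta$ and hence below every cross angle on $E$, so the global minimum angle is a within-inlier pair, $i^{*}\in\mathcal I$, $na_{i^{*}}^{\zeta}=N_{\mathcal O}^s$ and $na_{o^{*}}^{\zeta}=N_{\mathcal I}$. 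Inliers (score $N_{\mathcal O}^s$) then go to $\hat{\mathcal I}_{op}$ and outliers (score $N_{\mathcal I}$) to $\hat{\mathcal O}_{op}$, i.e. $\hat{\mathcal I}_{op}=\mathcal I$ and $\hat{\mathcal O}_{op}=\mathcal O$.

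The one step that is not routine is verifying that $o^{*}$ always lies in the class opposite to $i^{*}$, so that the two cluster heads realise the two distinct score levels; this is precisely what excludes the degenerate case $na_{i^{*}}^{\zeta}=na_{o^{*}}^{\zeta}$, in which the tie-break of Algorithm 2 would dump every point into $\hat{\mathcal I}_{op}$. Once that is in hand the rest is bookkeeping: the probability estimate is the union bound over the $N_{\mathcal I}N_{\mathcal O}^s$ cross pairs supplied by Lemma \ref{lphilower}/Theorem \ref{tna}, and the set identities follow from comparing the two score levels.
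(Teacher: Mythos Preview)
Your proof is correct and follows essentially the same route as the paper's: both use the structural bounds $\theta_{max}^{\mathcal I},\theta_{max}^{\mathcal O}<\zeta$ to make the within-class contribution to $na_i^{\zeta}$ vanish, reduce the scores to the two levels $N_{\mathcal O}^s$ and $N_{\mathcal I}$ on the high-probability event that all cross-class angles exceed $\zeta$, and then read off the clustering. You supply detail the paper leaves implicit---Stage~1 survival and the verification that $o^{*}$ lands in the class opposite $i^{*}$---and your single-event union bound over the $N_{\mathcal I}N_{\mathcal O}^s$ cross pairs in fact gives the slightly sharper $1-\frac{N_{\mathcal I}N_{\mathcal O}^s}{N^2(N-1)}$, whereas the paper combines the two halves of Theorem~\ref{tna} separately to arrive at the stated $1-\frac{2N_{\mathcal O}^sN_{\mathcal I}}{N^2(N-1)}$.
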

\begin{proof}
	Please refer Appendix \ref{app4}
\end{proof}
This lemma says that the algorithm can separate inliers and structured outliers with the designated inlier estimate being the true inlier set with a high probability when the inliers are more clustered than outliers. We can now state the following remark on the properties of Algorithm 2:
\begin{remark}\label{ralgo2}
	When outliers follow Assumption 2 and inliers follow Assumption 1 such that $\theta_{max}^{\mathcal{I}} < \theta_{min}^{\mathcal{O}} < \theta_{max}^{\mathcal{O}} \leq \zeta$, algorithm 2 has EIP\Big($\frac{2N_{\mathcal{O}}^sN_{\mathcal{I}}}{N^2(N-1)}$\Big).
\end{remark}
This is obvious from Lemma \ref{tsosi} and the definition of ERP. Also this means the algorithm has OIP\Big($\frac{2N_{\mathcal{O}}^sN_{\mathcal{I}}}{N^2(N-1)}$\Big). Through simulations, we note that the algorithm can separate the inliers and structured outliers even in cases when $\theta_{max}^{\mathcal{O}}> \zeta$. As stated earlier if we had known $N_{\mathcal{I}}$ a priori, which is not practical, we could have used it as a threshold on $na_{i}^{\zeta}$, and such an algorithm would have OIP\Big($\frac{2N_{\mathcal{O}}^sN_{\mathcal{I}}}{N^2(N-1)}$\Big) in any case.
\begin{table}[t]
	\caption{Empirical $\alpha$ values for ERP and OIP for $n=100,r=10,N=1000$ }
	\label{table5}
	\begin{tabularx}{\linewidth}{@{}l*{10}{C}c@{}}
		\hline
		&\multicolumn{3}{c|}{$SNR= 20 dB$}&\multicolumn{3}{c}{$SNR = 10 dB$}\\
		\multicolumn{1}{r}{$\gamma$}&\multicolumn{1}{c}{0.15}&\multicolumn{1}{c}{0.55}&\multicolumn{1}{c|}{0.95}&\multicolumn{1}{c}{0.15}&\multicolumn{1}{c}{0.55}&\multicolumn{1}{c}{0.95} \\
		\hline
		OIP(.)&$0$&$0.001$&\multicolumn{1}{c|}{$0$}&$0$&$0$&$0$\\
		ERP(.)&$0$&$0.001$&\multicolumn{1}{c|}{$.895$}&$.202$&$.298$&$0.99$\\
		\hline
	\end{tabularx}
\end{table}
\section{Numerical Simulations}\label{snumsim}
In this section, we present the simulation results of the proposed method. Here we demonstrate the properties of the proposed algorithm in terms of inlier identification and subspace recovery on synthetic data. We also compare our method with some of the existing algorithms for robust PCA, in terms of running time of the algorithm, percentage of inliers recovered and log recovery error ($LRE$) of the estimated subspace, which is defined as in \cite{rahmani2016coherence}, i.e.
\begin{equation}\label{elre}
LRE =  log_{10}(\dfrac{\|\textbf{U} - \hat{\textbf{U}}\hat{\textbf{U}}^T\textbf{U}\|_F}{\|\textbf{U}\|_F}), 
\end{equation}
where $\textbf{U}$ is basis of the true inlier subspace and $\hat{\textbf{U}}$ is the estimated basis from the algorithms. All our experiments on from subsections B and C assumes the data model described in this paper under Assumption \ref{amain}. Structured outliers are shown in subsection D. $SNR$ in $dB$ is used as a measure of noise level for noisy inliers, where the inliers points $\textbf{m}^{observed}_i = \textbf{m}_i +\textbf{e}_i, \textbf{e}_i \sim \mathcal{N}(\textbf{0},\sigma^2\textbf{I}_n)$ and $\sigma = \|\textbf{M}\|_{F}/(10^{SNR/20}\sqrt{nN})$. 
 \subsection{Validation of bounds}\label{svalid}
 Here we validate the bound in Theorem \ref{tthr} through Fig. \ref{fbvalid} where the minimum outlier $q_i$ value is plotted over a 1000 trials against a range of outlier fraction $\gamma$ from $0.05$ to $0.95$ and $r=10$ in two cases of $N=1000, n=100, SNR = 20 dB$ and $N=500, n=300, SNR= 5 dB$. Empirically the $\alpha$ value for OIP and ERP were computed for two cases of $SNR$ for $n=100,r=10,N=1000$ and the results are given in Table \ref{table5}.  As seen in the table, ROMA retains OIP($0.001$) in any SNR or $\gamma$, while it has ERP($0.001$) in high SNR and at low $\gamma$, this deteriorates at lower SNR and higher $\gamma$. Also in Fig. \ref{fbvalid2} we validate the bounds in Theorem \ref{tna}, on the $na_i^{\zeta}$ value for inliers and outliers with their respective bounds $N_{\mathcal{O}}^s$ and $N_{\mathcal{I}}$, where inliers were randomly chosen as in Assumption 1 and outliers are structured.
 \begin{figure}[h]
 	\begin{subfigure}[b]{0.23\textwidth}
 		\includegraphics[width=\linewidth, height=6cm]{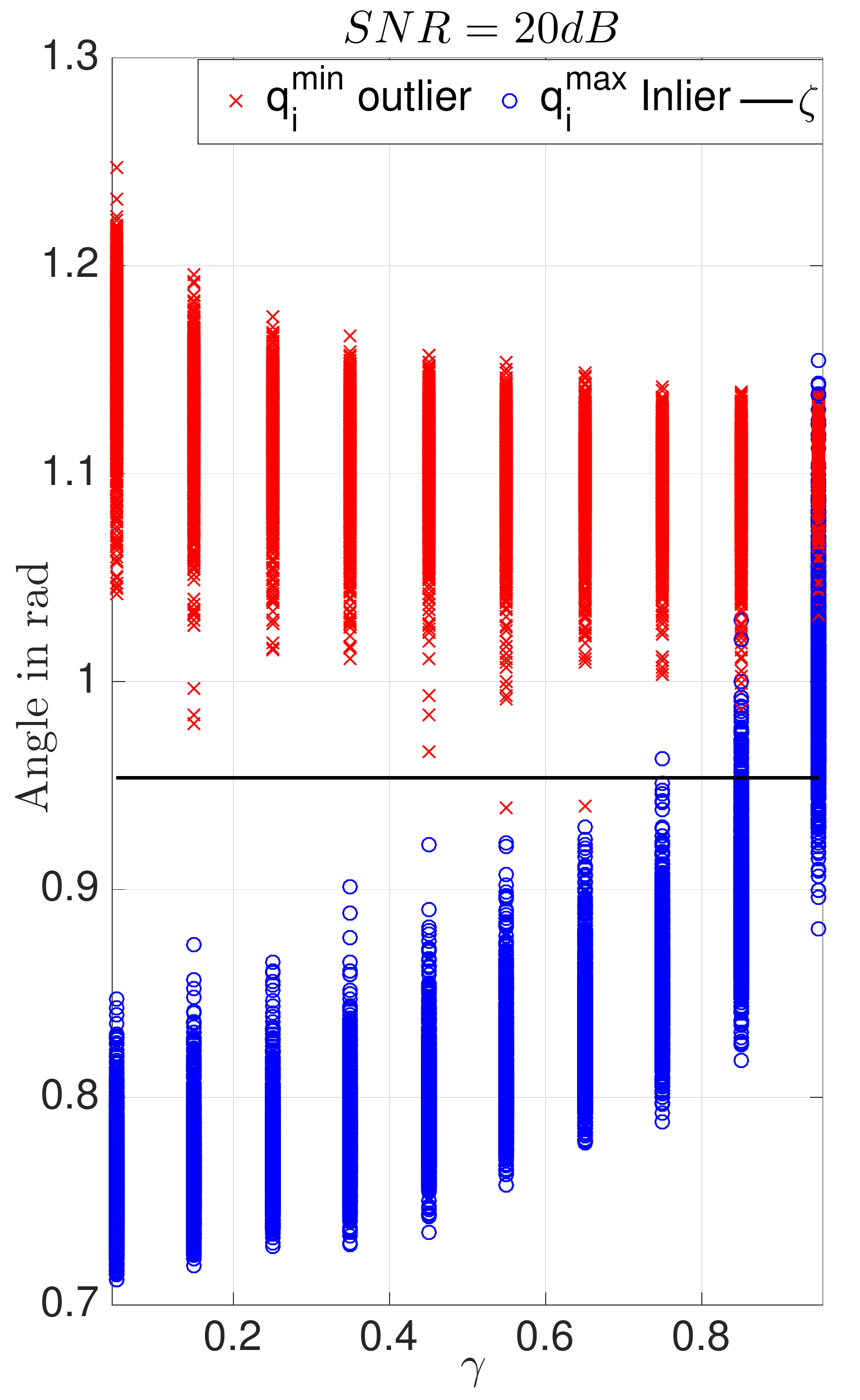}
 		\caption{n=100, r=10, N=1000}
 		\label{fval1}
 	\end{subfigure}
 	\begin{subfigure}[b]{0.23\textwidth}
 		\includegraphics[width=\linewidth, height=6cm]{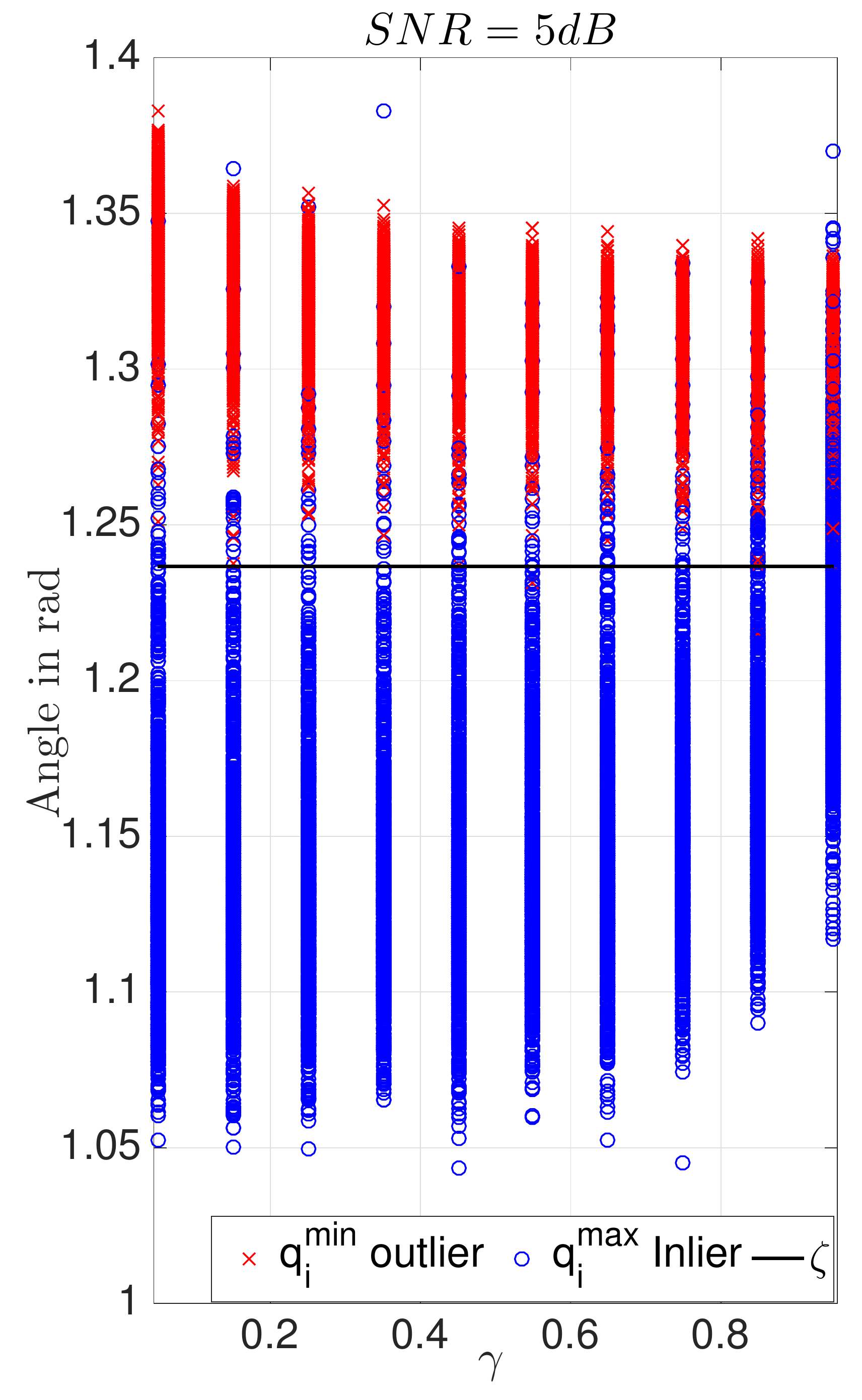}
 		\caption{n=300, r=10, N=500}
 		\label{fval2}
 	\end{subfigure}
 	\caption{Validation of $\zeta$} 
 	\label{fbvalid}
 \end{figure}
 \begin{figure}[h]
		\includegraphics[width=\linewidth, height=4cm]{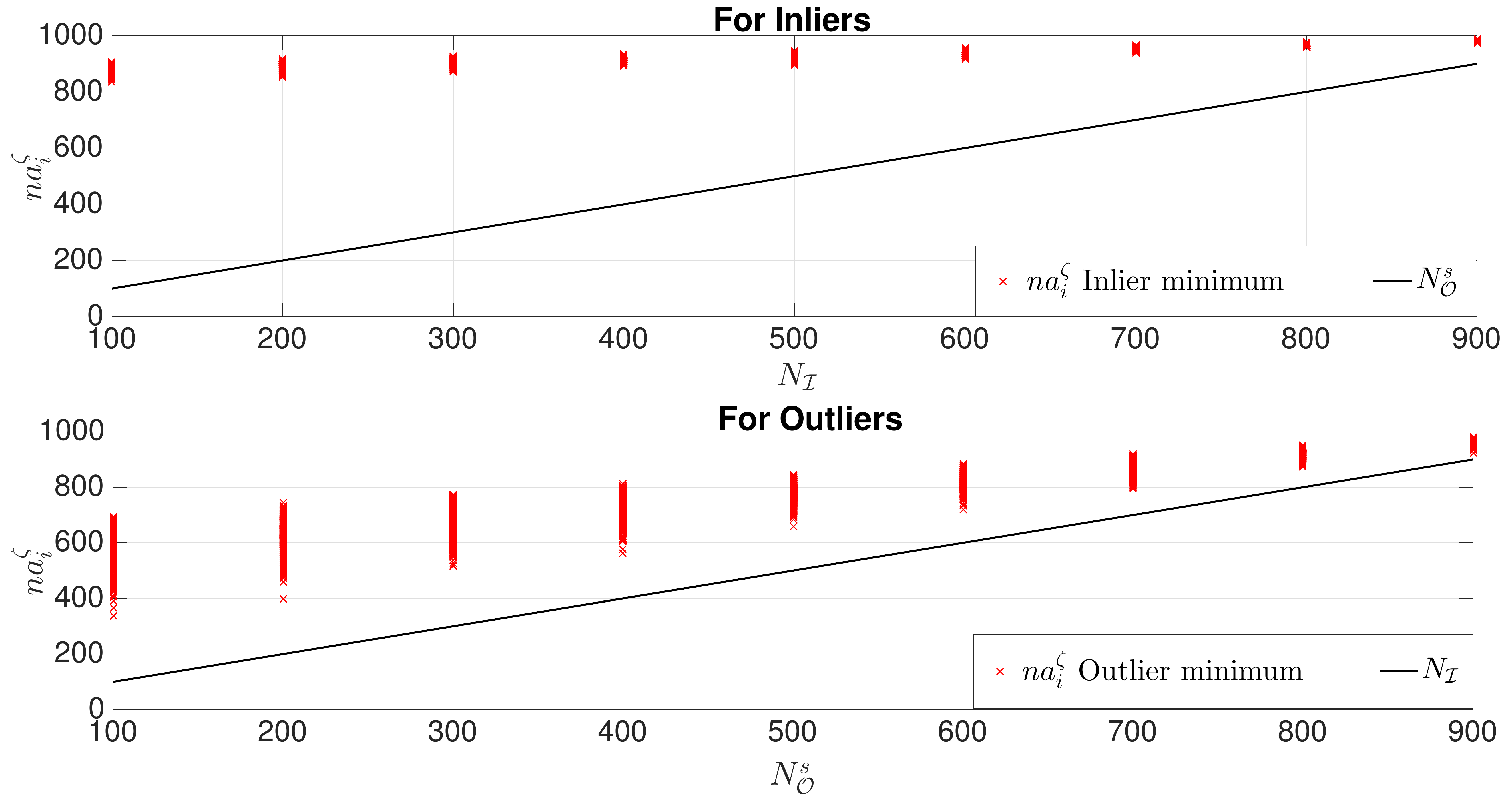}
	\caption{Validation of bounds on $na^\zeta_i$ for inlier and outlier, n=100, r=10, N=1000 for structured outliers} 
	\label{fbvalid2}
\end{figure}
\subsection{Phase transitions}
\begin{figure}[h]
	\begin{subfigure}[b]{0.23\textwidth}
		\includegraphics[width=\linewidth, height=4cm]{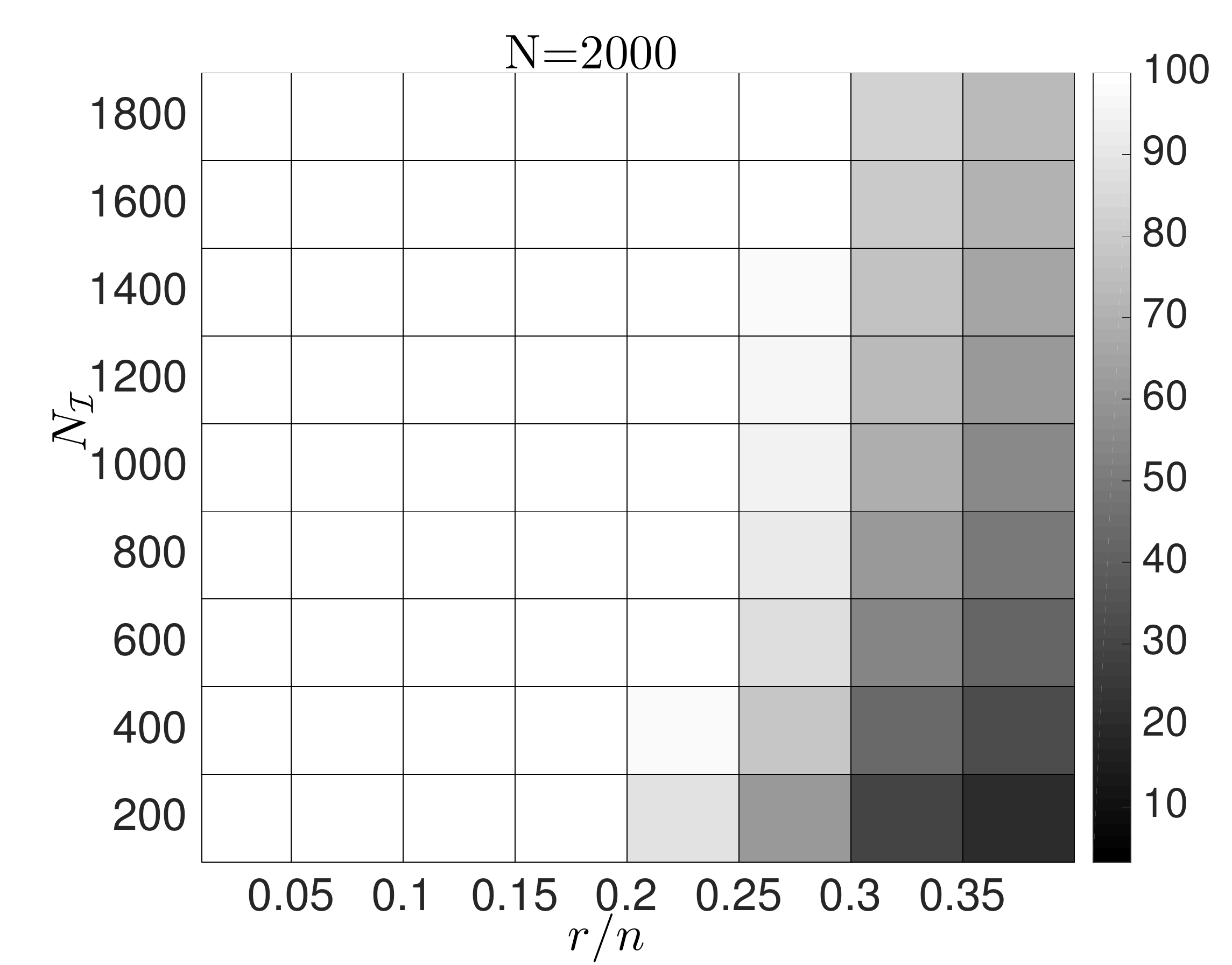}
		\caption{ \% of inliers recovered}
		\label{finlierrec}
	\end{subfigure}
	\begin{subfigure}[b]{0.23\textwidth}
		\includegraphics[width=\linewidth, height = 4 cm]{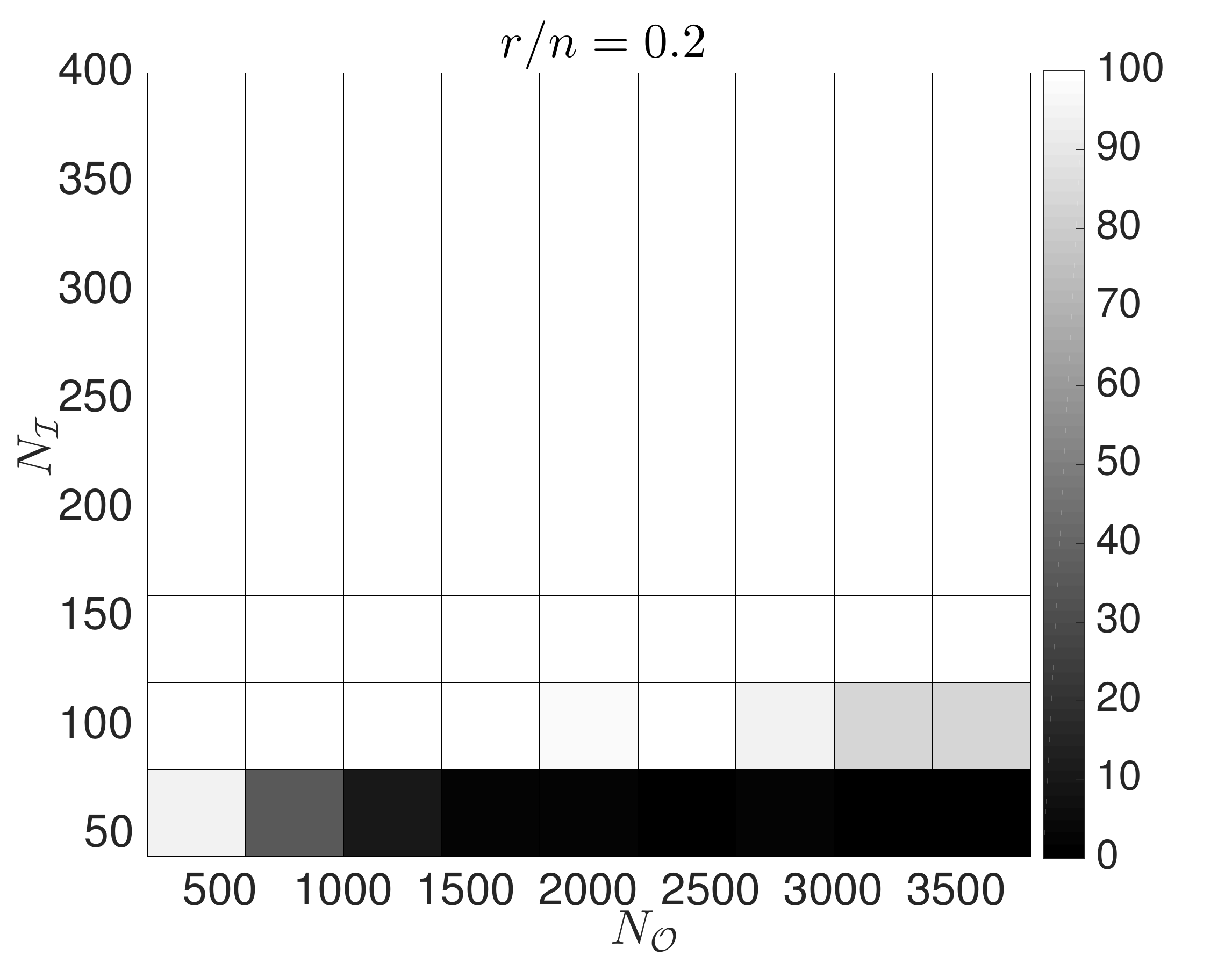}
		\caption{Exact Subspace Recovery}
		\label{flre}
	\end{subfigure}
	\caption{Phase transition plots of ROMA} 
	\label{fphase}
\end{figure}
Now, we look at the properties of the proposed algorithm in terms of percentage of inliers recovered and error in subspace recovery. First we look at the percentage of inliers recovered by ROMA against varying $\frac{r}{n}$ and the number of inliers $N_{\mathcal{I}}$, as these two are the critical parameters that determines the inlier recovery property of the algorithm. Fig. \ref{finlierrec} shows the phase transition on inlier recovery. White indicates $100\%$ inlier recovery and as the squares become darker, the inlier recovery becomes more poor. For this experiment we have varied $\frac{r}{n}$ from $0.05$ to $0.4$ by varying both $r$ and $n$, along with varying the number of inliers from $N_\mathcal{I} = 100$ to $1900$ keeping the total number of points at $N=2000$. As can be interpreted from the figure, a very high percentage of inliers are recovered for even very small $N_\mathcal{I}$, when $\frac{r}{n}$ is sufficiently low. As $\frac{r}{n}$ increases, the percentage of inliers recovered decreases. This simulation result agrees with Lemma \ref{lrough}. 

In the next experiment we have looked at the subspace recovery property of the algorithm. Here we have considered the noiseless case. After removing outliers through ROMA, the subspace is recovered after doing SVD on the remaining points and choosing the left singular vectors corresponding to the non zero singular values as the recovered subspace basis. A subspace is said to be recovered when $LRE < -5$ for the estimated subspace. Fig. \ref{flre} plots the percentage of trials in which the true subspace was recovered against $N_\mathcal{I}$ and $N_{\mathcal{O}}$ with white indicating $100\%$ success. For this phase transition plot, we have set $n=100$ and $r=20$ with 100 trials per $N_\mathcal{I},N_\mathcal{O}$ value. It is evident from Fig.  \ref{flre} that, whenever there is a good enough number of inliers ($>100$ in this case), no matter what the number of outliers is, the subspace is recovered with minimal error. The subspace recovery suffers when the number of inliers is low, as seen in the last row of Fig. \ref{flre}. 
\begin{table*}[h]
	\caption{Comparison of Algorithms - Unstructured outliers, $n=100$, $r=20$, $N=1000$}
	\label{table1}
	\begin{tabularx}{\textwidth}{@{}l*{10}{C}c@{}}
		\toprule
		Algorithm     &$LRE$ at $\gamma=0.1$&$LRE$  at $\gamma=0.5$ & $LRE$ at $\gamma=0.9$ & Average running time in seconds & Parameter knowledge & Free parameters  \\ 
		\midrule
		FMS  & -14.58      & -14.58          &-14.59   & 0.8011 & $r$   & $>1$  \\ 
		GMS & -$2.7\times10^{-4}$   & -$2.8\times10^{-4}$        &-$3.8\times10^{-4}$    & 0.0437 & $r$    & Regularization   \\ 
		ORSC  &-14.57    & -14.58          &-14.58  & 58.94  & None   & $>1$   \\ 
		CoP  & -14.59      & -14.59          &-14.59     & 0.0164  & $\gamma$ or $r$    & No   \\ 
		Heckel's  & -14.58      & -14.58          &-12.73     & 0.0185 & None    & No   \\ 
		ROMA & -14.58    & -14.58          &-14.59   & 0.0365 & None   & No   \\ 
		\bottomrule
	\end{tabularx}
\end{table*}
\subsection{Comparison with other state of the art algorithms}\label{scomp}
Here we compare the proposed algorithm with existing techniques CoP \cite{rahmani2016coherence}, Fast Median Subspace, FMS \cite{lerman2014fast}, Geometric Median Subspace GMS \cite{zhang2014novel} and the outlier removal algorithm outlier removal for subspace clustering (denoted by ORSC for convenience) in \cite{soltanolkotabi2012geometric}, in terms of the log recovery error and running time under data model in Assumption \ref{amain}. For FMS, we used algorithm 1 in \cite{lerman2014fast}, with default parameter setting i.e. $p=1,\epsilon =10^{-10}$, maximum iterations 100. For GMS as well, we used the default parameter settings and chose the last $r$ columns from the output matrix as the basis of the estimated subspace. For CoP, we implemented the first method proposed, where we used the number of data points chosen for subspace recovery as $n_s = 30$, which is a value always less than the number of inliers in our experimental settings and hence works well. For ORSC, we used the algorithm using primal-dual interior point method from the $l_1$ magic code repository \cite{candes2005l1} for solving the underlying $l_1$ optimization problem. The parameters used were changed from the default settings to improve convergence rate without degrading the performance. We also use the outlier detection method in \cite{heckel2015robust}, which we refer as Heckel's. In Table \ref{table1}, we have summarized each algorithm in terms of its performance measured in terms of log recovery error at various outlier fractions, running time and the parameters used by the algorithm for its working. Also the last column indicates other free parameters that an algorithm requires like regularization or convergence parameters. For the experiments in Table \ref{table1}, we have set $N=1000,n=100,r=20$. It is observed that ROMA performs at par with the existing methods in terms of $LRE$ without requiring the knowledge of $r$ or $\gamma$ and is nearly as quick as CoP. Heckel's algorithm also does well in this scenario of subspace recovery under unstructured outliers assuming no parameter knowledge. The algorithm ORSC which also does not use parameter knowledge, has similar $LRE$ values but is much slower compared to ROMA and also requires multiple parameters like a convergence criterion for solving the underlying $l_1$ optimization problem.
\begin{figure}[h]
	\includegraphics[width=8 cm, height = 6 cm]{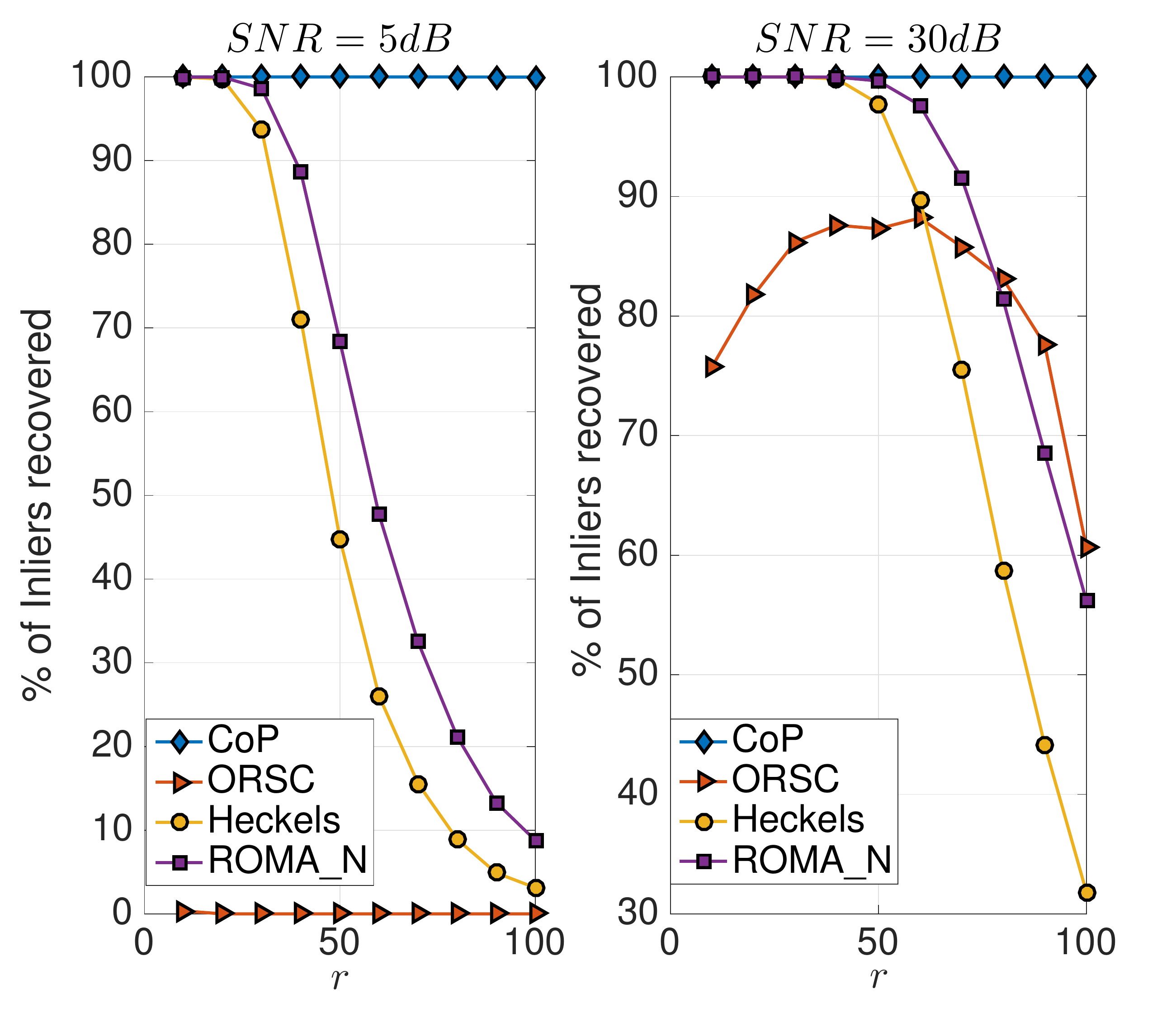}
	\caption{\% of inliers recovered comparison between algorithms at $N=400,n=300,\gamma = 0.5$ }
	\label{f1}
\end{figure}
Fig. \ref{f1} has the comparison results of algorithms in terms of inlier recovery for data following Assumption \ref{amain} but with noisy inliers. For this experiment we set $N=400,n=300,\gamma = 0.5$ and vary the rank of the subspace from $r=10$ to $100$ and look at the percentage of inliers recovered by algorithms ROMA, CoP, Heckel's and ORSC in two different SNR scenarios, $SNR =5 dB$ and $SNR=30 dB$. We give $n_s = N_{\mathcal{I}}$ as input to CoP here and hence it performs the best in terms of inlier recovery. ROMA performs much better than the other parameter free methods, Heckel's and also ORSC. CoP wrongly flags outliers as inliers in extreme cases of low SNR and high rank, for instance here with a rate of $0.04$ at $SNR=5,r=100$, which is undesirable in many applications. All other algorithms had 0 false inlier detection in all the trials. 
\subsection{Structured Outliers}
Here we consider structured outliers and compare the performance of ROMA\_N with other algorithms. We use the model in  \cite{rahmani2016coherence} to generate structured data, where the outlier is generated as $\textbf{x}_i = \frac{1}{\sqrt{1+\mu^2}}(\textbf{a}+\textbf{b}_i)$, where $\textbf{a}$ and $\textbf{b}_i$ are points chosen uniformly at random from $\mathbb{S}^{n-1}$. This means that the points are clustered around the point $\textbf{a}$, and the clustering is determined by $\mu$. As $\mu$ decreases the outliers are more clustered and $\theta_{max}^{\mathcal{O}}$ reduces. Also here the inliers are generated as $\textbf{x}_i = \frac{1}{\sqrt{1+\nu^2}}(\textbf{u}+\textbf{v}_i)$, where $\textbf{u}$ and $\textbf{v}_i$ are points chosen uniformly at random from $\mathbb{S}^{n-1}\cap\mathcal{U}$. In the experiments for Table \ref{table3} and Fig. \ref{fso}, we set the value of $\nu=0.1$. We vary $\mu$, thereby the clustering of outliers and compare the LRE values for different algorithms in Table \ref{table3} for two different cases of $N_{\mathcal{I}},N_{\mathcal{O}}^s$ combinations. We set $n=200$, $r=10$ for these experiments and $n_s = 50$ is used for CoP. As can be seen, Heckel's outlier detection algorithm fails for lower $\mu$, i.e. more clustered outliers and so too does FMS.
\begin{table}[h]
	\caption{LRE for structured outliers against $\mu$}
	\label{table3}
	\begin{tabularx}{\linewidth}{@{}l*{10}{C}c@{}}
		\hline
		&\multicolumn{3}{c|}{$N_{\mathcal{I}} = 900, N_{\mathcal{O}}^s=100$}&\multicolumn{3}{c}{$N_{\mathcal{I}} = 300, N_{\mathcal{O}}^s=700$}\\
		\multicolumn{1}{r}{$\mu$}&\multicolumn{1}{c}{0.2}&\multicolumn{1}{c}{0.5}&\multicolumn{1}{c|}{5}&\multicolumn{1}{c}{0.2}&\multicolumn{1}{c}{0.5}&\multicolumn{1}{c}{5}\\
		\hline
		CoP&$-14.5$&$-14.5$&\multicolumn{1}{c|}{$-14.5$}&$-0.01$&$-0.01$&$-14.5$\\
		FMS&$-0.5$&$-0.5$&\multicolumn{1}{c|}{$-14.6$}&$-0.5$&$-0.5$&$-0.5$\\
		Heckel&$-0.5$&$-0.5$&\multicolumn{1}{c|}{$-14.6$}&$-0.4$&$-0.05$&$-14.4$\\
		ROMA\_N&$-14.5$&$-14.5$&\multicolumn{1}{c|}{$-14.6$}&$-14.4$&$-14.4$&$-14.4$\\
		\hline
	\end{tabularx}
\end{table}
\begin{figure}[h]
	\includegraphics[width=8 cm, height = 4cm]{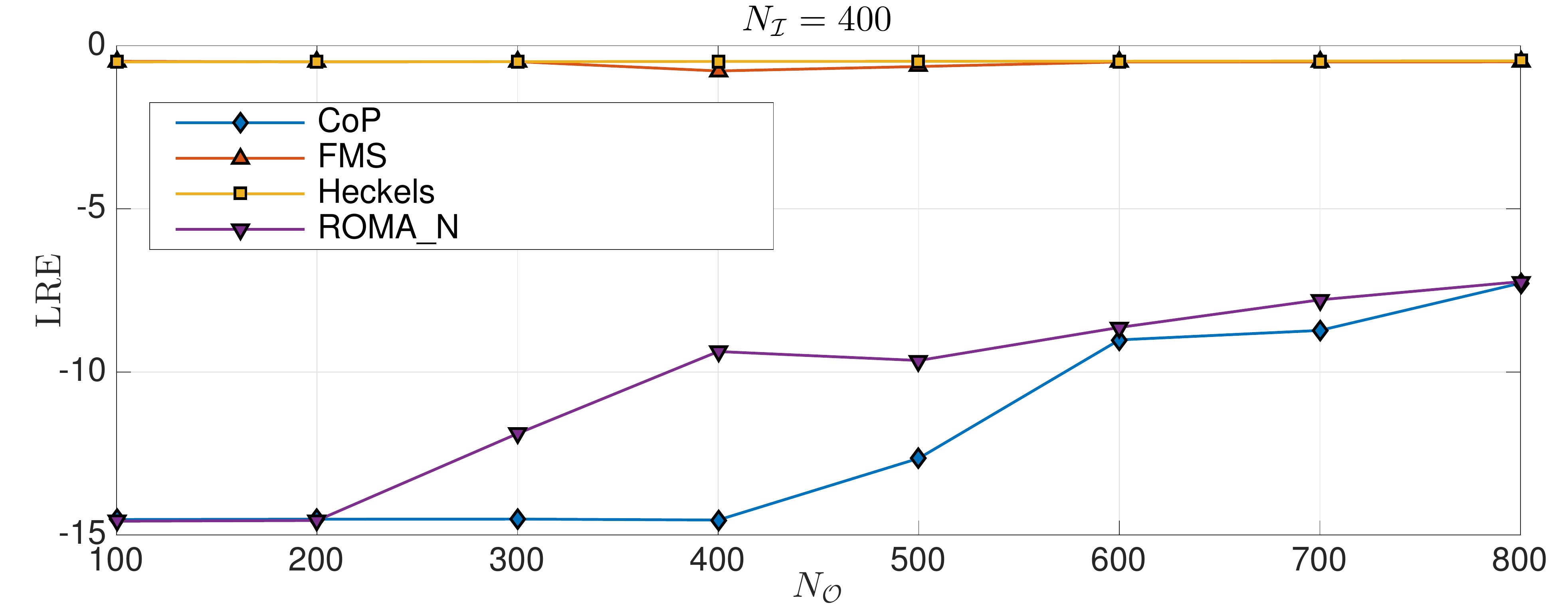}
	\caption{LRE comparison - both structured and unstructured outliers}
	\label{fso}
\end{figure} 
CoP works well in the case where the inliers are more, but fails in cases of low $\mu$ when $N_{\mathcal{O}}^s>N_{\mathcal{I}}$. In all the cases ROMA\_N works very well in terms of LRE. The next experiment in this section was performed by mixing structured and unstructured outliers to form the full outlier set. $N_{\mathcal{O}}$ was varied from $100$ to $800$, while $N_{\mathcal{I}} = 400$. $\mu$ was set at $0.2$ with $n=200,r=10$. Out of the $N_{\mathcal{O}}$ outliers, a random number, $N_{\mathcal{O}}^s$ of them were picked from the structured set and the rest were chosen uniformly at random from $\mathbb{S}^{n-1}$. The results averaged over a 100 trials are plotted in Fig. \ref{fso}. Here too ROMA\_N performs as well as CoP, while the other algorithms have poor LRE.
\section{Real data Experiments}
\begin{figure*}[h]
	\begin{subfigure}[b]{0.083\textwidth}
		\includegraphics[width=\linewidth, height=1cm]{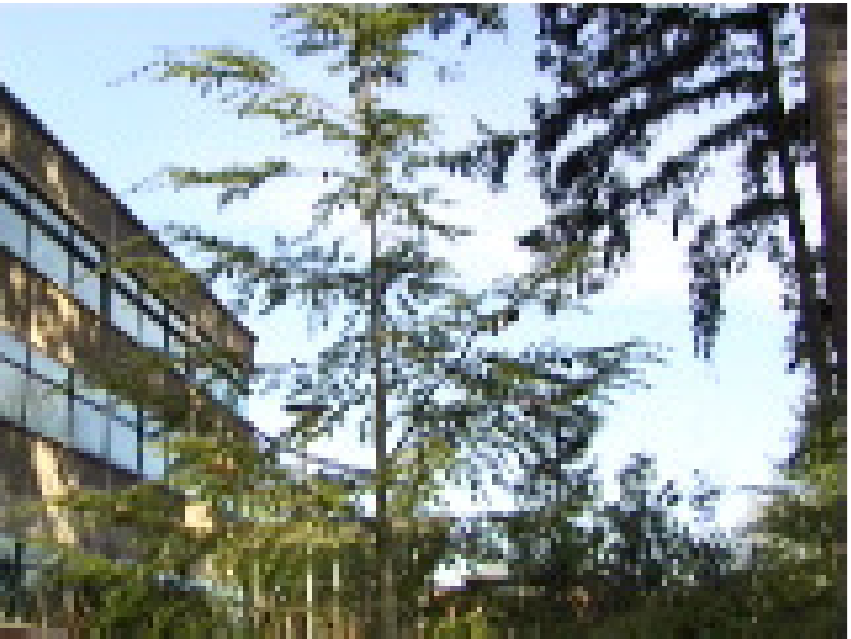}
	\end{subfigure}
	\begin{subfigure}[b]{0.083\textwidth}
	\includegraphics[width=\linewidth, height=1cm]{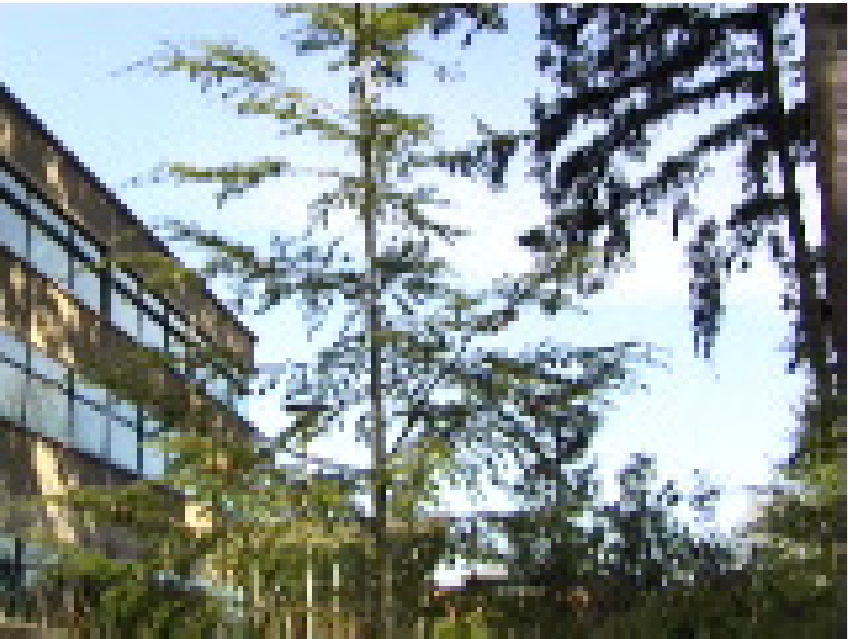}
\end{subfigure}
	\begin{subfigure}[b]{0.083\textwidth}
	\includegraphics[width=\linewidth, height=1cm]{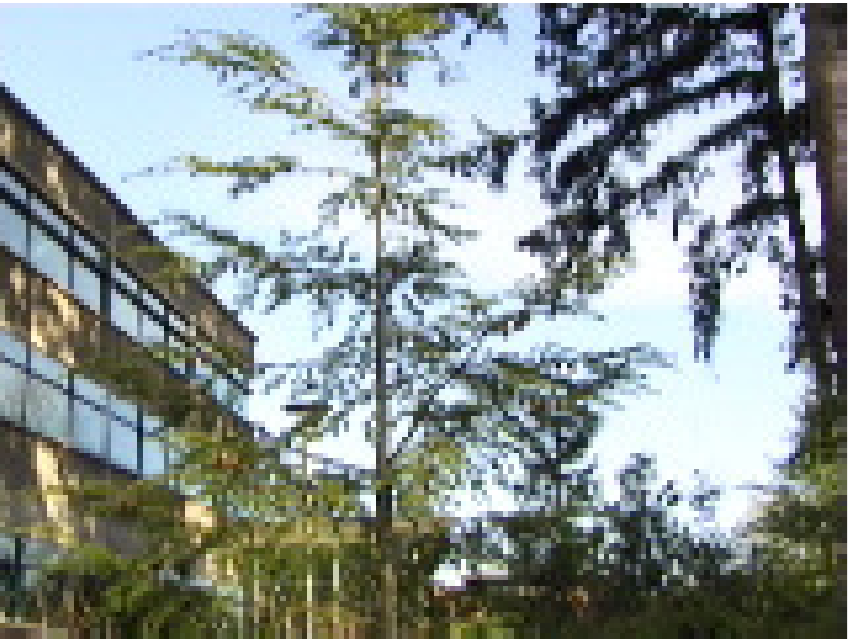}
\end{subfigure}
		\begin{subfigure}[b]{0.083\textwidth}
		\includegraphics[width=\linewidth, height=1cm]{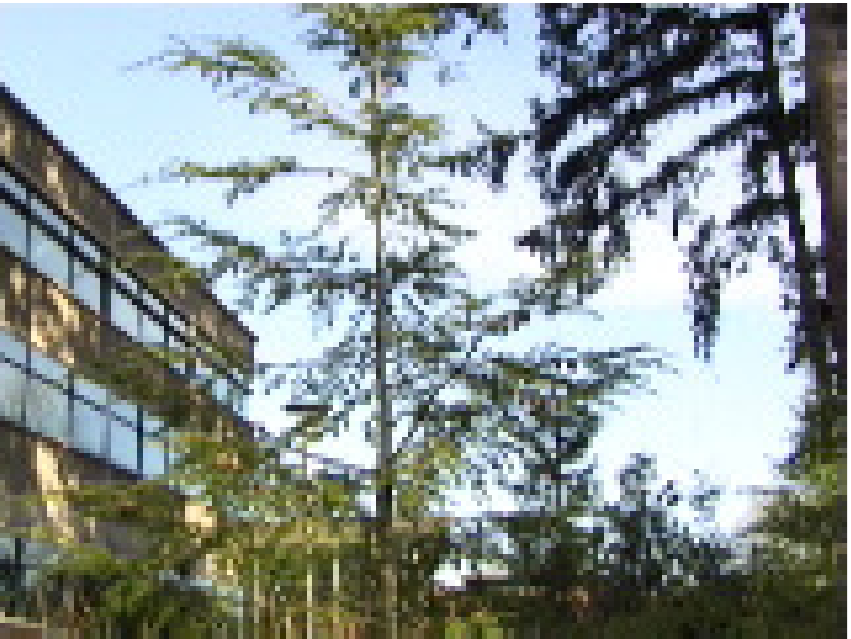}
	\end{subfigure}
		\begin{subfigure}[b]{0.083\textwidth}
			\setlength{\fboxsep}{0pt}%
			\setlength{\fboxrule}{1pt}%
		\cfbox{red}{\includegraphics[width=\linewidth, height=1cm]{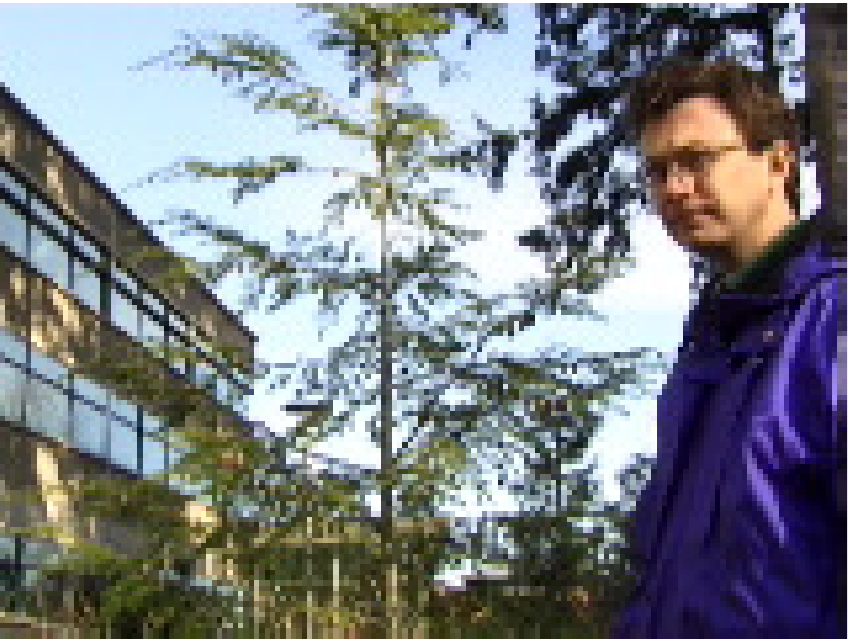}}
	\end{subfigure}
		\begin{subfigure}[b]{0.083\textwidth}
			\setlength{\fboxsep}{0pt}%
			\setlength{\fboxrule}{1pt}%
		\cfbox{red}{\includegraphics[width=\linewidth, height=1cm]{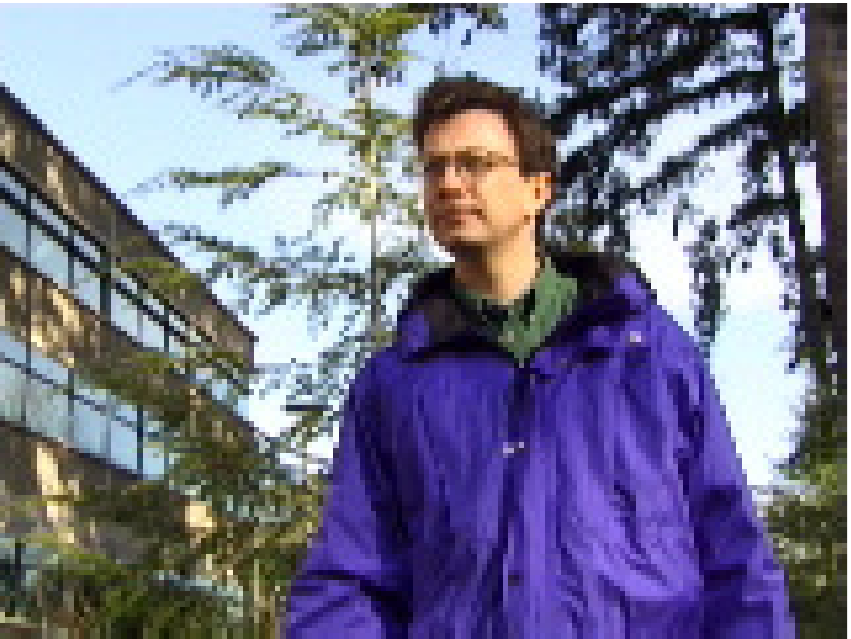}}
	\end{subfigure}
		\begin{subfigure}[b]{0.083\textwidth}
			\setlength{\fboxsep}{0pt}%
			\setlength{\fboxrule}{1pt}%
		\cfbox{red}{\includegraphics[width=\linewidth, height=1cm]{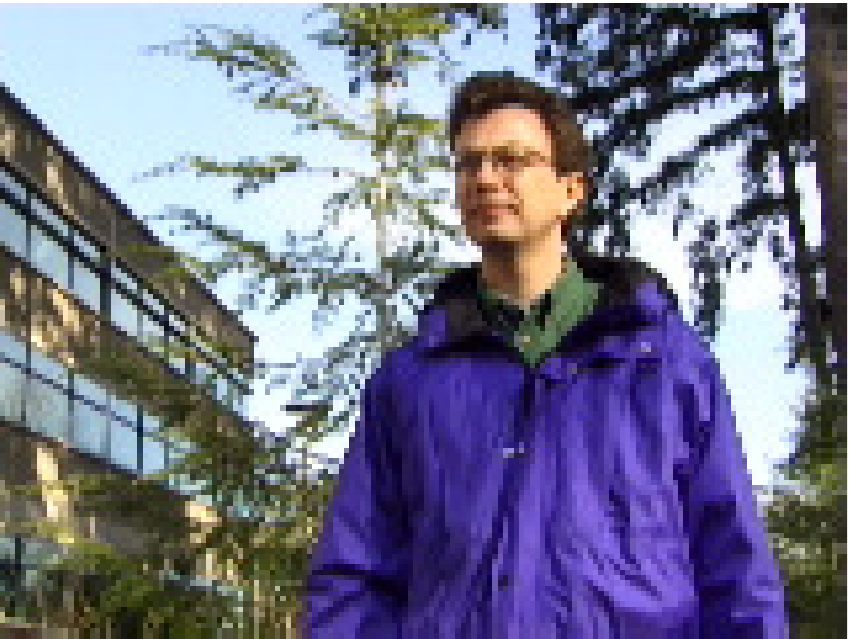}}
	\end{subfigure}
		\begin{subfigure}[b]{0.083\textwidth}
			\setlength{\fboxsep}{0pt}%
			\setlength{\fboxrule}{1pt}%
		\cfbox{red}{\includegraphics[width=\linewidth, height=1cm]{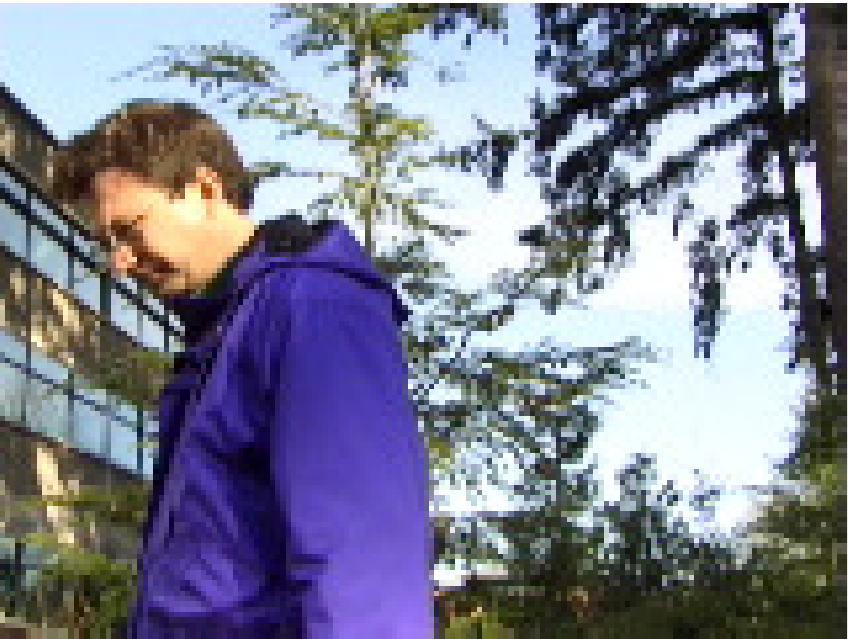}}
	\end{subfigure}
		\begin{subfigure}[b]{0.083\textwidth}
		\includegraphics[width=\linewidth, height=1cm]{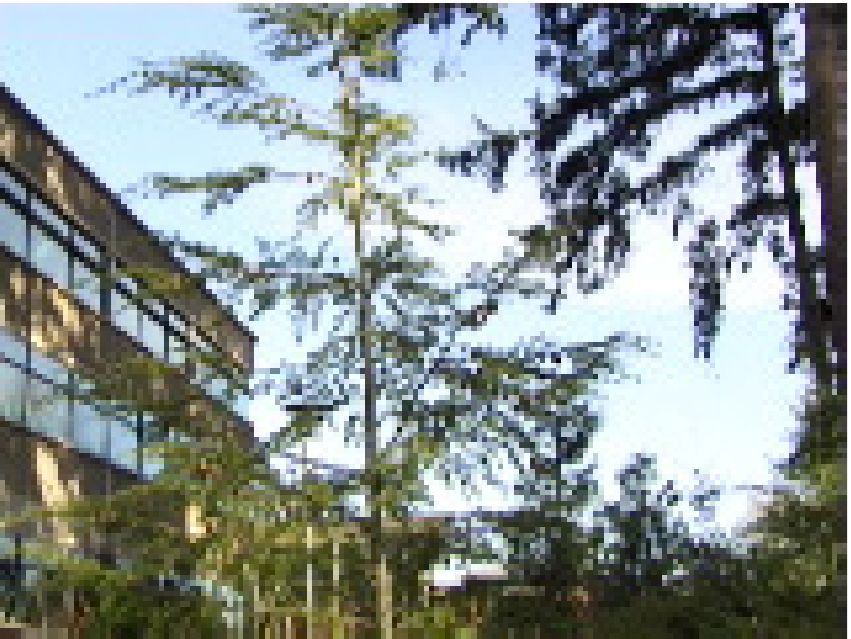}
	\end{subfigure}
\begin{subfigure}[b]{0.083\textwidth}
	\includegraphics[width=\linewidth, height=1cm]{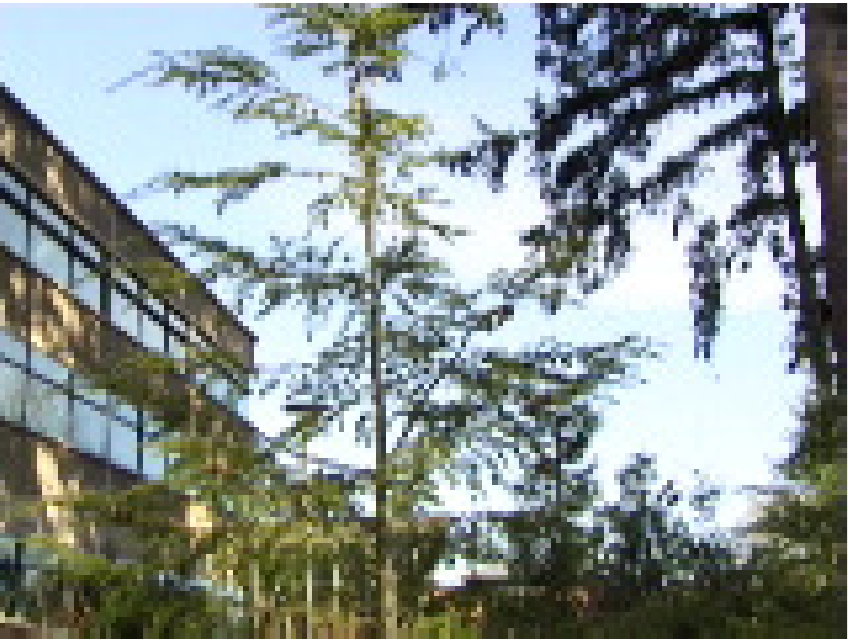}
\end{subfigure}
\begin{subfigure}[b]{0.083\textwidth}
	\includegraphics[width=\linewidth, height=1cm]{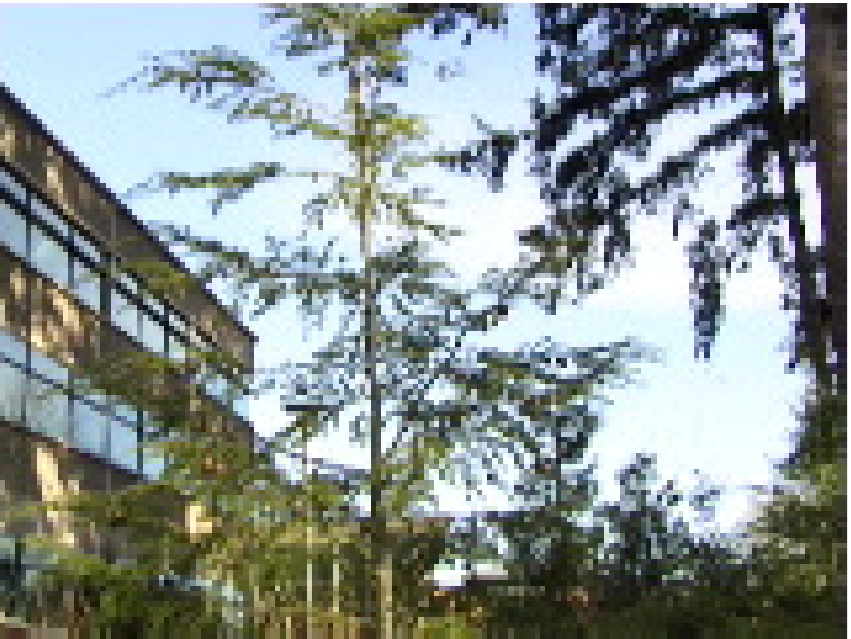}
\end{subfigure}
\caption{Frames from the waving tree video, Highlighted frames are detected as outliers by CoP, FMS and ROMA\_N}
\label{fwavingtree}
\end{figure*} 
\begin{figure*}[h]
	\begin{subfigure}[b]{0.083\textwidth}
		\includegraphics[width=\linewidth, height=1cm]{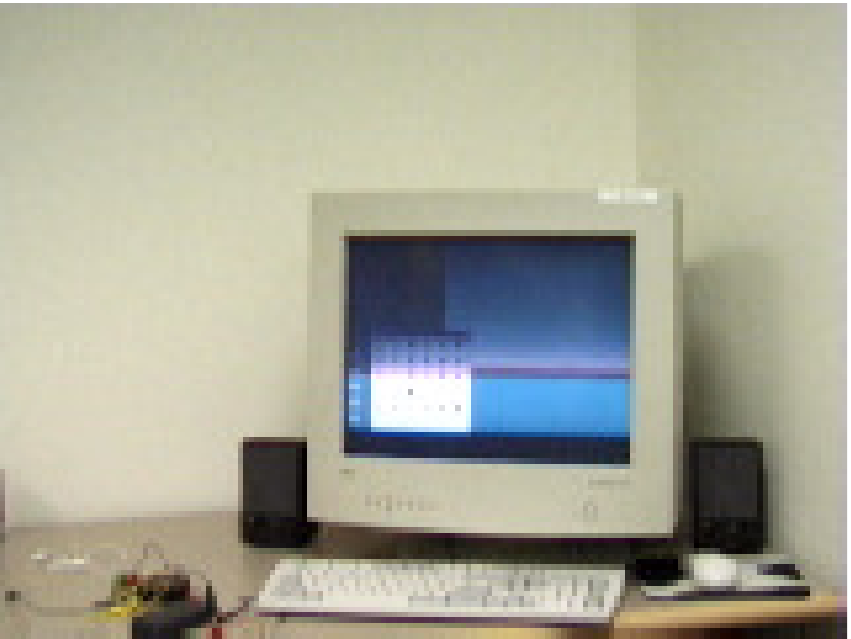}
	\end{subfigure}
	\begin{subfigure}[b]{0.083\textwidth}
		\includegraphics[width=\linewidth, height=1cm]{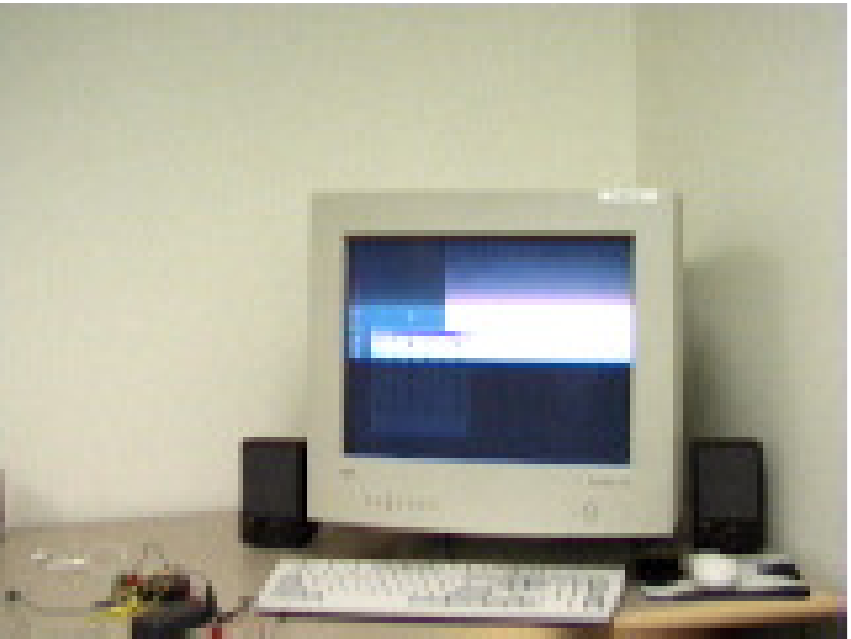}
	\end{subfigure}
	\begin{subfigure}[b]{0.083\textwidth}
		\includegraphics[width=\linewidth, height=1cm]{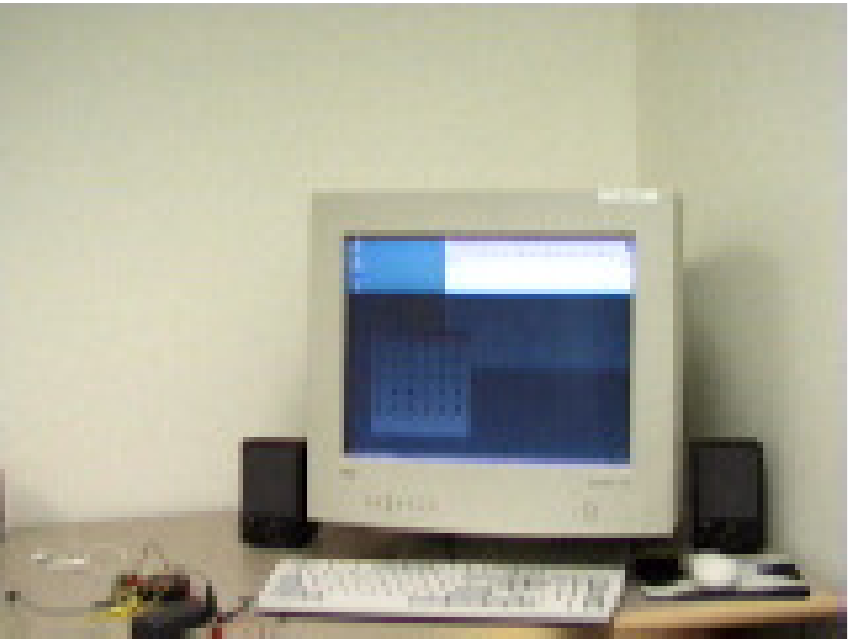}
	\end{subfigure}
	\begin{subfigure}[b]{0.083\textwidth}
		\setlength{\fboxsep}{0pt}%
		\setlength{\fboxrule}{1pt}%
		\cfbox{red}{\includegraphics[width=\linewidth, height=1cm]{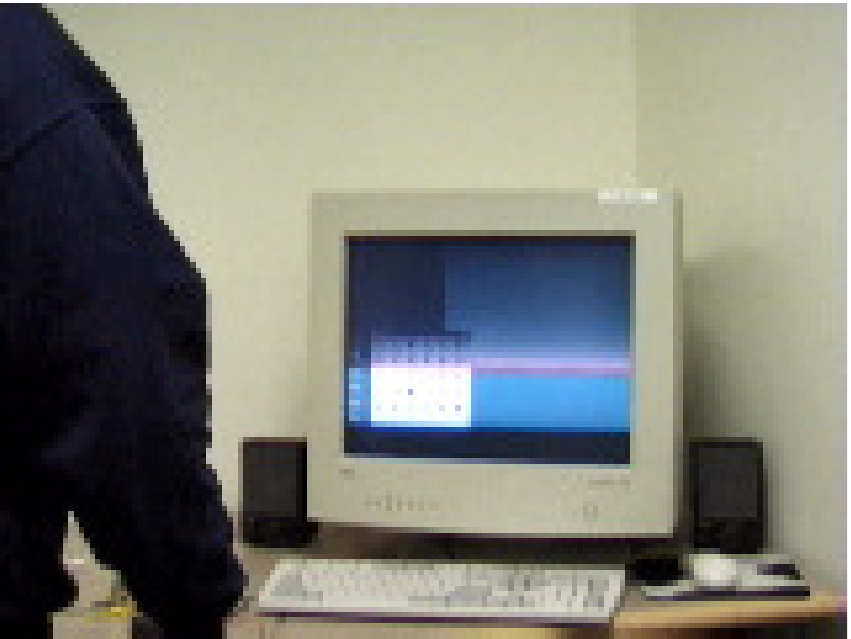}}
	\end{subfigure}
	\begin{subfigure}[b]{0.083\textwidth}
		\setlength{\fboxsep}{0pt}%
		\setlength{\fboxrule}{1pt}%
		\cfbox{red}{\includegraphics[width=\linewidth, height=1cm]{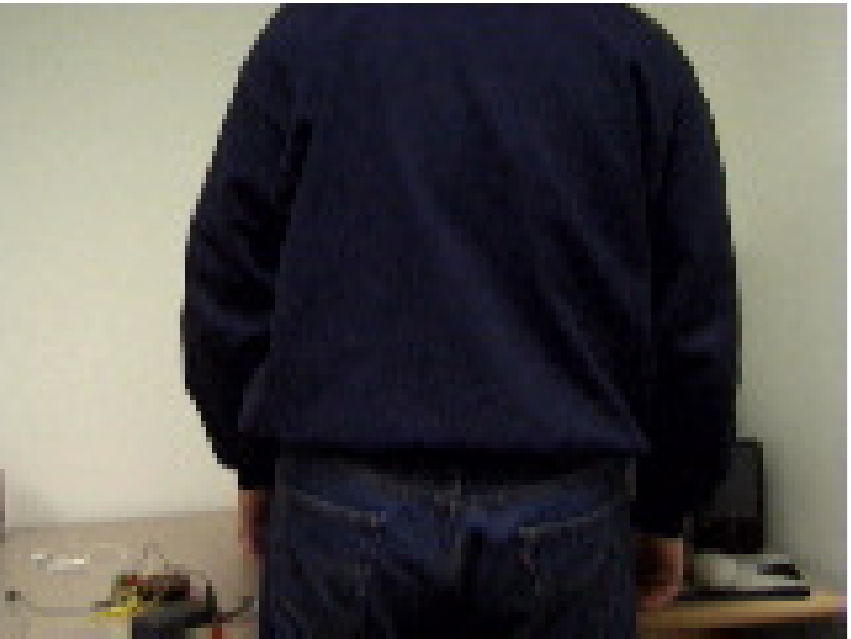}}
	\end{subfigure}
	\begin{subfigure}[b]{0.083\textwidth}
		\setlength{\fboxsep}{0pt}%
		\setlength{\fboxrule}{1pt}%
		\cfbox{red}{\includegraphics[width=\linewidth, height=1cm]{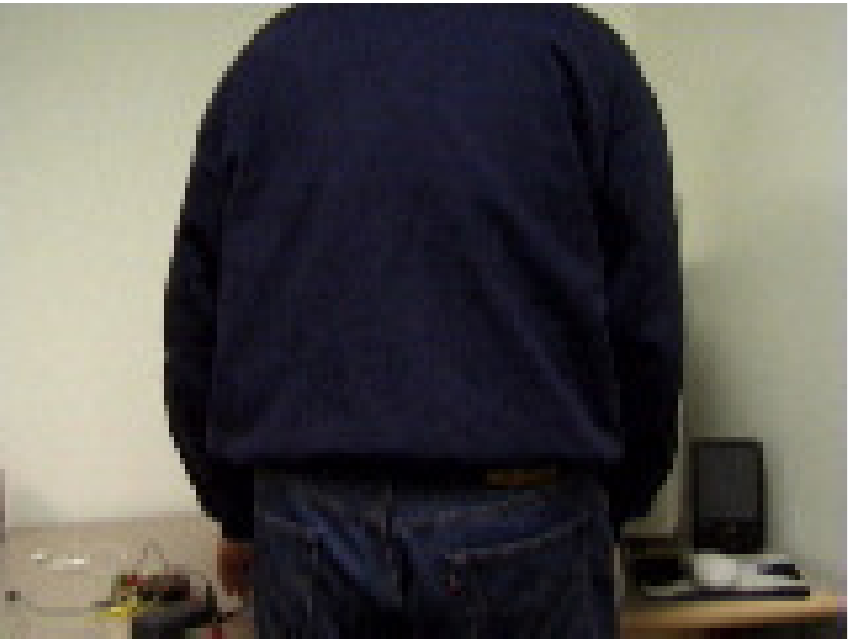}}
	\end{subfigure}
	\begin{subfigure}[b]{0.083\textwidth}
		\setlength{\fboxsep}{0pt}%
		\setlength{\fboxrule}{1pt}%
		\cfbox{red}{\includegraphics[width=\linewidth, height=1cm]{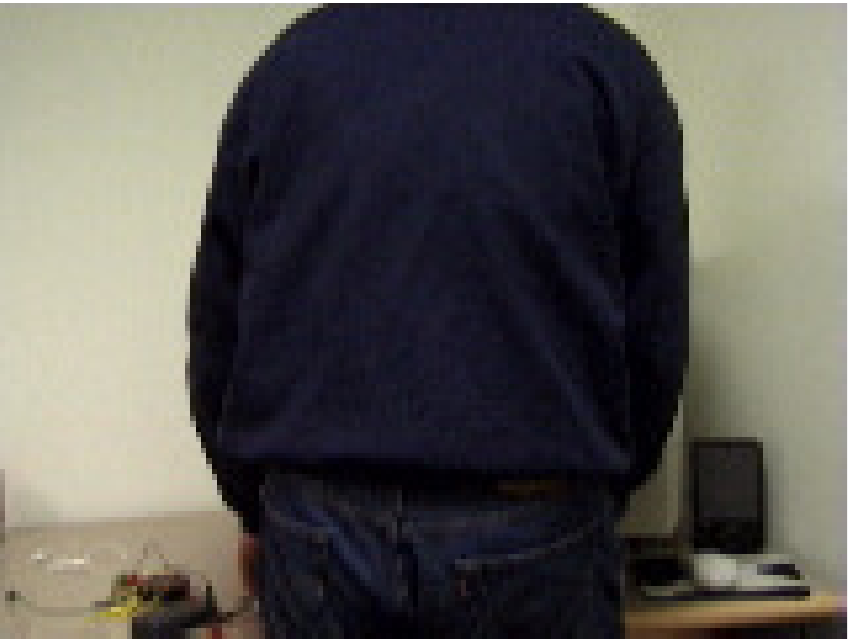}}
	\end{subfigure}
	\begin{subfigure}[b]{0.083\textwidth}
		\setlength{\fboxsep}{0pt}%
		\setlength{\fboxrule}{1pt}%
		\cfbox{red}{\includegraphics[width=\linewidth, height=1cm]{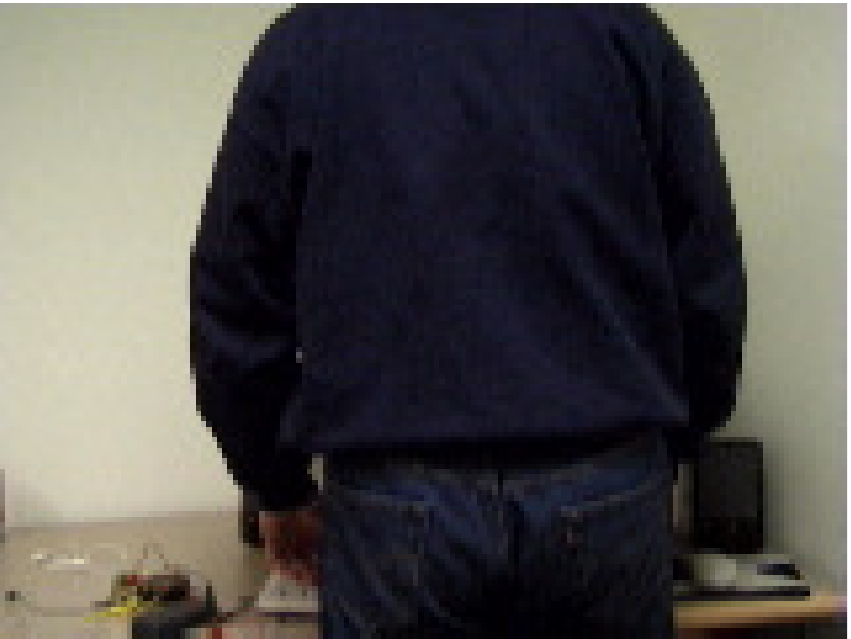}}
	\end{subfigure}
	\begin{subfigure}[b]{0.083\textwidth}
				\setlength{\fboxsep}{0pt}%
		\setlength{\fboxrule}{1pt}%
		\cfbox{red}{\includegraphics[width=\linewidth, height=1cm]{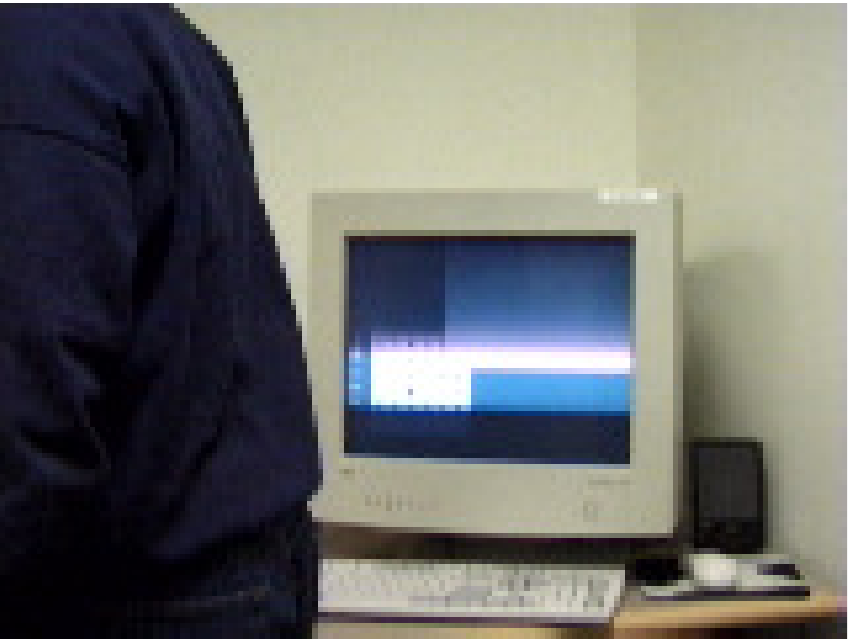}}
	\end{subfigure}
	\begin{subfigure}[b]{0.083\textwidth}
		\includegraphics[width=\linewidth, height=1cm]{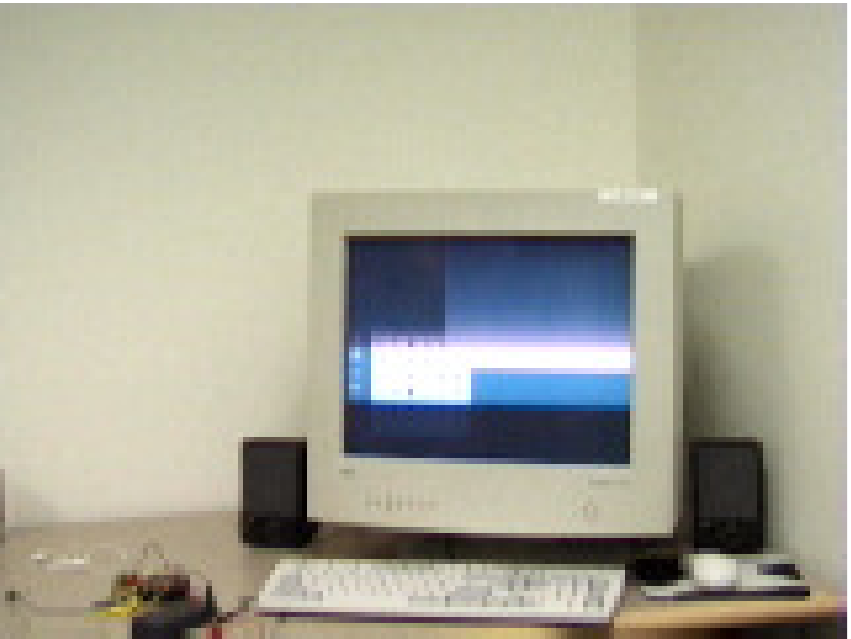}
	\end{subfigure}
	\begin{subfigure}[b]{0.083\textwidth}
		\includegraphics[width=\linewidth, height=1cm]{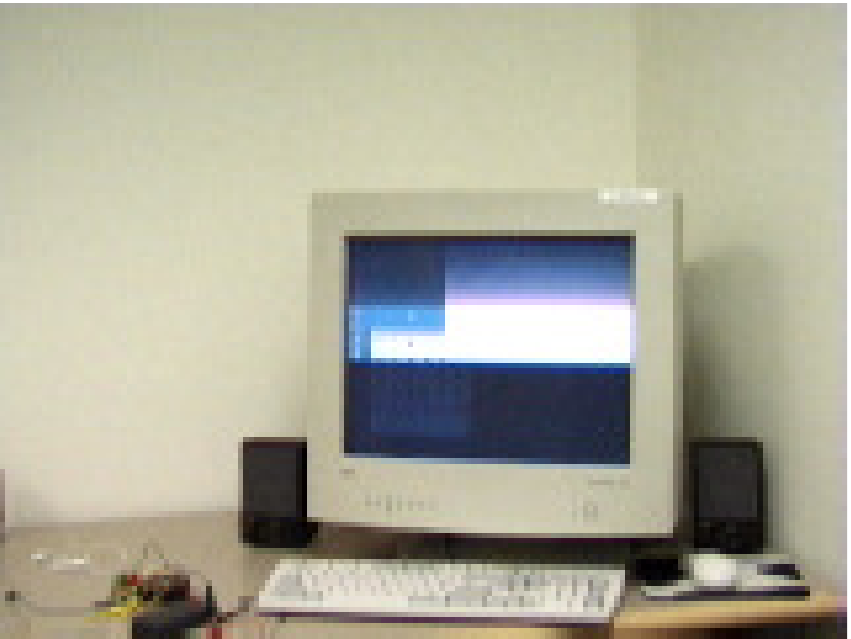}
	\end{subfigure}
	\caption{Frames from the Camouflage video, Highlighted frames are detected as outliers by CoP and ROMA\_N}
	\label{fcam}
\end{figure*} 
\begin{figure}[h]
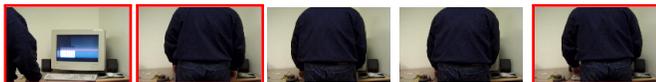

\begin{subfigure}[b]{0.09\textwidth}
	\setlength{\fboxsep}{0pt}%
	\setlength{\fboxrule}{1pt}%
	\cfbox{red}{\includegraphics[width=\linewidth, height=1cm]{cam243-eps-converted-to.pdf}}
\end{subfigure}
\begin{subfigure}[b]{0.09\textwidth}
	\setlength{\fboxsep}{0pt}%
	\setlength{\fboxrule}{1pt}%
	\cfbox{red}{\includegraphics[width=\linewidth, height=1cm]{cam248-eps-converted-to.pdf}}
\end{subfigure}
\begin{subfigure}[b]{0.09\textwidth}
\includegraphics[width=\linewidth, height=1cm]{cam253-eps-converted-to.pdf}
\end{subfigure}
\begin{subfigure}[b]{0.09\textwidth}
\includegraphics[width=\linewidth, height=1cm]{cam265-eps-converted-to.pdf}
\end{subfigure}
\begin{subfigure}[b]{0.09\textwidth}
	\setlength{\fboxsep}{0pt}%
	\setlength{\fboxrule}{1pt}%
	\cfbox{red}{\includegraphics[width=\linewidth, height=1cm]{cam275-eps-converted-to.pdf}}
\end{subfigure}
\caption{Only highlighted frames are detected as outliers by FMS}
\label{fcamsub}
\end{figure}
\begin{figure*}[h]
	\begin{subfigure}[b]{0.083\textwidth}
		\includegraphics[width=\linewidth, height=1.5cm]{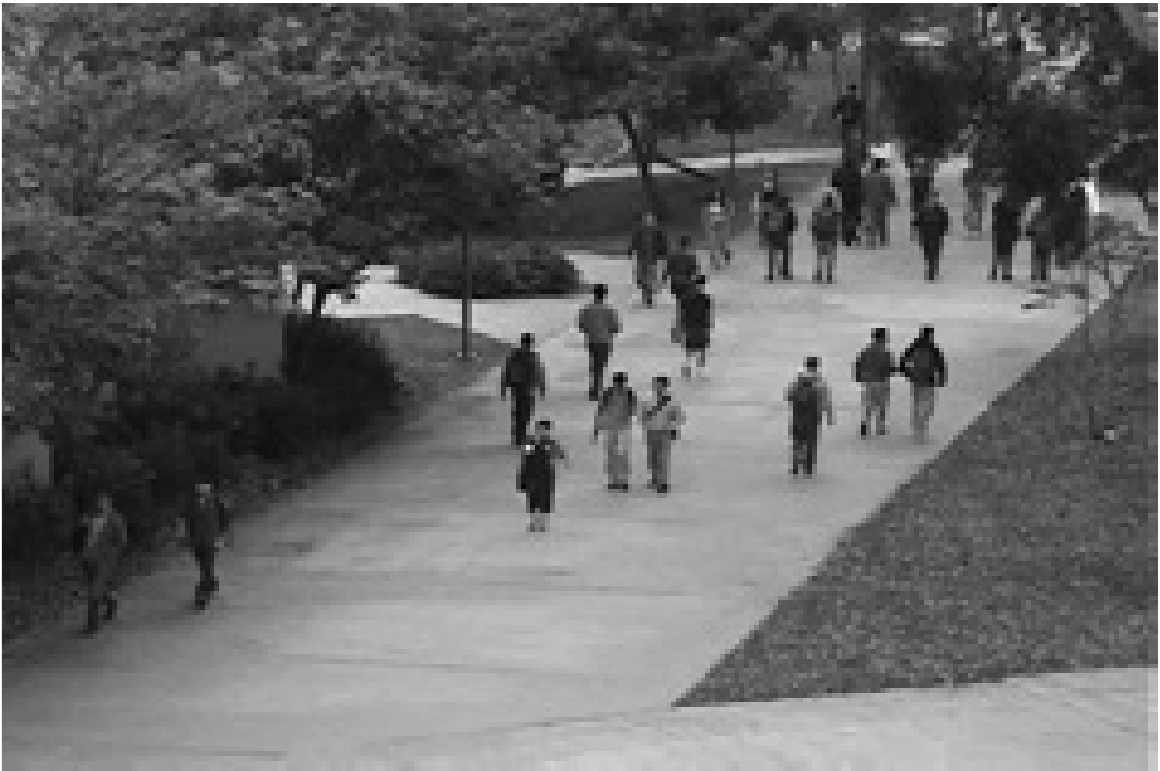}
	\end{subfigure}
	\begin{subfigure}[b]{0.083\textwidth}
		\includegraphics[width=\linewidth, height=1.5cm]{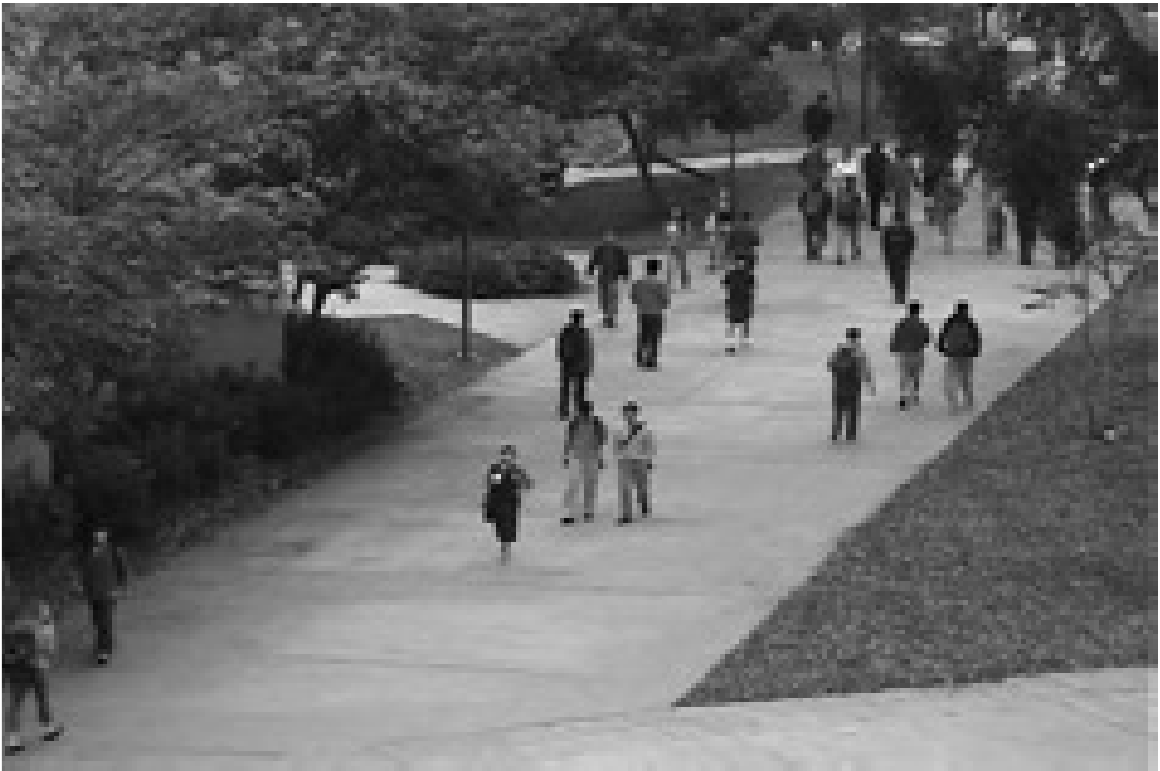}
	\end{subfigure}
	\begin{subfigure}[b]{0.083\textwidth}
		\includegraphics[width=\linewidth, height=1.5cm]{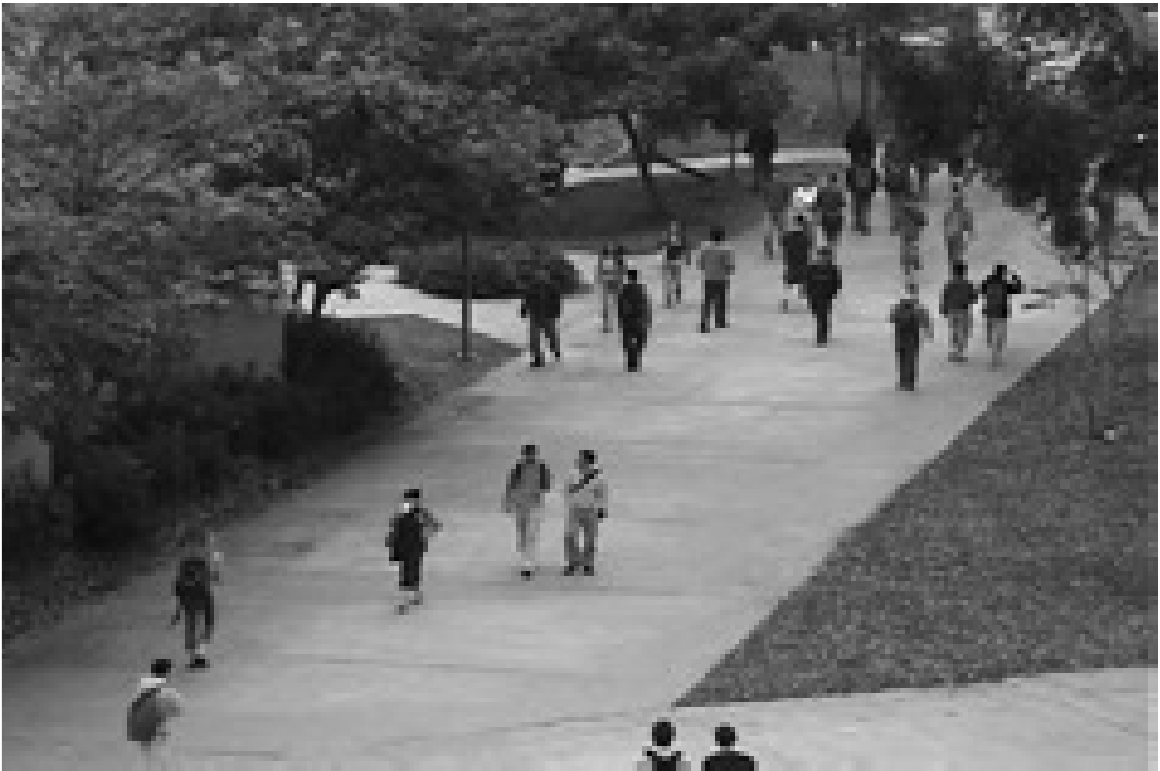}
	\end{subfigure}
	\begin{subfigure}[b]{0.083\textwidth}
		\setlength{\fboxsep}{0pt}%
		\setlength{\fboxrule}{1pt}%
		\cfbox{red}{\includegraphics[width=\linewidth, height=1.5cm]{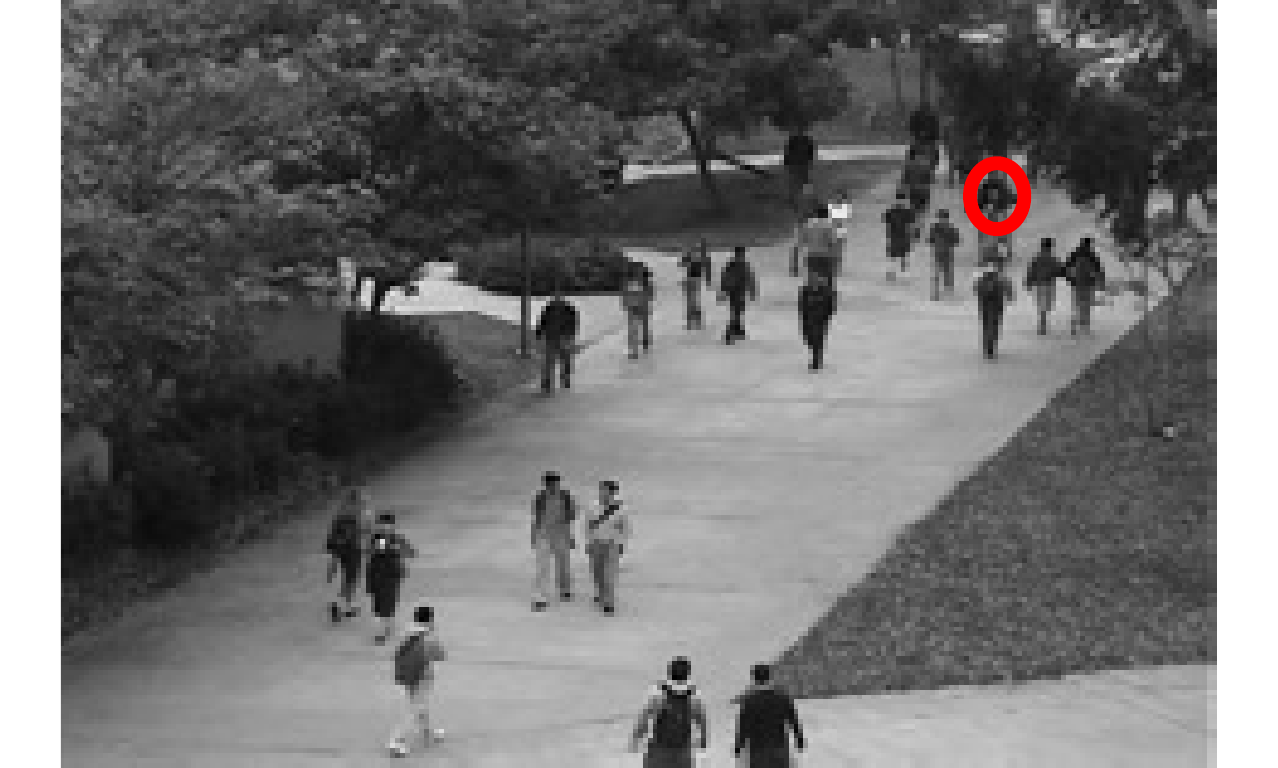}}
	\end{subfigure}
	\begin{subfigure}[b]{0.083\textwidth}
		\setlength{\fboxsep}{0pt}%
		\setlength{\fboxrule}{1pt}%
		\cfbox{red}{\includegraphics[width=\linewidth, height=1.5cm]{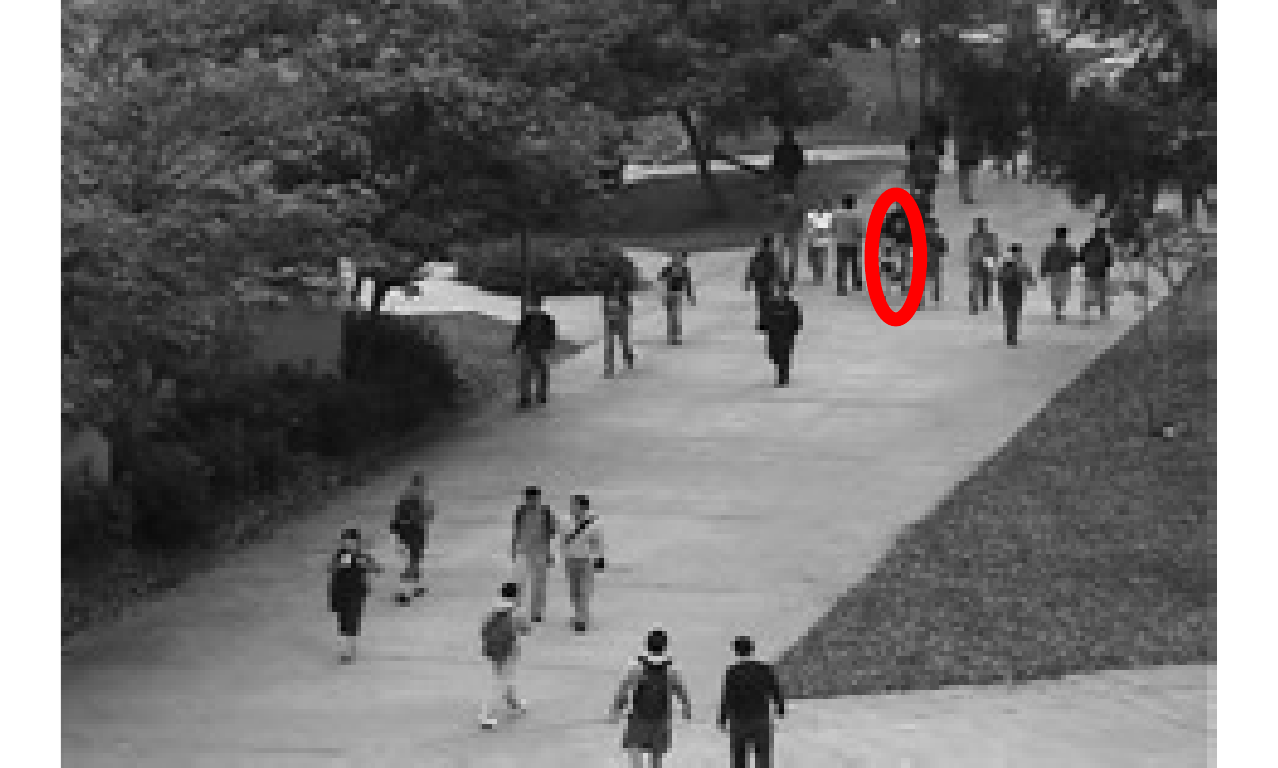}}
	\end{subfigure}
	\begin{subfigure}[b]{0.083\textwidth}
		\setlength{\fboxsep}{0pt}%
		\setlength{\fboxrule}{1pt}%
		\cfbox{red}{\includegraphics[width=\linewidth, height=1.5cm]{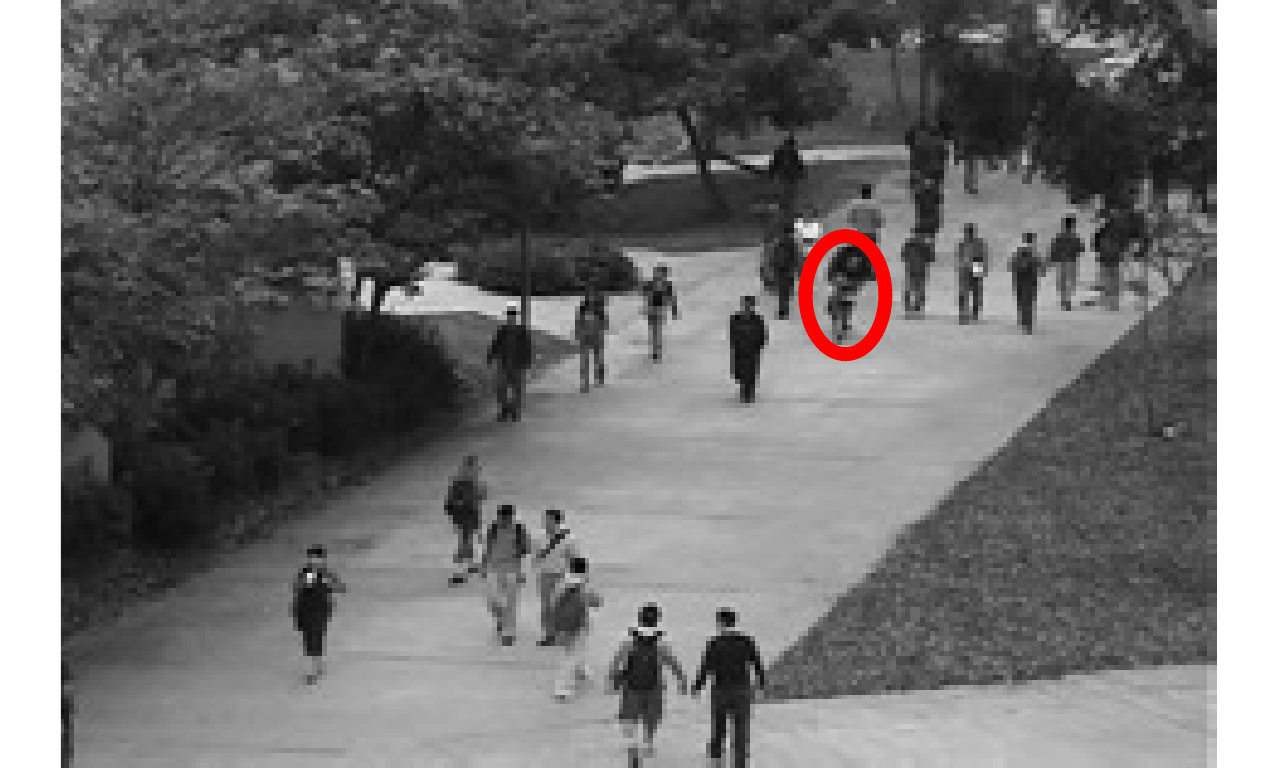}}
	\end{subfigure}
	\begin{subfigure}[b]{0.083\textwidth}
		\setlength{\fboxsep}{0pt}%
		\setlength{\fboxrule}{1pt}%
		\cfbox{red}{\includegraphics[width=\linewidth, height=1.5cm]{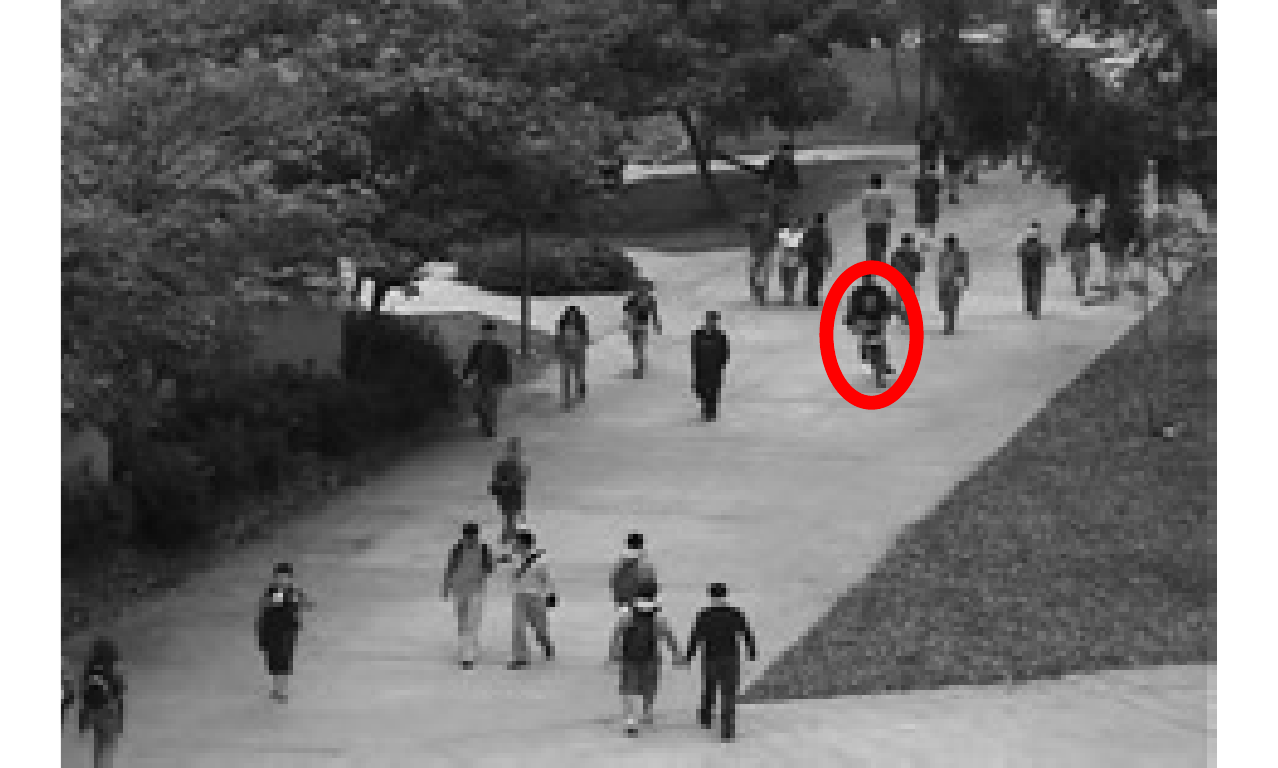}}
	\end{subfigure}
	\begin{subfigure}[b]{0.083\textwidth}
		\setlength{\fboxsep}{0pt}%
		\setlength{\fboxrule}{1pt}%
		\cfbox{red}{\includegraphics[width=\linewidth, height=1.5cm]{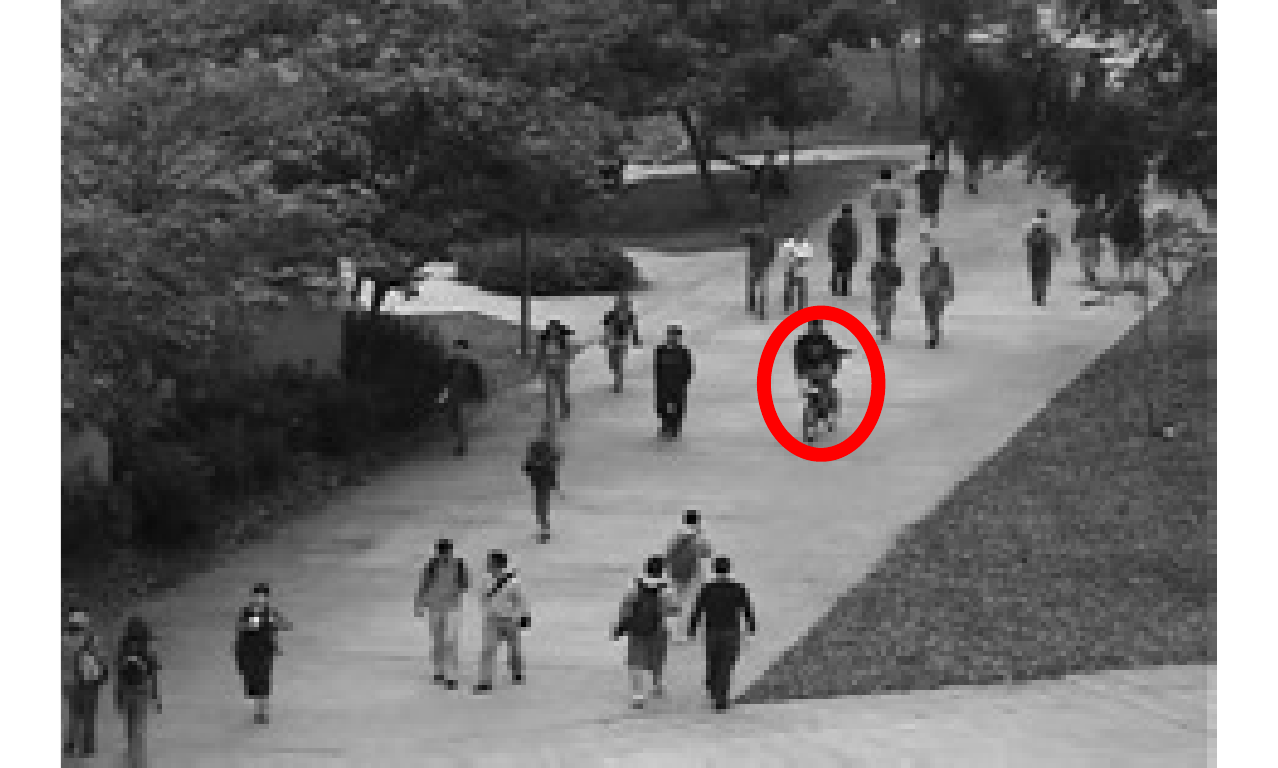}}
	\end{subfigure}
	\begin{subfigure}[b]{0.083\textwidth}
		\setlength{\fboxsep}{0pt}%
		\setlength{\fboxrule}{1pt}%
		\cfbox{red}{\includegraphics[width=\linewidth, height=1.5cm]{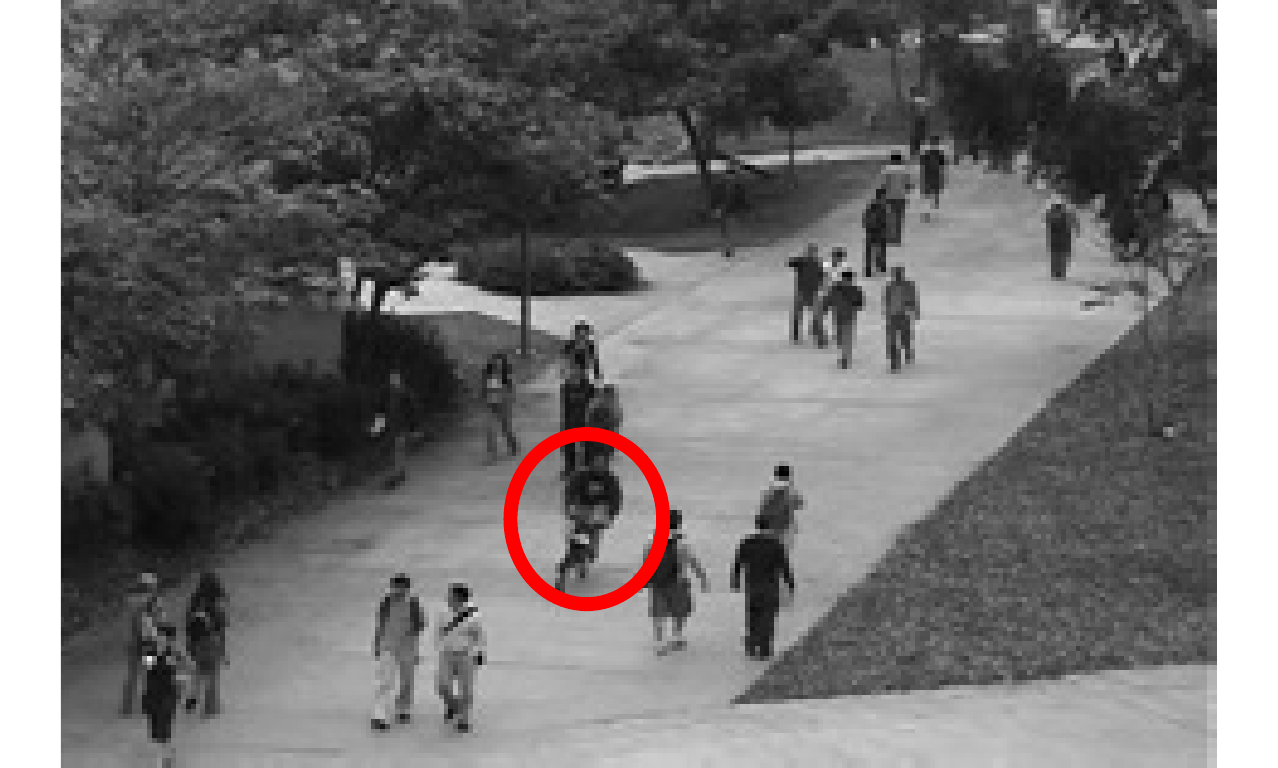}}
	\end{subfigure}
	\begin{subfigure}[b]{0.083\textwidth}
		\includegraphics[width=\linewidth, height=1.5cm]{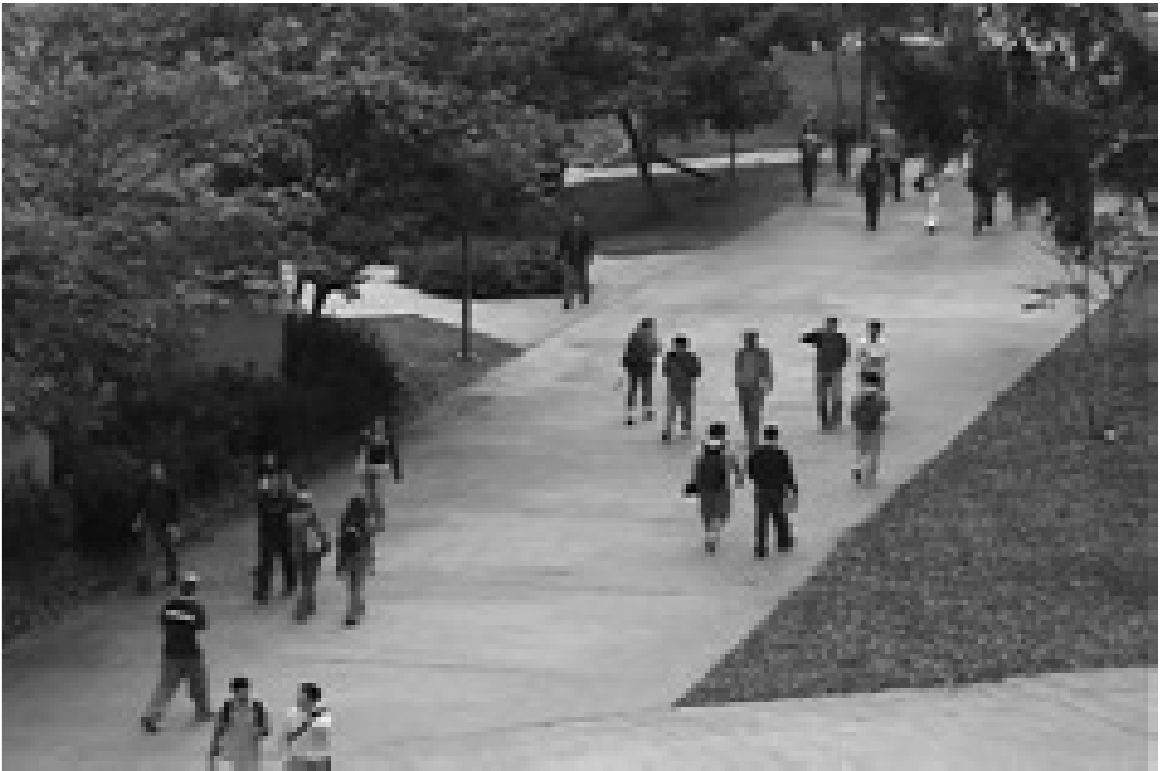}
	\end{subfigure}
	\begin{subfigure}[b]{0.083\textwidth}
		\includegraphics[width=\linewidth, height=1.5cm]{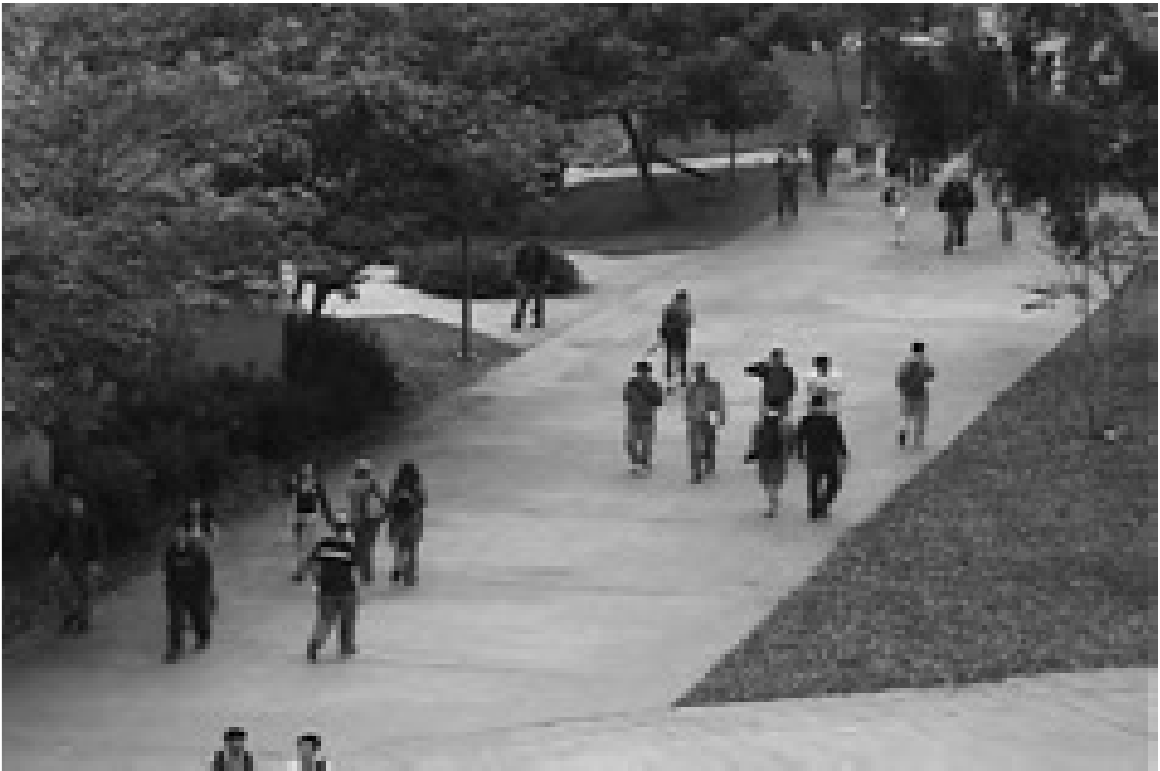}
	\end{subfigure}
	\caption{Pedestrian video 1 - Highlighted frames have the cyclist marked in red, and are detected as outliers by ROMA\_N and CoP }
	\label{fped1}
\end{figure*} 
\begin{figure*}[h]
	\begin{subfigure}[b]{0.083\textwidth}
		\includegraphics[width=\linewidth, height=1.5cm]{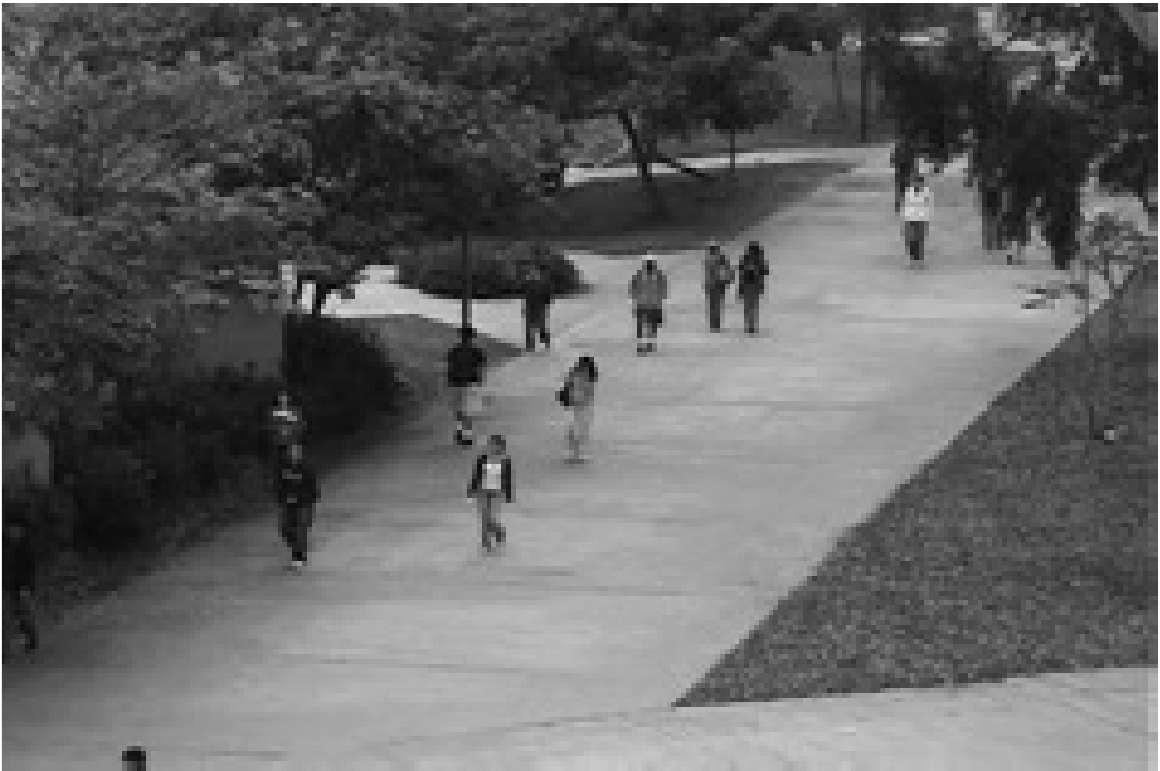}
	\end{subfigure}
	\begin{subfigure}[b]{0.083\textwidth}
		\includegraphics[width=\linewidth, height=1.5cm]{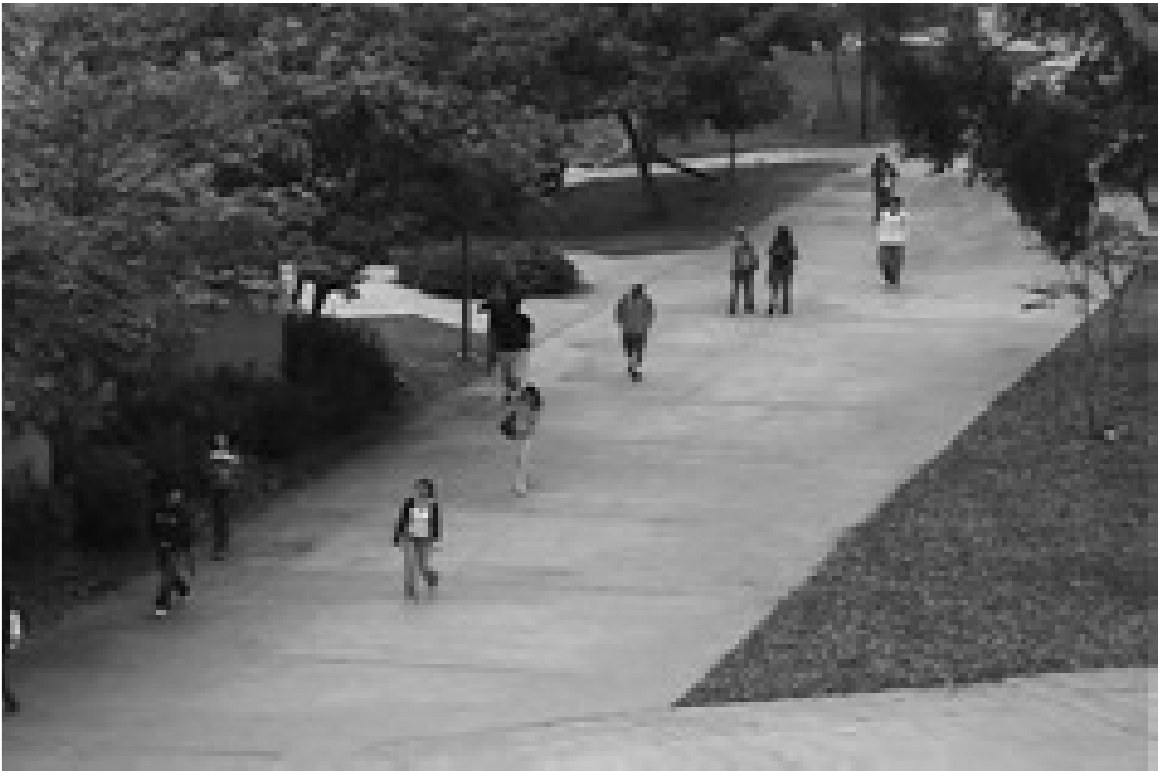}
	\end{subfigure}
	\begin{subfigure}[b]{0.083\textwidth}
		\includegraphics[width=\linewidth, height=1.5cm]{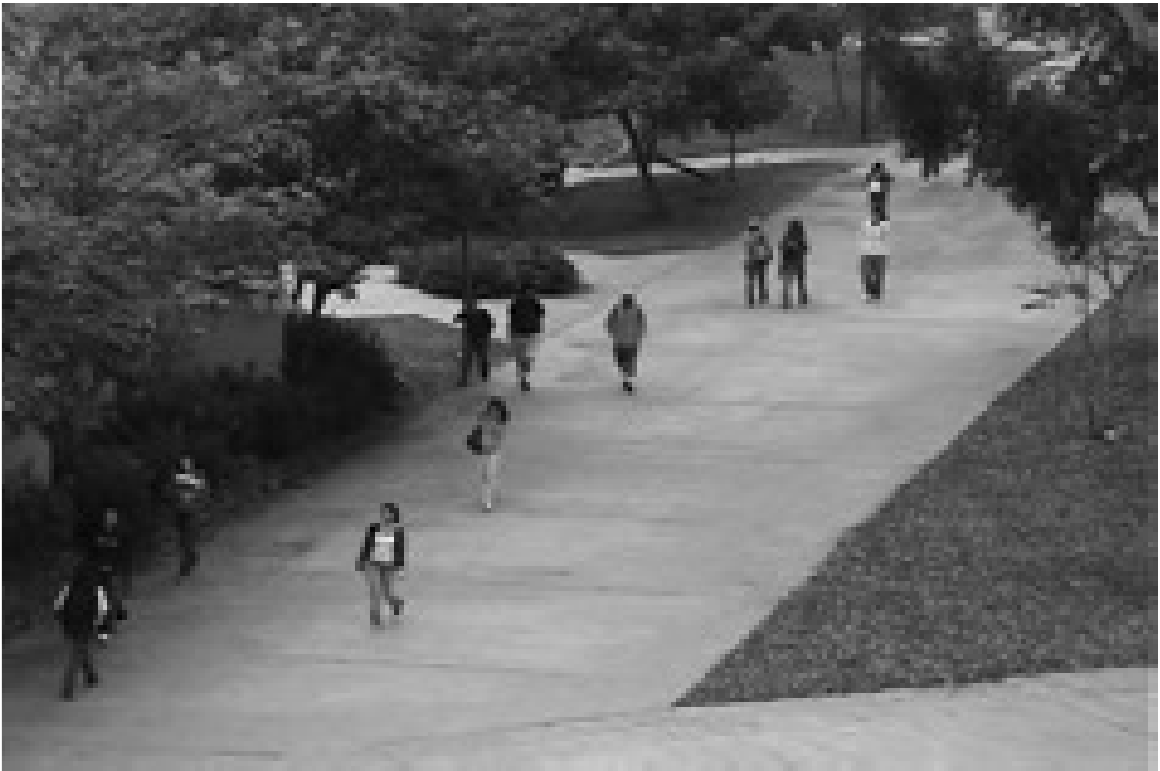}
	\end{subfigure}
	\begin{subfigure}[b]{0.083\textwidth}
		\includegraphics[width=\linewidth, height=1.5cm]{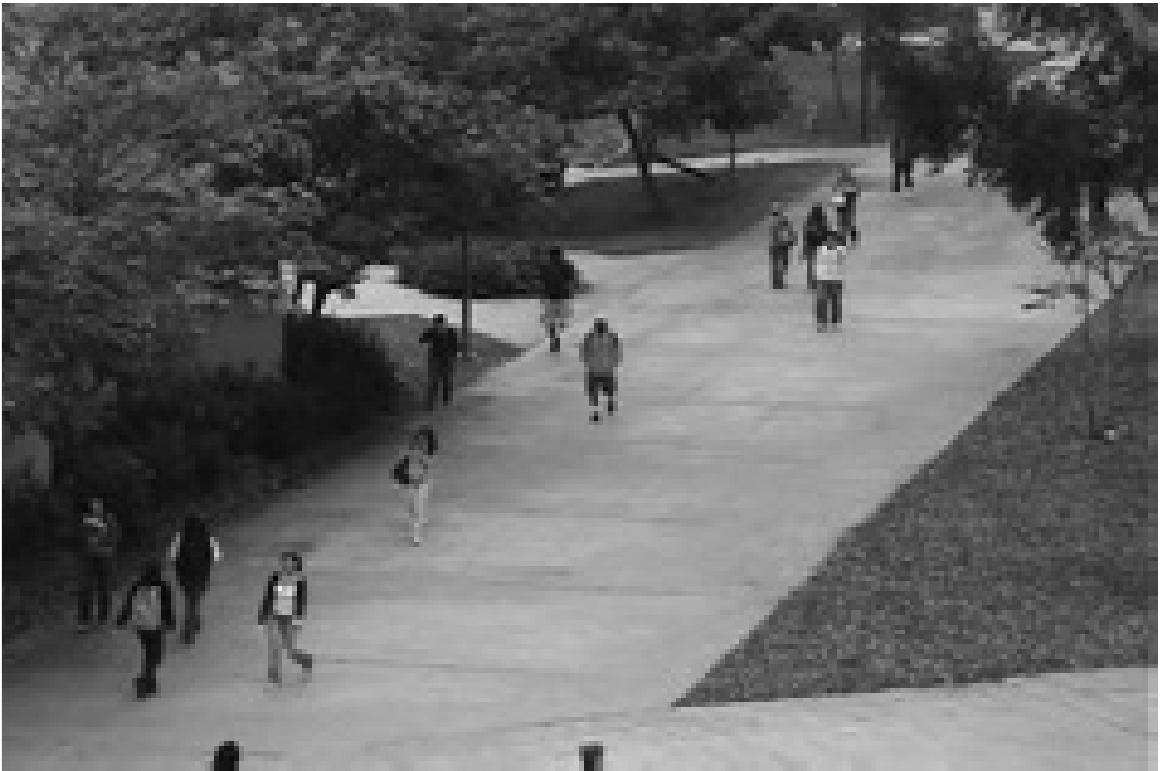}
	\end{subfigure}
	\begin{subfigure}[b]{0.083\textwidth}
		\setlength{\fboxsep}{0pt}%
		\setlength{\fboxrule}{1pt}%
		\cfbox{red}{\includegraphics[width=\linewidth, height=1.5cm]{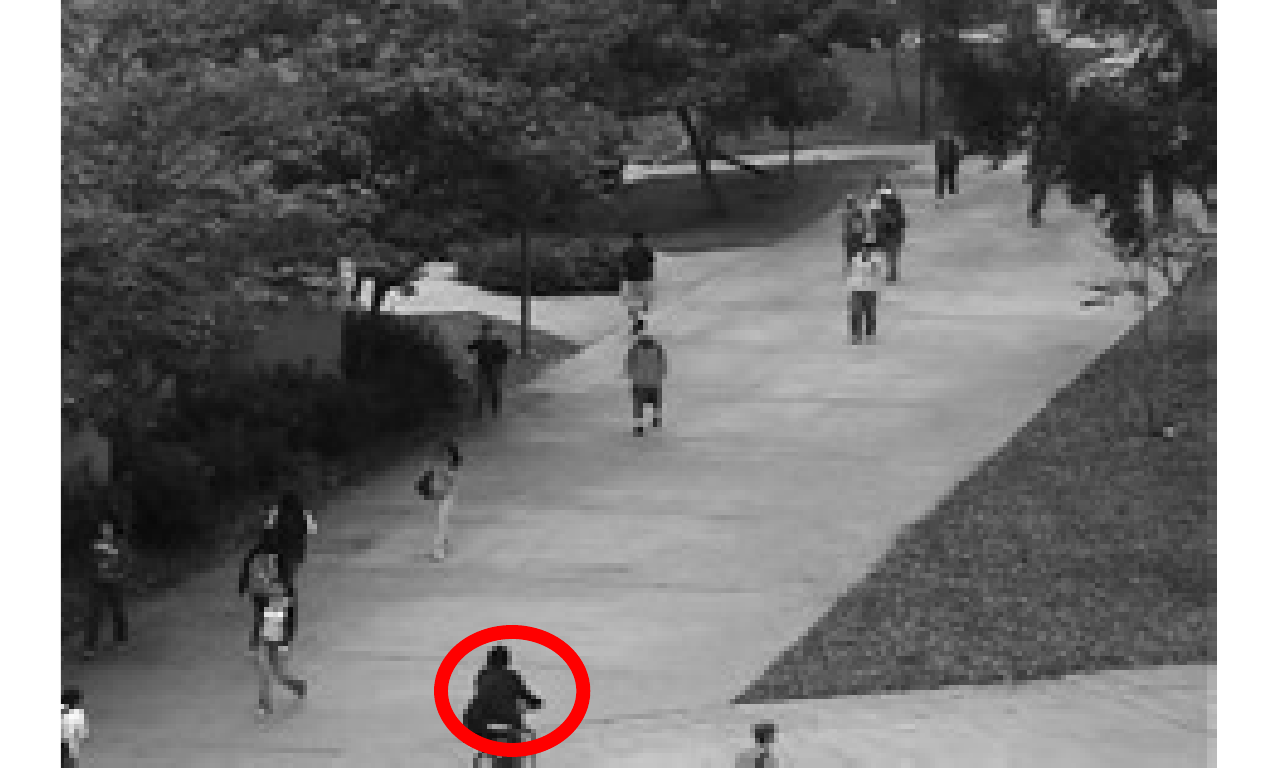}}
	\end{subfigure}
	\begin{subfigure}[b]{0.083\textwidth}
		\setlength{\fboxsep}{0pt}%
		\setlength{\fboxrule}{1pt}%
		\cfbox{red}{\includegraphics[width=\linewidth, height=1.5cm]{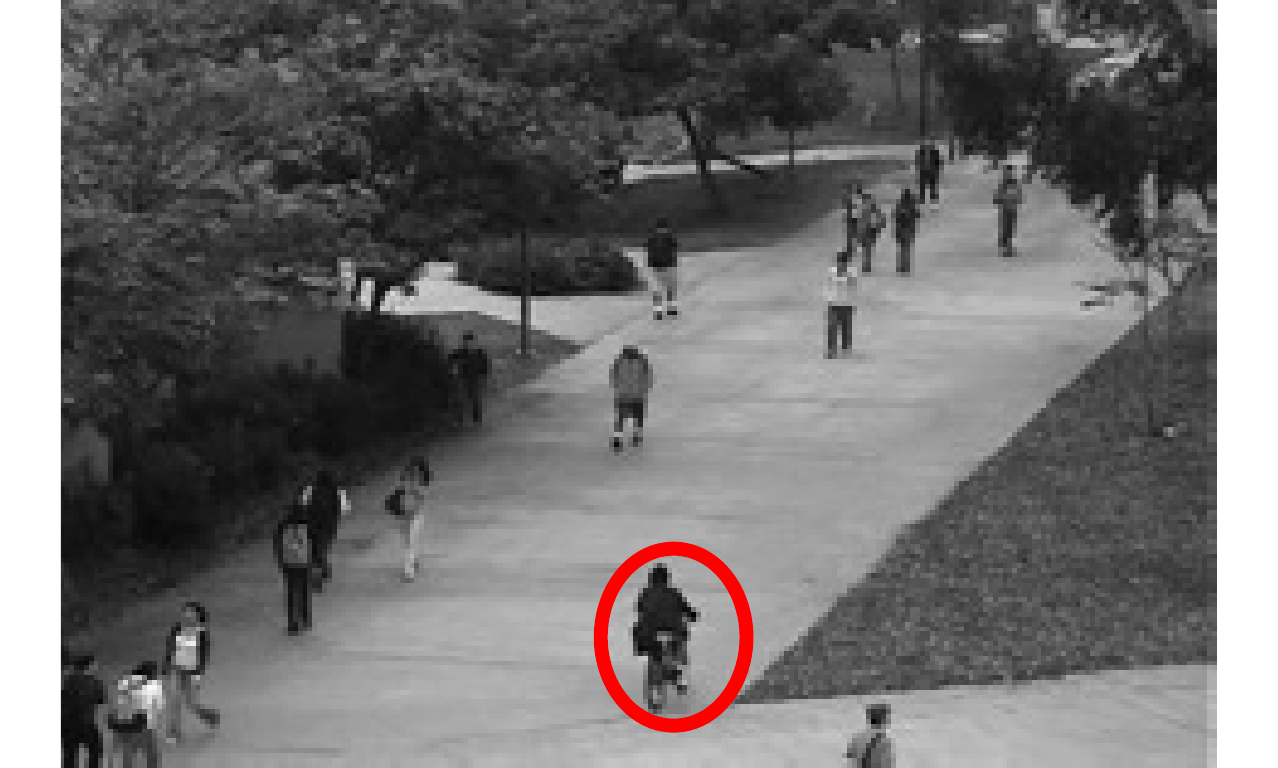}}
	\end{subfigure}
	\begin{subfigure}[b]{0.083\textwidth}
		\setlength{\fboxsep}{0pt}%
		\setlength{\fboxrule}{1pt}%
		\cfbox{red}{\includegraphics[width=\linewidth, height=1.5cm]{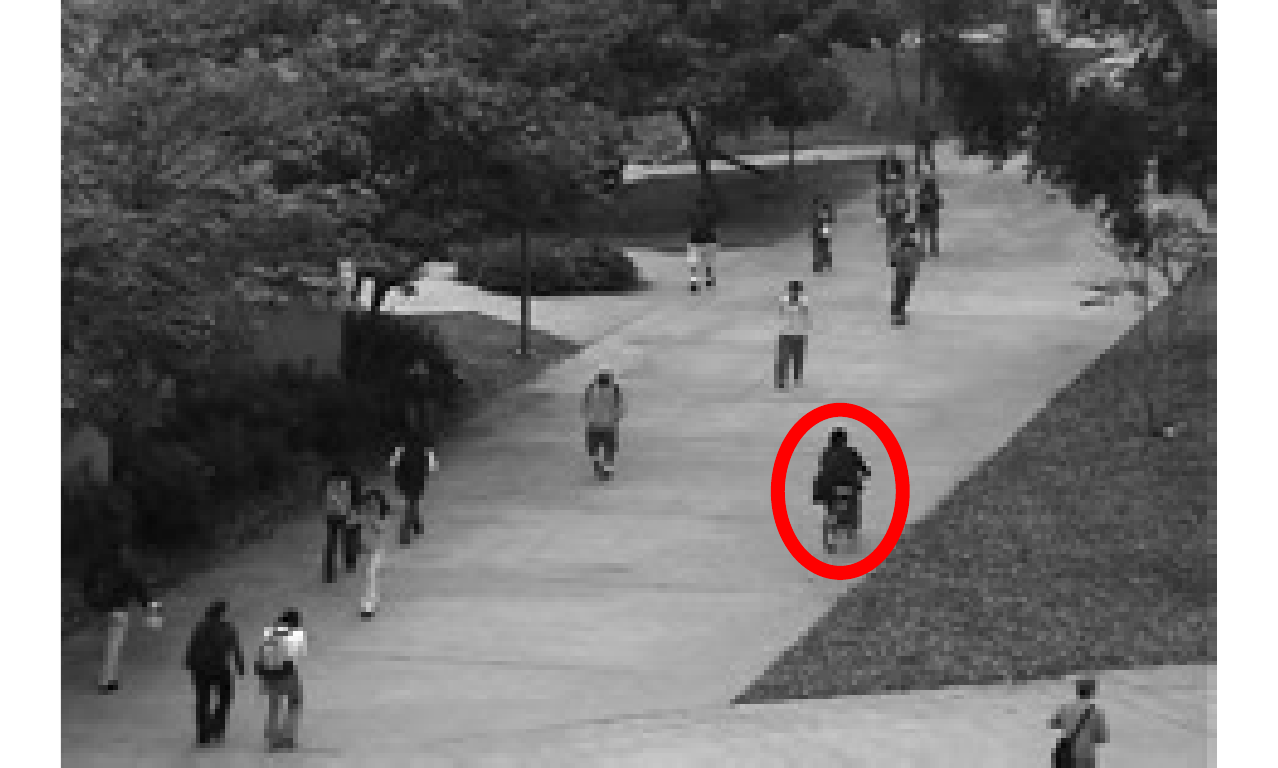}}
	\end{subfigure}
	\begin{subfigure}[b]{0.083\textwidth}
		\setlength{\fboxsep}{0pt}%
		\setlength{\fboxrule}{1pt}%
		\cfbox{red}{\includegraphics[width=\linewidth, height=1.5cm]{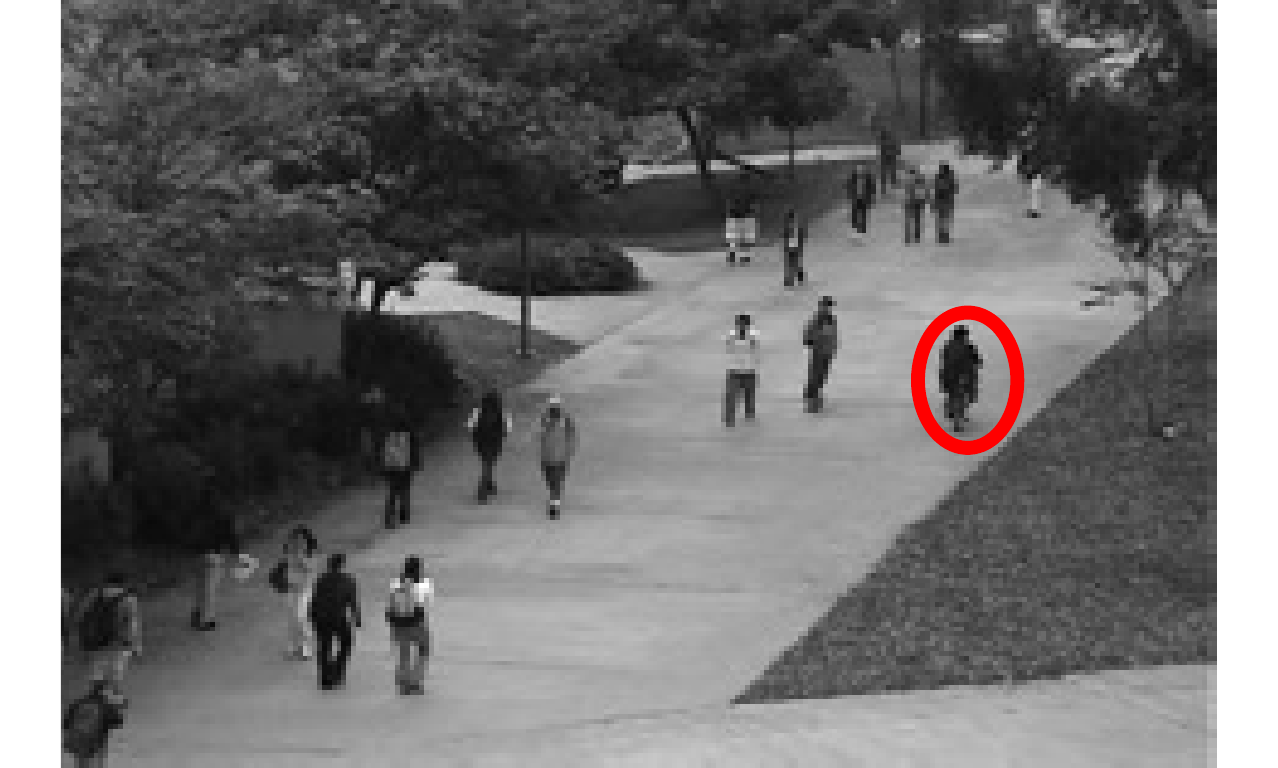}}
	\end{subfigure}
	\begin{subfigure}[b]{0.083\textwidth}
		\setlength{\fboxsep}{0pt}%
		\setlength{\fboxrule}{1pt}%
		\cfbox{red}{\includegraphics[width=\linewidth, height=1.5cm]{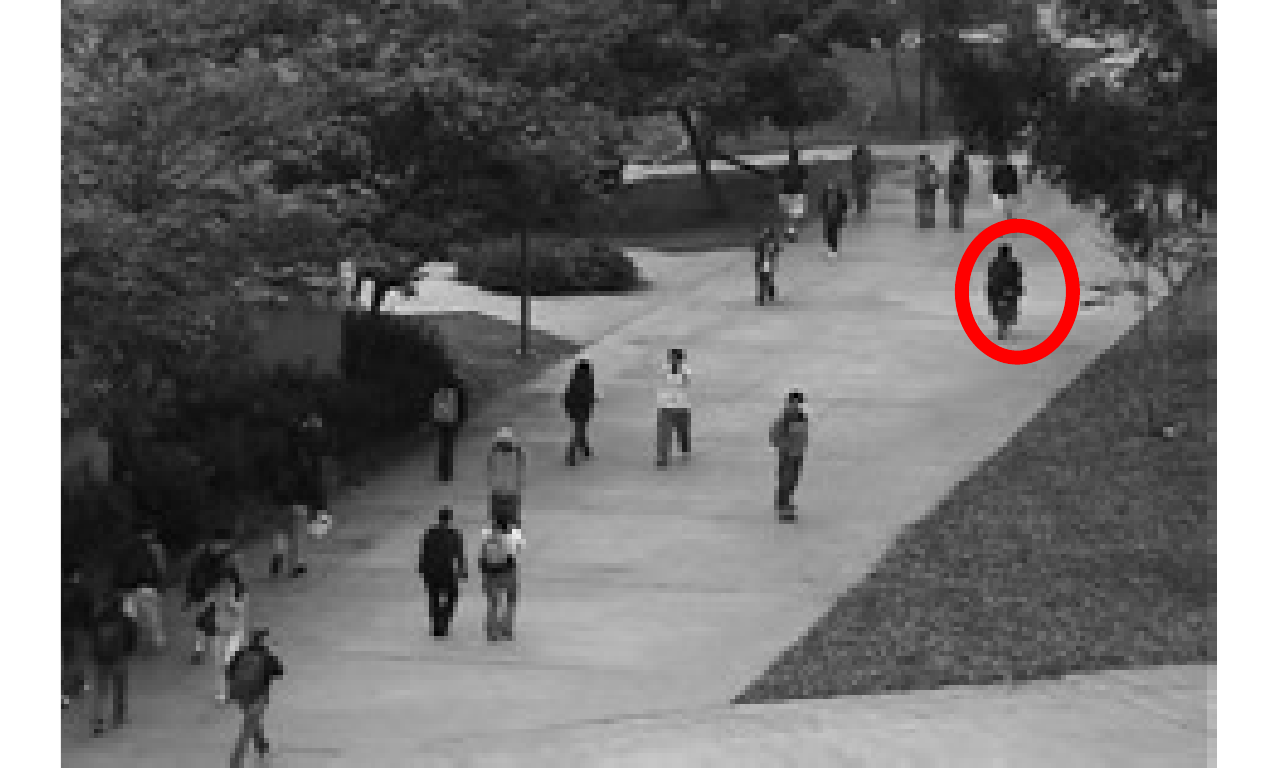}}
	\end{subfigure}
	\begin{subfigure}[b]{0.083\textwidth}
		\includegraphics[width=\linewidth, height=1.5cm]{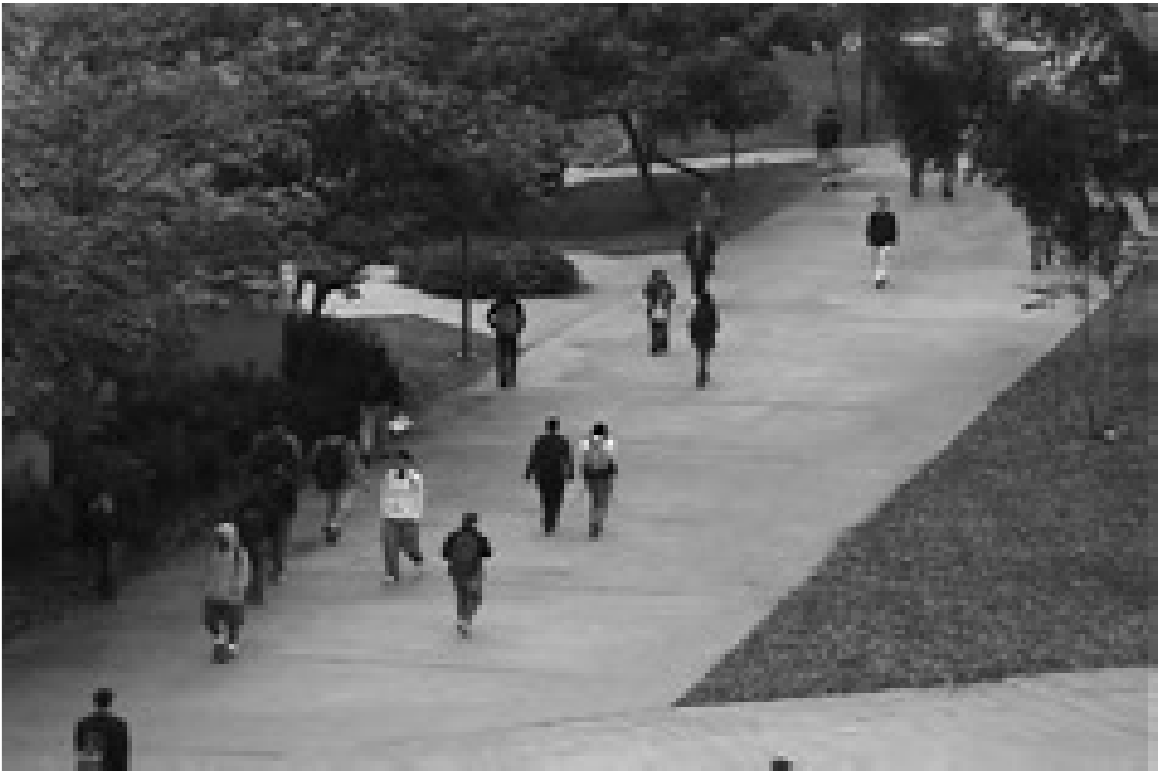}
	\end{subfigure}
	\begin{subfigure}[b]{0.083\textwidth}
		\includegraphics[width=\linewidth, height=1.5cm]{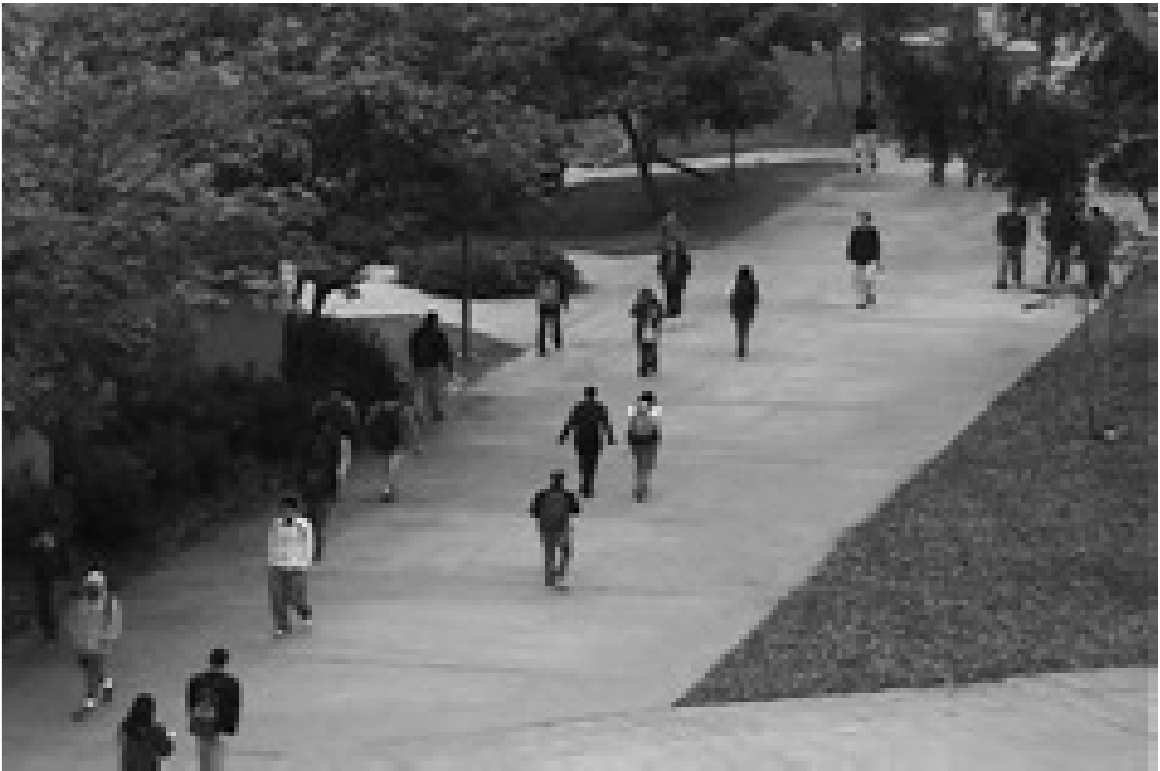}
	\end{subfigure}
	\caption{Pedestrian video 2 - Highlighted frames have the cyclist marked in red, and are detected by only ROMA\_N and not FMS or CoP}
	\label{fped2}
\end{figure*} 
\subsection{Threshold Adaptation}
A lot of the time in real data, the principal angles $\theta_{ij}$'s are not distributed in $[0,\pi]$. They may fall in a different range, say for instance in images, the image pixel values are all positive and hence the angles cannot be greater than $\frac{\pi}{2}$. If we use the theoretical threshold which assumes $\theta_{ij}$'s are distributed in $[0,\pi]$ for this case, a lot of the outliers will be flagged incorrectly as inliers as the threshold here is too high. Hence we propose a scheme of mean adaptation where the term signifying the mean in the threshold, i.e. $\frac{\pi}{2}$ is replaced by the sample mean of all the $\theta_{ij}$'s formed by the data points. This is logical, as the expected value of the principal angles $\theta_{ij}$ formed by randomly chosen inliers and outliers is $\frac{\pi}{2}$ and the sample mean in that case will tend to this value. By using this adaptation technique, we are able to handle a broader set of real data applications. In all the below experiments, threshold was adapted as described above for ROMA\_N and the results are presented.
\subsection{Image separation}
Here we perform an experiment of image clustering, where the algorithms CoP, FMS and ROMA\_N are given a matrix with each vector corresponding to an image coming from different classes. Then the output is checked to see if the algorithms can cluster at least one class out, the rest being outliers. 
\begin{table}[h]
	\caption{MNIST Image separation}
	\label{table4}
	\begin{tabularx}{\linewidth}{@{}l*{10}{C}c@{}}
		\hline
		Algorithm &\multicolumn{2}{c|}{Inlier - digit 0}&\multicolumn{2}{c}{Inlier - digit 1}\\
		&\multicolumn{1}{c|}{IR \%}&\multicolumn{1}{c|}{FID}&\multicolumn{1}{c|}{IR \%}&\multicolumn{1}{c}{FID}\\
		\hline
		FMS&34.48&\multicolumn{1}{c|}{0.5}&76.74&0.2\\
		CoP&38.53&\multicolumn{1}{c|}{0.7}&68.3&23\\
		ROMA\_N&72.34&\multicolumn{1}{c|}{7.8}&86.51&0.6\\
		\hline
	\end{tabularx}
\end{table}
First we do the experiment with MNIST dataset \cite{lecun1998gradient}. We keep $N_{\mathcal{I}} = 1000$ of the digit $0$ images and $N_{\mathcal{O}} = 200$ of other digits in a matrix $\textbf{M} \in \mathbb{R}^{784\text{x}1200}$, where each data point is formed by vectorizing $28\text{x}28$ handwritten image. Hence digit $0$ are the inliers and the outliers come from other digits. We look at inlier recovery percentage (IR\%) which is percentage of inliers recovered by the algorithm and false inlier detection (FID) which is the average number of outliers wrongly flagged as inliers in this experiment. The parameters for FMS and CoP, namely $r$ and $n_s$ are tuned such that it recovers a high number of inliers while discarding the outliers for digit 0. We set $r=50$ for FMS and $r=n_s=150$ for CoP and classify a point as in inlier if $\|\textbf{m}_i-\hat{\textbf{U}}\hat{\textbf{U}}^T\textbf{m}_i\|_2/\|\textbf{m}_i\|_2 \leq 0.2$, where $\hat{\textbf{U}}$ is the estimated subspace given by the algorithm. The same experiment is performed with digit $1$ as inlier and others as outliers with same $N_{\mathcal{I}}$ and $N_{\mathcal{O}}$ values. The digits images were chosen randomly from the dataset for each class in each trial and the results shown in Table \ref{table4} were averaged over 10 trials. FMS has the best FID across the two digits while having a decent IR\%. CoP does well in terms of FID for digit 0 while has a poor FID of 23 for digit 1. ROMA\_N with no parameter tuning has the best IR\% for both digits but the FID for digit 1 is a little high at 7.8. This advantage of not having to set parameters will be evident from the next experiment.

We perform a similar activity but this time with a different dataset - the Caltech 101 image set \cite{fei2007learning}, where we have images from 101 different categories. We use images of airplanes, bikes and cars, resized to $100\text{x}100$ grey scale images, and create $\textbf{M}  \in \mathbb{R}^{10000\text{x}N}$, where each data point is in $\mathbb{R}^{10000}$, after vectorizing the image. We try different combinations of  2 classes at the ratio of $400$ from the first class to $100$ from the other and see if the algorithms were able to cluster at least 1 class out correctly. 
\begin{table}[h]
	\caption{Caltech 101 Image clustering error}
	\label{table2}
	\begin{tabularx}{\linewidth}{@{}l*{10}{C}c@{}}
		\toprule
		Images \& mixture ratio &\multicolumn{1}{c}{FMS}&\multicolumn{1}{c}{CoP}&\multicolumn{1}{c}{ROMA\_N}\\
		\midrule
		Bikes/Cars(400/100)&1&0&0\\
		Airplanes/Cars(400/100)&1&0&0\\
		Airplanes/Bikes(400/100)&1&1&0\\
		\bottomrule
	\end{tabularx}
\end{table}
In this case, one of the classes can be seen as a structured outlier while the other being an inlier. If one of the output cluster of the algorithms, i.e. estimated inliers or outliers contain only images from one class, clustering error was considered 0. If one of the output set was empty the clustering error was considered 1, otherwise the clustering error is computed as the ratio of total number of wrongly clustered points to the total number of points. We use the same parameter setting for FMS and CoP. As can be seen from Table \ref{table2}, using this setting, FMS fails to separate the data in any case, while CoP fails to separate them in one of the cases, each failure case being total failures as one of the output i.e. inlier/outlier estimate of the algorithm was an empty set. ROMA\_N correctly separates out each class in all the cases. This experiment highlights the importance of using such a tuning free algorithm that can adapt to different datasets without the user having to set a value.
\subsection{Video Activity Detection}
The next experiment is detecting activity in a video, which is a structured outlier detection problem \cite{rahmani2016coherence}. We have videos with a background and some activity occurring in some frames like a person entering or leaving. The aim is to detect those frames where the activity occurs. The frames with activity can be viewed as outliers and the background frames are inliers. We compare the performance of ROMA\_N in this context with CoP and FMS in video data from the wallflower paper \cite{wallflower-principles-and-practice-of-background-maintenance}. First we will use the Waving tree video used in \cite{rahmani2016coherence}. We set the parameters for CoP same as in example D.2 in \cite{rahmani2016coherence} while taking $n_s = 30$ samples to build the subspace basis. We use $r=3$ and for FMS and CoP, classify a point as in inlier if $\|\textbf{m}_i-\hat{\textbf{U}}\hat{\textbf{U}}^T\textbf{m}_i\|_2/\|\textbf{m}_i\|_2 \leq 0.2$. The video, where there is a waving tree in the background and a person enters and leaves the frame in between is shown in Fig. \ref{fwavingtree}. The outliers are the frames containing the person. It was seen that ROMA\_N performs equally as well as CoP or FMS in detecting outlier frames highlighted in Fig. \ref{fwavingtree}. Using Heckel's algorithm, i.e. \cite{heckel2015robust} for outlier detection for this activity did not yield a result as the algorithm could not detect any outlier frames confirming that it is only suited to unstructured outliers strictly following th model. We do this activity in the Camouflage video from the same dataset, a similar video where a person enters and leaves the frame in a static background of a computer screen. Outlier frames were detected correctly by CoP and ROMA\_N as shown in Fig. \ref{fcam}, but FMS failed to detect a few outliers as seen in Fig. \ref{fcamsub}. 

We repeat the same experiment in videos from a different dataset - the UCSD anomaly detection dataset \cite{mahadevan2010anomaly}. This dataset contains videos of a pedestrian pathway - a crowded background. An anomaly is when there are cycles, skaters or carts that move along the pathway and the aim is to detect those frames where there are such movements. This is more challenging since the background is dynamic and especially since, here we do not use the training data to train for the background. We simply use two of the testing videos with cycle movement as our input, making the problem unsupervised. Given these frames with no prior training, the aim is to detect those frames where there is a cycle through the pathway, which otherwise contains pedestrians only. We use the same parameter settings that succeeded well for CoP and FMS in the previous dataset. For the first video, CoP and ROMA\_N detects all outlier frames and FMS failed to detect most as shown in Fig. \ref{fped1}. While ROMA\_N identified a large number of background frames (92 out of 107), CoP could identify only 40 inlier frames. In the second video, both FMS and CoP failed to detect any outlier frame while ROMA identified all outlier frames as seen in Fig. \ref{fped2}. Hence the parameter settings which worked well in the wallflower dataset does not work well here, while ROMA\_N by virtue of being parameter free adapts better.
\section{Conclusion}\label{s4}
In this paper a simple, fast, parameter free algorithm for robust PCA was proposed, which does outlier removal without assuming the knowledge of dimension of the underlying subspace and the number of outliers in the system. The performance was analyzed both theoretically and numerically. The importance of this work lies in the parameter free nature of the proposed algorithm since estimating unknown parameters or tuning for free parameters in an algorithm is a cumbersome task. Here the inliers lie in a single low dimensional subspace and unstructured outliers, structured outliers and a mixture of both were considered.
\appendices
\section{Some useful Lemmas, proofs of Lemmas \ref{lthetao} and \ref{lthetai}}\label{app1}
\begin{lemma}[Lemma 12 from \cite{cai2013distributions}]\label{l1}
	Let $\mathbf{x}_1, \mathbf{x}_2... \in \mathbb{S}^{n-1}$ be random points independently chosen with uniform distribution in $\mathbb{S}^{n-1}$, and let $\theta_{ij}$ be defined as in (\ref{etheta}), then pdf of $\theta_{ij}$ is given by:
	\begin{equation}\label{e4}
	h(\theta) = \dfrac{1}{\sqrt{\pi}}\dfrac{\Gamma(\frac{n}{2})}{\Gamma(\frac{n-1}{2})} (sin\theta)^{n-2} \hskip20pt \theta \in [0, \pi]
	\end{equation}
\end{lemma}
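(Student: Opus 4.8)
The plan is to derive the density of $\theta_{12}$ (the general pair $\theta_{ij}$ follows identically, since $\mathbf{x}_1,\mathbf{x}_2,\dots$ are i.i.d.\ uniform on $\mathbb{S}^{n-1}$, so every pair $(\mathbf{x}_i,\mathbf{x}_j)$, $i\neq j$, has the same joint law as $(\mathbf{x}_1,\mathbf{x}_2)$) by exploiting the orthogonal invariance of the uniform measure on $\mathbb{S}^{n-1}$ together with the standard polar decomposition of spherical surface measure. First I would condition on $\mathbf{x}_1$ and apply a rotation (which fixes the uniform law of $\mathbf{x}_2$) carrying $\mathbf{x}_1$ to a fixed pole $\mathbf{e}_1$; then $\theta_{12}=\cos^{-1}\langle \mathbf{e}_1,\mathbf{x}_2\rangle$ is simply the polar angle of the still-uniform point $\mathbf{x}_2$ measured from $\mathbf{e}_1$, and the conditioning drops out because the resulting law of $\theta_{12}$ no longer depends on $\mathbf{x}_1$.

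Next I would write $\mathbf{x}_2=(\cos\theta,\ (\sin\theta)\,\omega)$ with $\theta\in[0,\pi]$ and $\omega\in\mathbb{S}^{n-2}$, and invoke the disintegration of the uniform surface measure in these coordinates, $d\sigma_{n-1}(\mathbf{x}_2)=(\sin\theta)^{n-2}\,d\theta\,d\sigma_{n-2}(\omega)$. Integrating out $\omega$ shows that the density of $\theta_{12}$ is proportional to $(\sin\theta)^{n-2}$ on $[0,\pi]$. The normalizing constant then follows from the Wallis/Beta identity $\int_0^\pi(\sin\theta)^{n-2}\,d\theta = B\!\left(\tfrac{n-1}{2},\tfrac12\right)=\sqrt{\pi}\,\Gamma\!\left(\tfrac{n-1}{2}\right)\big/\Gamma\!\left(\tfrac n2\right)$, which yields exactly $h(\theta)=\tfrac{1}{\sqrt{\pi}}\,\tfrac{\Gamma(n/2)}{\Gamma((n-1)/2)}(\sin\theta)^{n-2}$ as claimed.

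If one prefers not to quote the surface-measure disintegration, an equivalent route is the Gaussian representation: take $\mathbf{g}_1,\mathbf{g}_2\sim\mathcal{N}(\mathbf{0},\mathbf{I}_n)$ independent, set $\mathbf{x}_i=\mathbf{g}_i/\|\mathbf{g}_i\|$, rotate so $\mathbf{g}_1\parallel\mathbf{e}_1$, and observe $\cos\theta_{12}=(\mathbf{g}_2)_1/\|\mathbf{g}_2\|$, whose square is $\mathrm{Beta}(\tfrac12,\tfrac{n-1}{2})$ with an independent Rademacher sign; the change of variables $t=\cos\theta$ then recovers the same density. Either way, the one genuinely substantive ingredient — and hence the main obstacle — is establishing the $(\sin\theta)^{n-2}$ factor, i.e.\ the Jacobian of spherical polar coordinates (equivalently, that a single coordinate of a uniform point on $\mathbb{S}^{n-1}$ has density proportional to $(1-t^2)^{(n-3)/2}$ on $[-1,1]$); the rotational-invariance reduction and the Beta-function normalization are both routine. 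Since this is precisely Lemma~12 of \cite{cai2013distributions}, one may alternatively simply cite it.
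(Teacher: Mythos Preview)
Your derivation is correct and is in fact the standard argument for this classical density. However, the paper does not prove this lemma at all: it is stated in Appendix~A as ``Lemma~12 from \cite{cai2013distributions}'' and simply quoted, with no proof environment following it. So there is nothing to compare your approach against; you have supplied strictly more than the paper does, and your own closing remark that ``one may alternatively simply cite it'' is exactly the route the paper takes.
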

Lemma \ref{l1} implies that the angles $\theta_{ij}$ are identically distributed $\forall i, j$, $i\neq j$ with the pdf $h(\theta)$. The expectation of this distribution is $\frac{\pi}{2}$ and the angles concentrate around $\frac{\pi}{2}$ as $n$ grows. Using the results in \cite{cai2013distributions}, we will state the following:
\begin{remark}\label{r1}
	$h(\theta)$ can be approximated by the pdf of Gaussian distribution with mean $\frac{\pi}{2}$ and variance $\frac{1}{n-2}$, for higher dimensions specifically for $n \geq 5$. In fact $\theta_{ij}$ converges weakly in distribution to $\mathcal{N}(\frac{\pi}{2},\frac{1}{n-2})$ as $n \to \infty$. 
\end{remark}
The remark has been validated in \cite{cai2013distributions}.
\begin{lemma}\label{lvirtual}
The angle between two points $i$ and $j$, $\theta_{ij}$ have the same statistical properties as the angle between two points chosen uniformly at random from $\mathbb{S}^{n-1}$, for all of the following cases:
	\begin{itemize}
		\item[a)] Under Assumption 1 for inliers and outliers when either $i$ or $j \in \mathcal{O}$.
		\item[b)] Under Assumption 2 for outliers and Assumption 1 or Assumption 3 for inliers when $i \in \mathcal{I}$ and $j \in \mathcal{O}$ or vice versa.
	\end{itemize} 
\end{lemma}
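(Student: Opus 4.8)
\emph{Proof proposal.} The plan is to reduce every listed case to a single elementary fact about the uniform distribution on the sphere: if $\mathbf{z}$ is uniform on $\mathbb{S}^{n-1}$ and $\mathbf{y}\in\mathbb{S}^{n-1}$ is any unit random vector independent of $\mathbf{z}$, then $\mathbf{z}^T\mathbf{y}$ has the same law as $\mathbf{z}^T\mathbf{e}_1$, the first coordinate of a uniform point. To see this I would condition on $\mathbf{y}=\mathbf{u}$, pick a rotation $R\in O(n)$ with $R\mathbf{u}=\mathbf{e}_1$, and invoke rotational invariance of the uniform law: $\mathbf{z}^T\mathbf{u}=(R\mathbf{z})^T(R\mathbf{u})\stackrel{d}{=}\mathbf{z}^T\mathbf{e}_1$; since this conditional law does not depend on $\mathbf{u}$, it coincides with the unconditional one. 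Hence $\theta_{ij}=\cos^{-1}(\mathbf{x}_i^T\mathbf{x}_j)$ has the density $h(\theta)$ of Lemma \ref{l1} as soon as one of $\mathbf{x}_i,\mathbf{x}_j$ is, conditionally on the other, uniform on $\mathbb{S}^{n-1}$ and independent of it --- which is precisely the configuration of two i.i.d.\ uniform points.

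For case (a) I would simply note that, under Assumption \ref{amain}, the outlier among $i,j$ is drawn uniformly on $\mathbb{S}^{n-1}$ and is independent of the other point (a second outlier is also i.i.d.\ uniform; an inlier is built from the independently drawn subspace $\mathcal{U}$). Applying the fact above with $\mathbf{z}$ the outlier and $\mathbf{y}$ the remaining point settles (a).

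For case (b), with $i\in\mathcal{I}$ and $j\in\mathcal{O}$, the outlier need not be uniform (Assumption \ref{a2}), so instead I would show that the inlier $\mathbf{x}_i$ is \emph{marginally} uniform on $\mathbb{S}^{n-1}$ and then apply the same fact with the roles of $\mathbf{z}$ and $\mathbf{y}$ interchanged. Under Assumption \ref{amain} this is immediate from symmetry: for $R\in O(n)$, $R\mathbf{x}_i$ is uniform on $(R\mathcal{U})\cap\mathbb{S}^{n-1}$ with $R\mathcal{U}$ again a uniformly random $r$-dimensional subspace, so the law of $\mathbf{x}_i$ is rotation invariant, hence uniform. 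Under Assumption \ref{a3} I would observe that the defining constraint $\theta_{k\ell}\in[\theta_{min}^{\mathcal{I}},\theta_{max}^{\mathcal{I}}]$ for $k,\ell\in\mathcal{I}$ is invariant under a simultaneous rotation of the inlier set inside $\mathcal{U}$; therefore the joint law of the structured inliers is rotation invariant within $\mathcal{U}$, every single marginal is uniform on $\mathcal{U}\cap\mathbb{S}^{n-1}$, and averaging over the uniformly random $\mathcal{U}$ makes $\mathbf{x}_i$ uniform on $\mathbb{S}^{n-1}$. Since the inlier-generating mechanism (choice of $\mathcal{U}$ and of the inlier points in it) is independent of the outlier-generating mechanism, conditioning on $\mathbf{x}_j$ leaves $\mathbf{x}_i$ uniform, so the fact applies and gives $\theta_{ij}\sim h(\theta)$.

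The main obstacle I anticipate is the Assumption \ref{a3} part of case (b): one must argue carefully that conditioning an i.i.d.\ uniform sample on the rotation-invariant event of Assumption \ref{a3} still leaves each individual marginal uniform within $\mathcal{U}$, and that this uniformity survives conditioning on the independently generated outlier $\mathbf{x}_j$. Once this is in place, the rest is a routine application of the rotational-invariance fact together with the identification of the resulting density with $h(\theta)$ from Lemma \ref{l1}.
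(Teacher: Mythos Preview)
Your proposal is correct and follows essentially the same idea as the paper's proof: use that at least one of the two points is (marginally) uniform on $\mathbb{S}^{n-1}$ and independent of the other, then invoke rotational invariance. The paper's own argument is extremely terse---it simply asserts that ``to an outlier, all the other points are just a part of a set of uniformly chosen independent points'' for part (a), and that because $\mathcal{U}$ is uniformly random the inlier--outlier pair is ``like uniformly chosen independent points'' for part (b)---whereas you make the underlying mechanism explicit via the conditioning-and-rotation lemma and, for case (b), correctly identify that it is the \emph{inlier} that carries the uniformity once $\mathcal{U}$ is integrated out. Your treatment of Assumption~\ref{a3}, noting that the pairwise-angle constraint is invariant under rotations within $\mathcal{U}$ so that each inlier marginal remains uniform on $\mathcal{U}\cap\mathbb{S}^{n-1}$, is a genuine refinement that the paper does not spell out.
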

 \begin{proof}
 		Part a) - By using Assumption \ref{amain}, we set the outliers to be chosen uniformly at random from all the points in $\mathbb{S}^{n-1}$ and the subspace $\mathcal{U}$ is also chosen uniformly at random. Hence to an outlier, all the other points are just a part of a set of uniformly chosen independent points in $\mathbb{S}^{n-1}$.\\
 		Part b) - When the inliers follow Assumption 1 or Assumption 3, the subspace $\mathcal{U}$ is chosen uniformly at random. Even though the outliers are structured, under Assumption 2, when one looks at two points, one from $\mathcal{O}$ and the other from $\mathcal{I}$, they are like uniformly chosen independent points in $\mathbb{S}^{n-1}$. Hence all the angles between an inlier and outlier have the same statistical properties as the angle between two points chosen uniformly at random from $\mathbb{S}^{n-1}$.	
 \end{proof}
Using the above results, we can prove Lemmas \ref{lthetao} and \ref{lthetai}.
\begin{proof}[\textbf{Proof of Lemma \ref{lthetao}}]
 The results follow directly from Lemma \ref{l1}, Remark \ref{r1} and Lemma \ref{lvirtual}.\\
\end{proof}
\begin{proof}[\textbf{Proof of Lemma \ref{lthetai}}]
	Assuming that $r$ is also large enough i.e. $r >5$, the points within the subspace are uniformly chosen points in the hypersphere $\mathbb{S}^{r-1}$ and their distribution is as in equation (\ref{e4}) with $n$ replaced by $r$. Hence from Lemma \ref{l1}, Remark \ref{r1} and Assumption \ref{amain}, the lemma is proved.
\end{proof}
\section{Properties of $\phi_{ij}$}\label{app3}
Here, we will look at $\phi_{ij}$ as defined in (\ref{ephi}).
We will use the Gaussian approximation of the pdf $h(.)$ and obtain the approximate mean and variance values for $\phi_{ij}$ using the following lemma. 
\begin{lemma}\label{l3}
	Let $U \sim \mathcal{N}(\mu, \sigma^2)$, define a random variable $V$ as:
	\begin{align*}
	V=\begin{cases}
	U & \text{for } U\leq \mu\\
	2\mu-U & \text{for } U > \mu\\
	\end{cases}
	\end{align*}
	The expectation and variance of $V$ are given by $\mathbb{E}(V) = \mu-\sqrt{\frac{2}{\pi}}\sigma$ and $var(V) = \sigma^2(1-\frac{2}{\pi})$. 
	Also $V > \mu-c\sigma$ w.p $ 2F_{\mathcal{N}}(c)-1$.
\end{lemma}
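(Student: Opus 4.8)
The plan is to recognize $V$ as a \emph{folded} normal random variable and reduce all three claims to elementary properties of the half-normal distribution.

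First I would rewrite the piecewise definition in the single closed form $V = \mu - |U-\mu|$. Indeed, when $U \le \mu$ one has $\mu - |U-\mu| = \mu - (\mu-U) = U$, and when $U > \mu$ one has $\mu - |U-\mu| = \mu - (U-\mu) = 2\mu - U$, so both branches agree; in particular $V \le \mu$ almost surely. Writing $U = \mu + \sigma Z$ with $Z \sim \mathcal{N}(0,1)$, this becomes $V = \mu - \sigma|Z|$, so everything follows from the moments and tail of $|Z|$.

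For the mean and variance, I would use the standard facts $\mathbb{E}|Z| = 2\int_0^\infty \frac{z}{\sqrt{2\pi}} e^{-z^2/2}\,dz = \sqrt{\tfrac{2}{\pi}}$ (via the substitution $t = z^2/2$) and $\mathbb{E}|Z|^2 = \mathbb{E} Z^2 = 1$. Linearity then gives $\mathbb{E}(V) = \mu - \sqrt{\tfrac{2}{\pi}}\,\sigma$, and since $var(|Z|) = \mathbb{E}|Z|^2 - (\mathbb{E}|Z|)^2 = 1 - \tfrac{2}{\pi}$ and $var(V) = \sigma^2 var(|Z|)$, we obtain $var(V) = \sigma^2\big(1 - \tfrac{2}{\pi}\big)$. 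For the tail statement, for any $c \ge 0$ we have $\{V > \mu - c\sigma\} = \{\mu - \sigma|Z| > \mu - c\sigma\} = \{|Z| < c\}$, and by symmetry of the standard normal $\mathbb{P}(|Z| < c) = F_{\mathcal{N}}(c) - F_{\mathcal{N}}(-c) = 2F_{\mathcal{N}}(c) - 1$, which is the claimed probability.

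There is no real obstacle here: the only computation of any substance is the elementary integral $\mathbb{E}|Z| = \sqrt{2/\pi}$, and the rest is the observation $V = \mu - \sigma|Z|$ together with linearity of expectation and symmetry of the Gaussian.
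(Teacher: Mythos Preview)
Your proof is correct and is genuinely cleaner than the paper's. The key move you make---writing $V=\mu-\sigma|Z|$ with $Z\sim\mathcal{N}(0,1)$---reduces the whole lemma to the single half-normal integral $\mathbb{E}|Z|=\sqrt{2/\pi}$ and the symmetry identity $\mathbb{P}(|Z|<c)=2F_{\mathcal{N}}(c)-1$. The paper instead works from first principles: it derives the cdf $F_V(v)=\mathbb{P}(U\le v)+1-\mathbb{P}(U\le 2\mu-v)$ for $v\le\mu$, differentiates to get $f_V(v)=2f_U(v)$ on $(-\infty,\mu]$, then computes the moment generating function $M_V(t)=2e^{\mu t+\sigma^2t^2/2}F_{\mathcal{N}}(-\sigma t)$ and differentiates twice at $t=0$ to extract the first two moments; the tail probability is obtained by integrating $2f_U$ directly. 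Both routes are rigorous, but your representation $V=\mu-\sigma|Z|$ bypasses the MGF computation entirely and makes the variance formula $var(V)=\sigma^2\,var(|Z|)$ immediate, whereas the paper's MGF derivation is more labor-intensive and somewhat obscures why the answers take the form they do.
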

\begin{proof}The cdf of $V$ for $v \leq \mu$ is given by,
	\begin{align*}
	F_V(v) &= \mathbb{P}(V \leq v) \\&= \mathbb{P}(\{U \leq v|U\leq \mu \}\cup \{2\mu-U \leq v| U> \mu\})\\
	&=\mathbb{P}(U \leq v) + 1- \mathbb{P}(U \leq 2\mu-v) 
	\end{align*}
	The conditioning vanishes because $v \leq \mu$. The cdf is 1 when $v > \mu$. The pdf is given after differentiation. Note that since the pdf of $U$ is symmetric around $\mu$, $f_U(2\mu-v) = f_U(v)$. Thus $f_V(v) = 2f_U(u)$ for $U\leq \mu$ and $0$ otherwise.The moment generating function of $V$, $M_V(t)$ is thus given by 
	\begin{align*}
	M_V(t) &= \mathbb{E}(e^{Vt})
	= \int\limits_{-\infty}^{\mu}e^{vt}\dfrac{2}{\sqrt{2\pi}\sigma}e^{-\frac{(v-\mu)^2}{2\sigma^2}}dv \\
	&=2e^{\mu t+\frac{\sigma^2t^2}{2}}F_{\mathcal{N}}(-\sigma t) \hskip10pt
	\end{align*}
	where $F_{\mathcal{N}}(.)$ is the standard normal cdf. The last step was using a change of variable in integration $z = \frac{v-\mu}{\sigma} -\sigma t$ and definition of $F_{\mathcal{N}}(.)$. Using MGF, we can easily derive the moments of $V$ as $\mathbb{E}(V) = \Big[\dfrac{d M_V(t) }{dt}\Big]_{t=0} = \mu-\sqrt{\dfrac{2}{\pi}}\sigma$ and $\mathbb{E}(V^2)  = \Big[\dfrac{d^2 M_V(t) }{dt^2}\Big]_{t=0}= \sigma^2+\mu^2-2\mu\sigma\sqrt{\dfrac{2}{\pi}}$, using the result that $\dfrac{dF_{\mathcal{N}}(\sigma t)}{dt} = -\frac{\sigma}{\sqrt{2\pi}}e^{-\frac{\sigma^2 t^2}{2}}$. This also gives the variance as $var(V) = \mathbb{E}(V^2)-(\mathbb{E}(V))^2 = \sigma^2(1-\dfrac{2}{\pi})$.
	\begin{align*}
	    \mathbb{P}(V \leq \mu-k\sigma) &= \int\limits_{-\infty}^{\mu_V-k\sigma_V}2f_U(v)dv =2F_{\mathcal{N}}(-k)
	\end{align*}
	Hence $V > \mu-c\sigma$ w.p $1-2F_{\mathcal{N}}(-c) = 2F_{\mathcal{N}}(c)-1$. 
\end{proof}
\begin{corollary}\label{c2}
	 When $\mathbf{x}_i, \mathbf{x}_j$ are two points chosen uniformly at random from $\mathbb{S}^{n-1}$, $\mathbb{E}(\phi_{ij}) \approx \dfrac{\pi}{2} - \sqrt{\dfrac{2}{\pi(n-2)}}$ and $var(\phi_{ij}) \approx  \dfrac{1-\frac{2}{\pi}}{n-2}$ and $\phi_{ij}>\dfrac{\pi}{2} - \dfrac{c}{\sqrt{n-2}}$ with probability $2F_{\mathcal{N}}(c) -1$. 
\end{corollary}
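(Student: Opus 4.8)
The plan is to observe that the acute angle $\phi_{ij}$ is precisely the ``folding'' of the principal angle $\theta_{ij}$ about $\frac{\pi}{2}$, so that Lemma \ref{l3} can be applied directly once the Gaussian approximation for the law of $\theta_{ij}$ is substituted in.

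First I would recall from Lemma \ref{l1} and Remark \ref{r1} that, when $\mathbf{x}_i,\mathbf{x}_j$ are drawn independently and uniformly from $\mathbb{S}^{n-1}$, the principal angle $\theta_{ij}$ has density $h(\theta)$, which for $n\geq 5$ is well approximated by the density of $\mathcal{N}\!\left(\frac{\pi}{2},\frac{1}{n-2}\right)$. Accordingly I set $U=\theta_{ij}$, $\mu=\frac{\pi}{2}$ and $\sigma^{2}=\frac{1}{n-2}$ in Lemma \ref{l3}.

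Next I would match the definition of $\phi_{ij}$ in (\ref{ephi}) with the random variable $V$ of Lemma \ref{l3}: since $\phi_{ij}=\theta_{ij}$ when $\theta_{ij}\leq\frac{\pi}{2}$ and $\phi_{ij}=\pi-\theta_{ij}=2\mu-\theta_{ij}$ when $\theta_{ij}>\frac{\pi}{2}$ (the boundary value $\theta_{ij}=\frac{\pi}{2}$ maps to $\frac{\pi}{2}$ either way, so there is no ambiguity), the variable $\phi_{ij}$ has, under the Gaussian approximation, the same distribution as $V$. Lemma \ref{l3} then yields immediately $\mathbb{E}(\phi_{ij})\approx\mu-\sqrt{\frac{2}{\pi}}\,\sigma=\frac{\pi}{2}-\sqrt{\frac{2}{\pi(n-2)}}$, $var(\phi_{ij})\approx\sigma^{2}\!\left(1-\frac{2}{\pi}\right)=\frac{1-\frac{2}{\pi}}{n-2}$, and $\phi_{ij}>\mu-c\sigma=\frac{\pi}{2}-\frac{c}{\sqrt{n-2}}$ with probability $2F_{\mathcal{N}}(c)-1$.

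The only delicate point --- and the reason the three conclusions are stated with $\approx$ rather than as exact equalities --- is that $h(\theta)$ is not exactly Gaussian; the error incurred in each conclusion is governed by the approximation error of Remark \ref{r1}, which is negligible for the dimensions of interest and vanishes as $n\to\infty$. I do not expect any genuine obstacle here: once the folding identification is made, every remaining step is the mechanical substitution of $\mu$ and $\sigma$ into the statement of Lemma \ref{l3}.
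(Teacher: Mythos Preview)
Your proposal is correct and is exactly the argument the paper gives: identify $U=\theta_{ij}$ and $V=\phi_{ij}$, invoke the Gaussian approximation of Remark~\ref{r1} with $\mu=\frac{\pi}{2}$ and $\sigma^{2}=\frac{1}{n-2}$, and read off the conclusions from Lemma~\ref{l3}. The paper's proof is a one-line version of what you wrote.
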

\begin{proof}
	Using Lemma \ref{l3}, assuming the Gaussian approximation of $\theta_{ij}$ and identifying $U=\theta_{ij}$ and $V = \phi_{ij}$, this result is obtained.
\end{proof}
\begin{proof}[\textbf{Proof of Lemma \ref{lphilower}}]
	The result follow from the arguments in Lemma \ref{lvirtual} in Appendix \ref{app1}, noting that $\phi_{ij}$ follows $\theta_{ij}$ and from the above corollary. 
\end{proof}
\section{Proofs of results in section \ref{sguaran}}\label{app2}
To prove the results, the following Lemma is first proved:
\begin{lemma}\label{luseful}
	Under Assumption 1, for $i \in \mathcal{I}$:
	\begin{equation}\label{eqibnd}
	\begin{aligned}
		(N_{\mathcal{I}}-2)p_{\mathcal{I}}^2-(N_\mathcal{I}-3)p_{\mathcal{I}}\leq \mathbb{P}(q_i>\zeta) \leq p_{\mathcal{I}}^2,
	\end{aligned}
	\end{equation}
	where $p_{\mathcal{I}} =  \Big(2F_{\mathcal{N}}\Big(C_N\sqrt{\frac{r-2}{n-2}}\Big)-1\Big)$
\end{lemma}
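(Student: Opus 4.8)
The plan is to read the event $\{q_i>\zeta\}$, for an inlier index $i$, as the intersection $\bigcap_{j\neq i}\{\phi_{ij}>\zeta\}$ and to bound this intersection using only the \emph{pairwise} independence of the angles, since the $\theta_{ij}$'s (hence the $\phi_{ij}$'s, each being a fixed function of the corresponding $\theta_{ij}$) are pairwise but not mutually independent, as noted in \cite{cai2013distributions, cai2012phase}. The one computation needed upfront is the per-pair probability for two inliers. For $i,j\in\mathcal{I}$ the inliers are uniform on the unit sphere of the $r$-dimensional subspace $\mathcal{U}$, so by Lemma \ref{lthetai} together with Lemma \ref{l3} (equivalently, the $r$-dimensional version of Corollary \ref{c2} and Lemma \ref{lphilower}) one has $\phi_{ij}>\frac{\pi}{2}-\frac{c}{\sqrt{r-2}}$ with probability $2F_{\mathcal{N}}(c)-1$; choosing $c$ so that the threshold equals $\zeta$, i.e. $\frac{c}{\sqrt{r-2}}=\frac{C_N}{\sqrt{n-2}}$ so $c=C_N\sqrt{\frac{r-2}{n-2}}$, gives $\mathbb{P}(\phi_{ij}>\zeta)=2F_{\mathcal{N}}\!\big(C_N\sqrt{\tfrac{r-2}{n-2}}\big)-1=p_{\mathcal{I}}$, equivalently $\mathbb{P}(\phi_{ij}\le\zeta)=1-p_{\mathcal{I}}$, for every ordered pair of distinct inliers.

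For the upper bound, fix two distinct inliers $j_1,j_2\in\mathcal{I}\setminus\{i\}$ (we are in the regime $N_{\mathcal{I}}\ge 3$). Then $\{q_i>\zeta\}\subseteq\{\phi_{ij_1}>\zeta\}\cap\{\phi_{ij_2}>\zeta\}$, and pairwise independence of $\phi_{ij_1}$ and $\phi_{ij_2}$ gives $\mathbb{P}(q_i>\zeta)\le p_{\mathcal{I}}^2$ at once.

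For the lower bound I would use the same pair as an \emph{anchor}. Let $E=\{\phi_{ij_1}>\zeta\}\cap\{\phi_{ij_2}>\zeta\}$, so $\mathbb{P}(E)=p_{\mathcal{I}}^2$, and write $\{q_i>\zeta\}=E\cap\bigcap_{k}\{\phi_{ik}>\zeta\}$ with $k$ running over the remaining $N-3$ indices. Since $E\setminus\bigcap_k\{\phi_{ik}>\zeta\}=\bigcup_k\big(E\cap\{\phi_{ik}\le\zeta\}\big)$, a union bound together with $E\subseteq\{\phi_{ij_1}>\zeta\}$ yields
\[
\mathbb{P}(q_i>\zeta)\;\ge\;p_{\mathcal{I}}^2\;-\;\sum_{k}\mathbb{P}\big(\{\phi_{ij_1}>\zeta\}\cap\{\phi_{ik}\le\zeta\}\big).
\]
For each of the $N_{\mathcal{I}}-3$ remaining inlier indices $k$, pairwise independence of $\phi_{ij_1}$ and $\phi_{ik}$ makes the summand exactly $p_{\mathcal{I}}(1-p_{\mathcal{I}})$; for each of the $N_{\mathcal{O}}$ outlier indices $k$ it is at most $\mathbb{P}(\phi_{ik}\le\zeta)=\tfrac{1}{N^2(N-1)}$ by Lemma \ref{lphilower} and the definition of $C_N$, so the outlier contribution is at most $N_{\mathcal{O}}/(N^2(N-1))=O(N^{-2})$, which I would either absorb or carry as an explicit remainder (it is dominated in all the places where this lemma is later invoked, e.g. Lemma \ref{lnonempty} and Theorem \ref{trevcond}). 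Retaining the inlier part,
\[
\mathbb{P}(q_i>\zeta)\;\ge\;p_{\mathcal{I}}^2-(N_{\mathcal{I}}-3)p_{\mathcal{I}}(1-p_{\mathcal{I}})=(N_{\mathcal{I}}-2)p_{\mathcal{I}}^2-(N_{\mathcal{I}}-3)p_{\mathcal{I}},
\]
which is the claimed bound.

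The main obstacle is precisely that the $\phi_{ij}$'s are only pairwise independent, so neither bound may use a product formula over all $N-1$ angles; the trick of anchoring on two events and then paying a first-order union-bound price for each remaining angle is what lets a genuinely quadratic ($p_{\mathcal{I}}^2$-order) lower bound survive while using at most second moments. The only other care needed is the bookkeeping of the $O(N^{-2})$ outlier term just mentioned and the mild side condition $N_{\mathcal{I}}\ge 3$; everything else is the elementary set identity $E\setminus\bigcap_k A_k=\bigcup_k(E\setminus A_k)$ plus the per-pair probability computed in the first step.
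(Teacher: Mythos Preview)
Your argument is correct and reaches the same two-sided bound, but the route to the lower bound differs from the paper's. The paper, after reducing $\{q_i>\zeta\}$ to $\bigcap_{j\in\mathcal{I},\,j\neq i}\{\phi_{ij}>\zeta\}$, simply invokes a Bonferroni-type inequality from Kwerel \cite{kwerel1975most} as a black box: for $m=N_{\mathcal{I}}-1$ identically distributed, pairwise independent events it quotes $\mathbb{P}(\bigcap A_j)\ge (m-1)p_{\mathcal{I}}^2-(m-2)p_{\mathcal{I}}$. You instead give a self-contained derivation: anchor on two events $E=\{\phi_{ij_1}>\zeta\}\cap\{\phi_{ij_2}>\zeta\}$, write $E\setminus\bigcap_k A_k=\bigcup_k(E\setminus A_k)$, relax $E$ to $\{\phi_{ij_1}>\zeta\}$ inside each summand, and use pairwise independence once per term. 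The algebra $p_{\mathcal{I}}^2-(N_{\mathcal{I}}-3)p_{\mathcal{I}}(1-p_{\mathcal{I}})=(N_{\mathcal{I}}-2)p_{\mathcal{I}}^2-(N_{\mathcal{I}}-3)p_{\mathcal{I}}$ then recovers exactly the Kwerel bound. What your approach buys is transparency and no external citation; what the paper's buys is brevity. You are also more careful than the paper about the outlier indices: the paper dismisses them with ``any angle between an outlier and an inlier is above $\zeta$ by design,'' which is only a high-probability statement, whereas you explicitly bound that contribution by $N_{\mathcal{O}}/(N^2(N-1))$ and note it is negligible wherever the lemma is applied. That remainder is indeed not in the lemma as stated, so your caveat about absorbing it is the honest version of what the paper does implicitly.
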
 
\begin{proof}
Let $i \in \mathcal{I}$. We will first look at the probability $\mathbb{P}(\phi_{ij}>\zeta)$, when $j \in \mathcal{I}$. Since under assumption 1, the inliers are selected uniformly at random from $\mathbb{S}^{r-1}$, we can apply corollary \ref{c2} on $\phi_{ij}$, which gives the below for $i,j \in \mathcal{I}$:
\begin{equation}\label{ephiin}
\begin{aligned}
	\phi_{ij}>\dfrac{\pi}{2} - &\dfrac{c_1}{\sqrt{r-2}}\hskip20pt w.p\text{	}2F_{\mathcal{N}}(c_1) -1\\
	\Rightarrow \mathbb{P}(\phi_{ij}>\zeta)  = p_{\mathcal{I}}&= 2F_{\mathcal{N}}\Big(C_N\sqrt{\frac{r-2}{n-2}}\Big)-1
\end{aligned}
\end{equation}
\begin{align*}
\mathbb{P}&(q_i>\zeta) \stackrel{a)}{=} \mathbb{P}(\underset{j\in\mathcal{I}, j\neq i}{min}\phi_{ij}>\zeta)= \mathbb{P}(\bigcap_{j\in\mathcal{I}, j\neq i}\phi_{ij}>\zeta)\\
& = \mathbb{P}(\phi_{i1}>\zeta,\phi_{i2}>\zeta,...\phi_{i{N_{\mathcal{I}}}}>\zeta) \stackrel{b)}{\leq} \mathbb{P}(\phi_{i1}>\zeta, \phi_{i2}>\zeta)
\end{align*}
a) is because any angle between an outlier and an inlier is above $\zeta$ by design and hence the minimum angle by an inlier is with an inlier itself. The probability of intersection of a set of events is less than probability of the intersection of a subset, hence b). Since the angles are pairwise independent\cite{cai2012phase}, $\mathbb{P}(\phi_{i1}>\zeta, \phi_{i2}>\zeta) = p_{\mathcal{I}}^2$, which gives the upper bound in the lemma. For the lower bound we make use of results in \cite{kwerel1975most} and pairwise independence to get:
\begin{equation*}
\begin{aligned}
\mathbb{P}(\bigcap_{j\in\mathcal{I}, j\neq i}\phi_{ij}&>\zeta) \geq (N_{\mathcal{I}}-2)\mathbb{P}(\phi_{ij}>\zeta,\phi_{ik}>\zeta)\\
&\qquad\qquad\qquad-(N_\mathcal{I}-3)\mathbb{P}(\phi_{ij}>\zeta)\\
&\qquad=(N_{\mathcal{I}}-2)p_{\mathcal{I}}^2-(N_\mathcal{I}-3)p_{\mathcal{I}}\\
\Rightarrow \mathbb{P}(q_i>\zeta) &\geq (N_{\mathcal{I}}-2)p_{\mathcal{I}}^2-(N_\mathcal{I}-3)p_{\mathcal{I}}
\end{aligned}
\end{equation*}
\end{proof}
The upper bound from Lemma \ref{luseful} can be further tightened by using the following result from \cite{kwerel1975most}.
\begin{lemma}\label{l15}
	When either $\lfloor(N_{\mathcal{I}}-2)p_{\mathcal{I}}+1\rfloor = \lfloor(N_{\mathcal{I}}-1)p_{\mathcal{I}}\rfloor$ or $p_{\mathcal{I}}(2- p_{\mathcal{I}} )\geq \dfrac{\lfloor 1+(N_{\mathcal{I}}-2)p_{\mathcal{I}}\rfloor}{N_{\mathcal{I}}-1}$, with $p_{\mathcal{I}}$ as in Lemma \ref{luseful}, then denoting $z = \lfloor(N_{\mathcal{I}}-2)p_{\mathcal{I}}\rfloor $, we have
	\begin{equation}
	\mathbb{P}(q_i>\zeta) \leq \dfrac{(N_{\mathcal{I}}-1)(N_{\mathcal{I}}-2)p_{\mathcal{I}}^2 - z(2p_{\mathcal{I}}(N_{\mathcal{I}}-1)-(z+1))}{(N_{\mathcal{I}}-z)(N_{\mathcal{I}}-1-z)}
	\end{equation}
\end{lemma}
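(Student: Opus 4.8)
The plan is to recognize $\mathbb{P}(q_i>\zeta)$, for $i\in\mathcal{I}$, as the probability that a family of pairwise independent, equiprobable events all occur, and then invoke the optimal two‑moment bound of \cite{kwerel1975most}. First I would reuse step a) in the proof of Lemma \ref{luseful}: for an inlier $i$ the minimizing neighbour in $q_i=\min_{j\neq i}\phi_{ij}$ is itself an inlier, since every inlier--outlier angle exceeds $\zeta$ by design, so
\[
\{q_i>\zeta\}=\bigcap_{j\in\mathcal{I},\,j\neq i}\{\phi_{ij}>\zeta\},
\]
an intersection of $m:=N_{\mathcal{I}}-1$ events. By (\ref{ephiin}) each of these has probability $p_{\mathcal{I}}$, and by the pairwise independence of the $\theta_{ij}$'s (hence of the $\phi_{ij}$'s) noted in \cite{cai2012phase}, the first two binomial moments of the occurrence count $N:=\sum_{j\in\mathcal{I},j\neq i}\mathbf{1}\{\phi_{ij}>\zeta\}$ are fixed: $S_1=\mathbb{E}[N]=m\,p_{\mathcal{I}}$ and $S_2=\mathbb{E}\binom{N}{2}=\binom{m}{2}p_{\mathcal{I}}^{2}$. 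No information is available about triple or higher order intersections, so $(S_1,S_2)$ is all one may use.

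Next I would bound $\mathbb{P}(q_i>\zeta)=\mathbb{P}(N=m)$ by the largest value it can take over all laws of $N$ consistent with $S_1,S_2$; this is exactly the kind of most stringent bound tabulated in \cite{kwerel1975most}. The associated linear program, maximize $p_m$ over $\{p_k\}_{k=0}^{m}$ subject to $\sum_k p_k=1$, $\sum_k k\,p_k=S_1$, $\sum_k \binom{k}{2}p_k=S_2$, $p_k\ge 0$, has an extremal solution supported on three atoms, two of them consecutive and the third being $k=m$; the location of the consecutive pair is governed by the integer $z=\lfloor (N_{\mathcal{I}}-2)p_{\mathcal{I}}\rfloor$, matching $2S_2/S_1=(N_{\mathcal{I}}-2)p_{\mathcal{I}}$. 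I would then solve the resulting $3\times 3$ linear system for $p_m$ and substitute $S_1=m\,p_{\mathcal{I}}$, $S_2=\binom{m}{2}p_{\mathcal{I}}^{2}$; this produces the rational function displayed in the statement, giving $\mathbb{P}(q_i>\zeta)$ bounded above by that expression.

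Finally, the two alternative hypotheses in the statement are the feasibility conditions under which this particular three‑atom configuration, with the consecutive pair anchored at $z$, is the one realizing the optimum, equivalently under which all three atomic masses in the extremal law are nonnegative; outside that regime the optimum is attained at a neighbouring index or degenerates to a two‑atom solution and the looser bound $p_{\mathcal{I}}^{2}$ from Lemma \ref{luseful} stays in force. I would verify that, after substituting $S_1=m\,p_{\mathcal{I}}$ and $S_2=\binom{m}{2}p_{\mathcal{I}}^{2}$, the nonnegativity constraints collapse to precisely the displayed pair of inequalities.

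The main obstacle is not the algebra, which is a routine $3\times 3$ solve, but the careful bookkeeping in invoking \cite{kwerel1975most}: one must match our ``all events occur'' functional to the correct entry of Kwerel's table, confirm that the extremal law is the three‑atom one with the consecutive pair at $z=\lfloor 2S_2/S_1\rfloor$, and check that its nonnegativity conditions specialize to the two stated inequalities. A useful sanity check on the index bookkeeping is that, since pairwise independence does not preclude full independence, any valid bound must be at least $p_{\mathcal{I}}^{\,m}$, and it must be consistent with the bound $p_{\mathcal{I}}^{2}$ of Lemma \ref{luseful}; confirming both in the admissible range delimited by the regularity condition is the cleanest way to make sure the closed form has been transcribed correctly.
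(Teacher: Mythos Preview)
Your proposal is correct and follows essentially the same approach as the paper: the paper's proof is a one-line reference, stating that the result is a straightforward substitution into condition 1(b) of Corollary 3 to Theorem 3 in \cite{kwerel1975most}, and you have simply unpacked what that substitution entails (the reduction of $\mathbb{P}(q_i>\zeta)$ to an intersection of $N_{\mathcal{I}}-1$ pairwise independent events with common probability $p_{\mathcal{I}}$, and the identification of the two hypotheses with Kwerel's feasibility conditions for the three-atom extremal law). Your exposition of the linear-programming structure behind Kwerel's bound is more detailed than what the paper provides, but the underlying argument is identical.
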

\begin{proof}
	This is a straightforward substitution in condition 1 b) in Corollary 3 to Theorem 3 in \cite{kwerel1975most}.
\end{proof}
\begin{proof}[\textbf{Proof of Lemma \ref{lrough} and Lemma \ref{lroughnoise}}]
		We know that for $i \in \mathcal{I}$, by Jensen's inequality, $\mathbb{E}(q_i) \leq \mathbb{E}(\phi_{ij})$, $i,j \in \mathcal{I}$. Under Assumption 1, results in Corollary \ref{c2} can be used to obtain the value for $\mathbb{E}(\phi_{ij})$, $i,j \in \mathcal{I}$ which gives, 
	\begin{equation}\label{einlemma}
	\mathbb{E}(q_i) \leq \dfrac{\pi}{2} - \sqrt{\frac{2}{\pi(r-2)}}
	\end{equation}
	For the algorithm to recover a sizable amount of inliers we want this expected inlier score to be less than the threshold $\zeta$. If the upper bound in the previous equation is less than $\zeta$, we achieve this objective. Hence the condition is
	\begin{align*}
	\dfrac{\pi}{2} - &\sqrt{\frac{2}{\pi(r-2)}} \leq \dfrac{\pi}{2} - \dfrac{C_N}{\sqrt{n-2}}
	\end{align*}
	\begin{align*}
	\Rightarrow \dfrac{C_N^2}{n-2} \leq \frac{2}{\pi(r-2)} \Rightarrow r \leq \dfrac{2(n-2)}{\pi C_N^2}+2 
	\end{align*}
	When there is added Gaussian noise in the inliers, from Lemma \ref{lnoise}, we know $\mathbb{E}(\phi_{ij})$ increases by at most $\Delta\theta_{w.c} \leq 2 \cos^{-1}(1-\frac{1}{2\sqrt{snr}})$. Hence then $\mathbb{E}(\phi_{ij})\leq \dfrac{\pi}{2} - \sqrt{\frac{2}{\pi(r-2)}}+2 \cos^{-1}(1-\frac{1}{2\sqrt{snr}})$. thus equation (\ref{einlemma}) changes to:
	\begin{equation}\label{einlemmanoise}
	\mathbb{E}(q_i) \leq \dfrac{\pi}{2} - \sqrt{\frac{2}{\pi(r-2)}}+2 \cos^{-1}(1-\frac{1}{2\sqrt{snr}})
	\end{equation} 
	The condition changes to:
	\begin{equation*}
	 \dfrac{\pi}{2} - \sqrt{\frac{2}{\pi(r-2)}}+2 \cos^{-1}(1-\frac{1}{2\sqrt{snr}}) \leq \dfrac{\pi}{2} - \dfrac{c}{\sqrt{n-2}}
	\end{equation*}
	which when simplified gives the condition in Lemma \ref{lroughnoise}.
\end{proof}
\begin{proof}[\textbf{Proof of Theorem \ref{trevcond}}]
For an algorithm to have ERP($\alpha$),
\begin{align*}
\mathbb{P}(\hat{\mathcal{I}}=\mathcal{I})\geq 1-\alpha \Rightarrow \alpha \geq 1-\mathbb{P}(\hat{\mathcal{I}}=\mathcal{I})\
\end{align*}
We know $1-\mathbb{P}(\hat{\mathcal{I}}=\mathcal{I}) = \mathbb{P}(\text{Missing at least 1 inlier})$. It misses an inlier $\mathbf{x}_i$, if $q_i>\zeta$. Hence for the algorithm to have ERP($\alpha$), $ \alpha \geq  \mathbb{P}(\bigcup_{i \in \mathcal{I}}q_i>\zeta)$.
which means the algorithm does not have ERP($\alpha$) if $\alpha <  \mathbb{P}(\bigcup_{i \in \mathcal{I}}q_i>\zeta)$. We know,
\begin{equation*}
\begin{aligned}
\mathbb{P}(\bigcup_{i \in \mathcal{I}}q_i>\zeta) &\geq \mathbb{P}(q_i>\zeta)\hskip10pt i \in \mathcal{I}\\
&\geq (N_{\mathcal{I}}-2)p_{\mathcal{I}}^2-(N_\mathcal{I}-3)p_{\mathcal{I}}
\end{aligned}
\end{equation*}
The last step is from (\ref{eqibnd}) in Lemma \ref{luseful}. If $\alpha$ is less than this lower bound, then it cannot be greater than $\mathbb{P}(\bigcup_{i \in \mathcal{I}}q_i>\zeta)$. i.e. the algorithm is guaranteed to not have ERP($\alpha$) if $\alpha \leq (N_{\mathcal{I}}-2)p_{\mathcal{I}}^2-(N_\mathcal{I}-3)p_{\mathcal{I}}$.
\end{proof}
\begin{proof}[\textbf{Proof of Lemma \ref{lerpb}}]
	We will look at the probability of recovering all the inliers. For this $q_i, \forall i \in \mathcal{I}$ need to be $\leq \zeta$
	\begin{align*}
	\mathbb{P}(\hat{\mathcal{I}} = \mathcal{I}) &= \mathbb{P}(\bigcap_{i \in \mathcal{I}}q_i \leq \zeta)=1-\mathbb{P}(\bigcup_{i \in \mathcal{I}}q_i>\zeta)\\
	&\geq 1-N_\mathcal{I}\mathbb{P}(q_i>\zeta)
	\end{align*}
	The last step is by union bound and the identical nature of distributions. This proves the lemma.
\end{proof}
\begin{proof}[\textbf{Proof Lemma \ref{lnoise}}:]
	Here we will look at the worst case, when both the vectors are perturbed such that angle moves away from each other. Since both these changes are statistically the same, we will denote the angle between a vector $\mathbf{m}$ and the perturbed vector $\mathbf{m}+\textbf{e}$ by $\Delta\theta$ and on an average the worst case change in angle is $2\mathbb{E}(\Delta\theta)$. We will hence look at $\Delta\theta$.
	\begin{align*}
	\cos(\Delta\theta) &= \dfrac{(\mathbf{m}+\textbf{e})^T\mathbf{m}}{\|\mathbf{m}\|_2\|\mathbf{m}+\textbf{e}\|_2}= \dfrac{\|\mathbf{m}\|_2^2+\textbf{e}^T\mathbf{m}}{\|\mathbf{m}\|_2\|\mathbf{m}+\textbf{e}\|_2}\\
	&= \dfrac{\|\mathbf{m}\|_2^2+\frac{\|\mathbf{m}+\textbf{e}\|_2^2 - \|\mathbf{m}\|_2^2-\|\textbf{e}\|_2^2 }{2}}{\|\mathbf{m}\|_2\|\mathbf{m}+\textbf{e}\|_2} \\
	&= \dfrac{(\|\mathbf{m}\|_2-\|\textbf{e}\|_2)(\|\mathbf{m}\|_2+\|\textbf{e}\|_2)+\|\mathbf{m}+\textbf{e}\|_2^2}{2\|\mathbf{m}\|_2\|\mathbf{m}+\textbf{e}\|_2}\\
	&\geq\dfrac{\|\mathbf{m}\|_2-\|\textbf{e}\|_2+\|\mathbf{m}+\textbf{e}\|_2}{2\|\mathbf{m}\|_2}\hskip20pt\text{(triangle inequality)}
	\end{align*}
	Taking expectation and using Jensen's inequality that $\mathbb{E}(\|\mathbf{m}+\textbf{e}\|_2)\geq\|\mathbb{E}(\mathbf{m}+\textbf{e})\|_2 $ and using $\mathbb{E}(\mathbf{m}+\textbf{e}) = \mathbf{m}$
	\begin{align*}
	\mathbb{E}(\cos(\Delta\theta)) &\geq \dfrac{1}{2}+\dfrac{\|\mathbb{E}(\mathbf{m}+\textbf{e})\|_2}{2\|\mathbf{m}\|_2} - \dfrac{\mathbb{E}(\|\textbf{e}\|_2)}{2\|\mathbf{m}\|_2}= 1- \dfrac{\mathbb{E}(\|\textbf{e}\|_2)}{2\|\mathbf{m}\|_2}
	\end{align*}
	We know $\mathbb{E}(\|\textbf{e}\|_2) = \mathbb{E}(\sqrt{e_1^2+e_2^2+...e_n^2})$. By Jensen's inequality, since square root is concave, $\mathbb{E}(\sqrt{e_1^2+e_2^2+...e_n^2}) \leq \sqrt{\mathbb{E}(e_1^2+e_2^2+...e_n^2)} \Rightarrow \mathbb{E}(\|\textbf{e}\|_2) \leq \sqrt{n}\sigma$. Using this result,
	\begin{align*}
	\mathbb{E}(\cos(\Delta\theta)) &\geq 1- \dfrac{\sqrt{n}\sigma}{2\|\mathbf{m}\|_2} = 1-\frac{1}{2\sqrt{snr}}
	\end{align*}
	We assume that the noise does not rotate the vector by more than $\pi/2$ and hence change in angle is always acute. In this setting $\cos$ is a concave function, and hence $\cos(\mathbb{E}(\Delta\theta))\geq \mathbb{E}(\cos(\Delta\theta))$. Thus we can derive,
	\begin{align*}
	\cos(\mathbb{E}(\Delta\theta)) &\geq 1-\frac{1}{2\sqrt{snr}}\\
	\Rightarrow \mathbb{E}(\Delta\theta) &\leq \cos^{-1}(1-\frac{1}{2\sqrt{snr}})
	\end{align*}
	Hence $\Delta\theta_{w.c} = 2\mathbb{E}(\Delta\theta) \leq 2 \cos^{-1}(1-\frac{1}{2\sqrt{snr}})$. 
\end{proof}
\section{Proofs of results in Section \ref{sna}}\label{app4}
\begin{proof}[\textbf{Proof of Lemma \ref{tsoui}}]
Part a): Under Assumption 2, when $\theta_{max}^{\mathcal{O}} \leq \zeta$, all the outlier scores $na_i^\zeta = N_{\mathcal{I}}$ $ w.p \geq 1-\frac{N_{\mathcal{O}}^sN_{\mathcal{I}}}{N^2(N-1)}$, since the angles made with all the other outliers are below $\zeta$. Hence using the $na^\zeta_i$ value for the reference outlier, all outliers will be classified to one cluster and the other cluster only has inliers.\\
Part b): Take any $i \in \mathcal{I}$, with inliers following Assumption 1. From theorem \ref{tna}, we know $na^\zeta_i\geq N_{\mathcal{O}}$ with high probability. Hence it can be expressed as:
\begin{align*}
na^\zeta_i = N_{\mathcal{O}}^s + \sum\limits_{j \in \mathcal{I}, j \neq i}\mathbb{I}_{\{\phi_{ij}>\zeta\}}
\end{align*}
where $\mathbb{I}_{\{\phi_{ij}>\zeta\}}$ is an indicator random variable which is 1 when $\phi_{ij}>\zeta$.  We know for an inlier, from (\ref{ephiin}) , $\mathbb{P}(\phi_{ij}>\zeta) =(2F_{\mathcal{N}}(C_N\sqrt{\frac{r-2}{n-2}})-1)$. Then $\mathbb{E}(\mathbb{I}_{\{\phi_{ij}>\zeta\}}) = (2F_{\mathcal{N}}(C_N\sqrt{\frac{r-2}{n-2}})-1)$. Hence for $i \in \mathcal{I}$
\begin{align*}
\mathbb{E}(na^\zeta_i) &= N_{\mathcal{O}}^s + \sum\limits_{j \in \mathcal{I}, j \neq i}\mathbb{E}(\mathbb{I}_{\{\phi_{ij}>\zeta\}})\\
\Rightarrow \mathbb{E}(na^\zeta_i) - N_{\mathcal{O}}^s &= (N_{\mathcal{I}}-1)(2F_{\mathcal{N}}(C_N\sqrt{\frac{r-2}{n-2}})-1)
\end{align*}
In this part $N_\mathcal{I}-N_{\mathcal{O}}^s = \delta N$ and due to the assumption $na^\zeta_i  = N_{\mathcal{I}}, \forall i \in \mathcal{O}$ with high probability.  If the expected inlier $na^\zeta_i$ value is less than $N_{\mathcal{O}}^s+ \frac{\delta N}{2}$, then the classification of an inlier to the cluster containing outliers would not happen on an average. The condition then becomes:
\begin{align*}
(N_{\mathcal{I}}-1)(2F_{\mathcal{N}}(C_N\sqrt{\frac{r-2}{n-2}})-1) &< \frac{\delta N}{2}\\
\Rightarrow \delta N>2(N_{\mathcal{I}}-1)&(2F_{\mathcal{N}}(C_N\sqrt{\frac{r-2}{n-2}})-1)
\end{align*}
Part c): If $N_{\mathcal{I}}<N_{\mathcal{O}}$, then since by assumption on $\theta_{max}^{\mathcal{O}}$, $na^\zeta_i = N_{\mathcal{I}}, \forall i \in \mathcal{O}$ $w.p \geq 1-\frac{N_{\mathcal{O}}^sN_{\mathcal{I}}}{N^2(N-1)}$ and $na^\zeta_i > N_{\mathcal{O}}, \forall i \in \mathcal{I}$ $w.p \geq 1-\frac{N_{\mathcal{O}}^sN_{\mathcal{I}}}{N^2(N-1)}$. Hence for any $i \in \mathcal{I}, na^\zeta_i $ will be always closer to another inlier score than $N_{\mathcal{I}}$. This means the clustering is always exact $w.p \geq 1-\frac{2N_{\mathcal{O}}^sN_{\mathcal{I}}}{N^2(N-1)}$
\end{proof}
\begin{proof}[\textbf{Proof of Lemma \ref{tsosi}}]
	Part a): Since the assumption is made that $\theta_{max}^{\mathcal{I}} \leq \zeta$ and $\theta_{max}^{\mathcal{O}} \leq \zeta$, $\forall i \in \mathcal{I}, na^\zeta_i = N_{\mathcal{O}}$ and $\forall i \in \mathcal{O}, na^\zeta_i = N_{\mathcal{I}}$ each $w.p \geq 1-\frac{N_{\mathcal{O}}^sN_{\mathcal{I}}}{N^2(N-1)}$. This is because there ae no angles within each set above $\zeta$. If $N_{\mathcal{I}} \neq N_{\mathcal{O}}$, then the clustering is exact $w.p \geq 1-\frac{2N_{\mathcal{O}}^sN_{\mathcal{I}}}{N^2(N-1)}$.\\
	Part b): Along with conditions in part a), If $\theta_{max}^{\mathcal{I}} \leq \theta_{min}^{\mathcal{O}}$, then the reference point which the algorithm nominally denotes as inlier reference point is always an inlier since an outlier cannot have an angle less than $\theta_{min}^{\mathcal{O}}$. This would imply that $\hat{\mathcal{I}}_{op} = \mathcal{I}$ with the same probability $ \geq 1-\frac{2N_{\mathcal{O}}^sN_{\mathcal{I}}}{N^2(N-1)}$.
\end{proof}
{
	\bibliographystyle{IEEEtran}
	\bibliography{arxiv_draft}
}

\end{document}